\documentclass[twocolumn]{autart}

\usepackage{amsmath,amssymb}
\usepackage{accents,dsfont}
\usepackage{enumitem}
\usepackage{graphicx,subcaption}
\usepackage{xcolor}
\usepackage{cite}

\DeclareMathOperator{\ind}{\mathds{1}}
\DeclareMathOperator{\supp}{\mathrm{supp}}

\newcommand{\bzero}{\boldsymbol{0}}
\newcommand{\bone}{\boldsymbol{1}}

\newcommand{\bh}{\boldsymbol{h}}
\newcommand{\bF}{\boldsymbol{F}}
\newcommand{\bH}{\boldsymbol{H}}
\newcommand{\bk}{\boldsymbol{k}}
\newcommand{\bN}{\boldsymbol{N}}
\newcommand{\bs}{\boldsymbol{s}}
\newcommand{\bu}{\boldsymbol{u}}
\newcommand{\bv}{\boldsymbol{v}}
\newcommand{\bV}{\boldsymbol{V}}
\newcommand{\balpha}{\boldsymbol{\alpha}}
\newcommand{\blambda}{\boldsymbol{\lambda}}
\newcommand{\bLambda}{\boldsymbol{\Lambda}}
\newcommand{\bnu}{\boldsymbol{\nu}}

\newcommand{\cD}{\mathcal{D}}
\newcommand{\cE}{\mathcal{E}}
\newcommand{\cF}{\mathcal{F}}
\newcommand{\cI}{\mathcal{I}}
\newcommand{\cL}{\mathcal{L}}
\newcommand{\cJ}{\mathcal{J}}
\newcommand{\cM}{\mathcal{M}}
\newcommand{\cN}{\mathcal{N}}
\newcommand{\cR}{\mathcal{R}}
\newcommand{\cS}{\mathcal{S}}

\newcommand{\cZ}{\mathcal{Z}}

\newcommand{\bcD}{\boldsymbol{\cD}}
\newcommand{\bcS}{\boldsymbol{\cS}}

\newcommand{\bbN}{\mathbb{N}}
\newcommand{\bbP}{\mathbb{P}}
\newcommand{\bbR}{\mathbb{R}}

\newcommand{\bbZ}{\mathbb{Z}}

\newcommand{\dd}{\mathrm{d}}
\newcommand{\hqed}{\hfill~\qed}
\newcommand{\T}{^\top\hspace{-1mm}}
\newcommand{\floor}[1]{\lfloor #1 \rfloor}
\newcommand{\ceil}[1]{\lceil #1 \rceil}
\newcommand{\ubar}[1]{\underaccent{\bar}{#1}}
\newcommand{\utilde}[1]{\underaccent{\tilde}{#1}}
\newcommand{\im}{i=1,\dots,m}
\newcommand{\sumim}{\sum_{i=1}^m}
\newcommand{\prodim}{\prod_{i=1}^m}
\newcommand{\ms}{\cD_1,\dots,\cD_m}
\newcommand{\Nlb}{\ubar{N}}
\newcommand{\Nub}{\bar{N}}

\newcommand{\bbPN}{\bbP^{N}}
\newcommand{\bbPbN}{\bbP^{|\bN|}}
\newcommand{\scen}[2]{\delta_{#1}^{(#2)}}
\newcommand{\sceni}[1]{\scen{i}{#1}}
\newcommand{\scenij}{\sceni{j}}

\newcommand{\zstar}{z_N^\star}
\newcommand{\sstar}{s_N^\star}
\newcommand{\zsN}{z_{\bN}^{\star}}
\newcommand{\sstari}[1][i]{s_{#1,\bN}^\star}
\newcommand{\bsstar}{\bs^{\star}_{\bN}}
\newcommand{\Ski}[1]{\cS_{#1,k_{#1}}}
\newcommand{\tSki}[1]{\widetilde{\cS}_{#1,k_{#1}}}
\newcommand{\bSk}{\bcS_{\bk}}
\newcommand{\tcD}{\widetilde{\cD}}

\newcommand{\knorm}{K}				
\newcommand{\ktot}{k_{\text{tot}}}	
\newcommand{\bkub}{\bar{\bk}}		
\newcommand{\bklb}{\ubar{\bk}}		

\newcommand{\tbub}{\bar{t}}
\newcommand{\tblb}{\ubar{t}}
\newcommand{\tktot}{\tbub_{\ktot}}

\newcommand{\epstubs}{\tilde{\epsilon}}
\newcommand{\epstlbs}{\utilde{\epsilon}}
\newcommand{\epstub}{\tilde{\varepsilon}}
\newcommand{\epstlb}{\utilde{\varepsilon}}
\newcommand{\epsub}{\bar{\varepsilon}}
\newcommand{\epslb}{\ubar{\varepsilon}}
\newcommand{\epstubap}{\hat{\epstub}}
\newcommand{\epsubap}{\hat{\epsub}}
\newcommand{\epslbap}{\hat{\epslb}}
\newcommand{\eps}{\varepsilon}

\begin{document}

\begin{frontmatter}
	\title{Scenario theory for multi-criteria data-driven\\ decision making}
	
    \thanks[footnoteinfo]{This work has been supported by the PRIN PNRR project P2022NB77E ``A data-driven cooperative framework for the management of distributed energy and water resources'' (CUP: D53D23016100001) funded by the NextGeneration EU program (Mission 4, Component 2, Investment 1.1), by the PRIN 2022 project ``The Scenario Approach for Control and Non-Convex Design'' (project number D53D23001450006), by the FAIR (Future Artificial Intelligence Research) project, funded by the NextGenerationEU program within the PNRR-PE-AI scheme (M4C2, Investment 1.3, Line on Artificial Intelligence), and by the Italian Ministry of Enterprises and Made in Italy in the framework of the project 4DDS (4D Drone Swarms) under grant no. F/310097/01-04/X56.}
	

	\author[polimi]{Simone Garatti}\ead{simone.garatti@polimi.it}, \author[polimi]{Lucrezia Manieri}\ead{lucrezia.manieri@polimi.it}, 
	\author[polimi]{Alessandro Falsone}\ead{alessandro.falsone@polimi.it},			
	\author[unibs]{Algo Care'}\ead{algo.care@unibs.it},
    \author[unibs]{Marco C. Campi}\ead{marco.campi@unibs.it},
	\author[polimi]{Maria Prandini}\ead{maria.prandini@polimi.it}

	\address[polimi]{Dipartimento di Elettronica Informazione e Bioingegneria, Politecnico di Milano, Milano, Italy} 
	\address[unibs]{Dipartimento di Ingegneria dell'Informazione, Universit\`a degli studi di Brescia, Brescia, Italy}

	\begin{keyword}
		non-convex optimization;
		control of constrained systems;
		multi-agent systems;
		duality-based methods;
		large-scale optimization problems and methods;
		complex systems management.
	\end{keyword}

	\begin{abstract} 
		The scenario approach provides a powerful data-driven framework for designing solutions under uncertainty with rigorous probabilistic robustness guarantees. Existing theory, however, primarily addresses assessing robustness with respect to a single appropriateness criterion for the solution based on a dataset, whereas many practical applications -- including multi-agent decision problems -- require the simultaneous consideration of multiple criteria and the assessment of their robustness based on multiple datasets, one per criterion. This paper develops a general scenario theory for multi-criteria data-driven decision making. A central innovation lies in the collective treatment of the risks associated with violations of individual criteria, which yields substantially more accurate robustness certificates than those derived from a naive application of standard results. In turn, this approach enables a sharper quantification of the robustness level with which all criteria are simultaneously satisfied. The proposed framework applies broadly to multi-criteria data-driven decision problems, providing a principled, scalable, and theoretically grounded methodology for design under uncertainty.
	\end{abstract}

\end{frontmatter}


\section{Introduction} \label{sec:introduction}

Design in the presence of uncertainty is paramount in science and engineering, where systems are routinely required to operate reliably despite incomplete knowledge and unpredictable disturbances. As the complexity of the problems addressed continues to grow, domain-specific knowledge is often insufficient, or even unavailable, to support traditional model-based approaches. In this context, data-driven methods are emerging as a dominant paradigm, leveraging historical realizations of uncertainty to inform and robustify decision-making directly from data, \cite{Dorfler2023-A,Dorfler2023-B,CampiCareGaratti2021}. By extracting structural information and variability patterns from observed samples, these methods enable the construction of solutions that remain reliable under unseen conditions. 

Among data-driven techniques, the scenario approach, \cite{CalCam:05,calafiore_ScenarioApproachRobust_2006,campi_ExactFeasibilityRandomized_2008, campi_ScenarioApproachSystems_2009,SchiFaMo:13,MarPraLyl:14,Alamoetal2015,Peyman_Sutter_Llygeros2015,GrammaticoEtal:14,ChaDaTeVeWa2016,campi_WaitandjudgeScenarioOptimization_2018,campi_GeneralScenarioTheory_2018,Assif_Chatterjee_Banavar_2020,ShangYou2020,garatti_RiskComplexityScenario_2022,GarCarCam2022,Falsone_etal_2023,Romao_etal_2023a,ROMAOetal_2023b,wangJungers2023objective,CamGa2023,garatti_NonconvexScenarioOptimization_2025}, has established itself as a powerful framework for certifying the robustness of designed solutions. Its key feature is that probabilistic guarantees are derived from the very same dataset used for design, thereby fully exploiting the informational content of the data while rigorously quantifying out-of-sample performance. This dual role of data — both to shape the solution and to assess its reliability — ensures an efficient and principled use of available information. 

So far, the scenario approach has primarily been developed to quantify solution robustness with respect to a single prescribed appropriateness criterion. In many practical applications, however, multiple criteria must be considered simultaneously, often with differing levels of importance. 
Such situations arise, for example, in multi-agent optimization and federated learning \cite{nedic2020distributed, aledhari2020federated}, where several agents collaborate to solve a common decision problem, each introducing its own requirement on the solution. Each of these requirements naturally defines a distinct appropriateness criterion. When multiple appropriateness criteria are present, a unified framework enabling simultaneous design and certification across all relevant robustness requirements becomes necessary. In particular, a theory is needed that can certify the robustness level with which the data-driven solution satisfies a specific criterion among the various ones (this is captured by the notion of \emph{individual risks} introduced later in this paper) as well as the robustness with which the solution simultaneously satisfies all criteria (\emph{joint risk}). The latter concept is specific to multi-criteria settings and is crucial in applications where the violation of even a single requirement may have disruptive consequences.

Most contributions within the scenario approach literature address problems involving a single appropriateness criterion and are therefore not directly applicable to this more general setting. An initial attempt to handle multiple individual risks was presented in \cite{SchiFaMo:13}; however, partly due to the absence of more recent theoretical developments at the time, the resulting guarantees were relatively loose and restricted to problems with specific structural properties. More recently, \cite{falsone_ScenariobasedApproachMultiagent_2020, margellos2025encyclopedia} made a significant step forward by extending the results of \cite{campi_GeneralScenarioTheory_2018, campi_WaitandjudgeScenarioOptimization_2018} to convex robust multi-agent optimization problems in which different agents impose distinct constraints, interpretable as different appropriateness criteria. In those works, bounds for both the individual and joint risk are derived, but the analysis strongly relies on the specific structure of the considered setup and remains conservative, as individual risks are treated independently in order to leverage results from the single-criterion framework of \cite{campi_GeneralScenarioTheory_2018, campi_WaitandjudgeScenarioOptimization_2018}. In \cite{manieri_ProbabilisticFeasibilityDatadriven_2023}, a further attempt was made by providing guarantees for a specific non-convex multi-agent setting using statistical learning arguments; however, the resulting bounds are also conservative.

The main contribution of this article is to fully extend the theory of \cite{garatti_RiskComplexityScenario_2022,garatti_NonconvexScenarioOptimization_2025} -- currently representing the most advanced and accurate results within the scenario approach -- to a multi-criteria setting. For a broad class of decision schemes, we show that accurate \emph{a posteriori} certifications of both the individual 
and joint risk can be obtained based on a suitably defined observable quantity -- namely, a statistic of the data -- that naturally extends the notion of \emph{complexity} introduced in \cite{garatti_RiskComplexityScenario_2022}. We also establish guarantees that hold independently of the observed data realization and, hence, come handy for \emph{a priori} considerations: bounds that explicitly depend on the number of appropriateness criteria adopted, say $m$, as well as (more conservative) bounds that hold uniformly with respect to $m$. 

Beyond covering a significantly broader class of multi-criteria data-driven decision schemes and leveraging the most recent advances in scenario theory, the key distinction between the results of the present paper and those of \cite{falsone_ScenariobasedApproachMultiagent_2020, manieri_ProbabilisticFeasibilityDatadriven_2023} lies in the collective treatment of individual risks, as opposed to their independent evaluation. This collective analysis yields risk certificates that rule out certain worst-case configurations of the individual risks that would otherwise be considered admissible under independent evaluations. As a result, the obtained collective characterization of the individual risks is significantly tighter than those that can be obtained using the results in \cite{falsone_ScenariobasedApproachMultiagent_2020,manieri_ProbabilisticFeasibilityDatadriven_2023}, with consequent dramatic improvements in joint risk certification. 

\subsection*{Structure of the paper}

The remainder of the paper is organized as follows. After introducing the relevant notation, Section~\ref{sec:problem_setting} formally presents the considered multi-criteria decision framework along with the key notions and assumptions. Section~\ref{sec:individual_risk} develops new theoretical results for the collective evaluation of individual risks, highlighting the differences with independent evaluations previously proposed in the literature. Section~\ref{sec:joint_risk} leverages the results of Section~\ref{sec:individual_risk} to address joint risk certification, obtaining for the first time guarantees that scale favorably with the number of appropriateness criteria. Final remarks are provided in Section~\ref{sec:conclusions}. For readability, the (rather lengthy) proofs of all theoretical results, together with some supporting derivations, are deferred to the Appendix.

\subsection*{Mathematical notation}

We denote with $\bbZ$ the set of integer numbers and with $\bbN\subset\bbZ$ its subset of positive integers. We indicate the set of non-negative integers as $\bbN_0=\bbN\cup\{0\}$.

We use bold symbols for vectors, e.g. $\bv=[v_1 \,\cdots\,v_m]\T\in\bbR^m$, and $|\bv| = \sumim |v_i|$ denotes their 1-norm. We call \textit{multi-index} a vector $\balpha= [\alpha_1\,\cdots\,\alpha_m]\T \in\bbZ^m$ with integers components. We denote with $\bzero$ and $\bone$, respectively, a vector of ones and zeros of appropriate dimension.

For vectors $\bu,\bv\in\bbR^m$, inequalities of the form $\bu\le\bv$ ($\bu\ge\bv$) must be intended component-wise, i.e $u_i\le v_i$ ($u_i\ge v_i$), for all $\im$. Products or ratios between vectors also has to be intended as component-wise, i.e., $\bu \bv = [u_1 v_1 \,\cdots\, u_m v_m]\T$ and $\frac{\bu}{\bv} = [\frac{u_1}{v_1} \, \cdots \, \frac{u_m}{v_m} ]\T$. Given two multi-indices $\ubar{\balpha},\bar{\balpha} \in\bbZ^m$ with $\ubar{\balpha} \leq \bar{\balpha}$, the compact notation $\balpha=\ubar{\balpha},\dots,\bar{\balpha}$ denotes that multi-index $\balpha$ ranges in the set $\{ \balpha \in \bbZ^m :~ \ubar{\alpha}_i \le \alpha_i \le \bar{\alpha}_i,~ \im \}$.

The $m$-dimensional object $\{ x_{\balpha} \}_{\balpha=\ubar{\balpha}}^{\bar{\balpha}}$ is the collection of elements $x_{\balpha} = x_{\alpha_1,\dots,\alpha_m}$, indexed by the multi-index $\balpha \in \bbZ^m$, when $\balpha=\ubar{\balpha},\dots,\bar{\balpha}$. In addition, we use the compact notation $ \sum_{\balpha=\ubar{\balpha}}^{\bar{\balpha}} x_{\balpha}$ to denote the nested summation $\sum_{\alpha_1=\ubar{\alpha}_1}^{\bar{\alpha}_1}\cdots\sum_{\alpha_m=\ubar{\alpha}_m}^{\bar{\alpha}_m} x_{\alpha_1,\dots,\alpha_m}$. Given multi-indices $\balpha,\bnu\in\bbZ^m$ and a vector $\bv\in\bbR^m$, we use the compact notation $\binom{\balpha}{\bnu}$ as a short-hand for the product of binomial coefficients $\prodim\binom{\alpha_i}{\nu_i}$, and $\bv^{\balpha}$ to denote the product of powers $\prodim v_i^{\alpha_i}$. Note, in particular, that $\bv^{\bone}$ will be useful as a shorthand for $\prodim v_i$.

With few exceptions, we use calligraphic symbols for collections of elements. We use curly braces $\{ \cdot \}$ for sets and round braces $(\cdot)$ for lists. A list $\cE=( e_1,\dots,e_m)$ is an ordered sequence of elements $e_i$, $\im$, from any given generic space $E$ where multiple occurrences of the same instance are allowed. We introduce a multiplicity function $\mu_{\cE}(e): E \to \bbN_0$ to count the number of occurrences of $e$ in $\cE$ (equal to 0 if $e$ does not appear in $\cE$). With a slight abuse of notation, we use the same operator $|\cdot|$ to denote the number of elements in a set or a list, counting repetitions for lists.

We extend set operators to lists as follows. We write $e\in\cE$ if $\mu_{\cE}(e)>0$, and $e \notin \cE$ otherwise. Given two lists $\cE = (e_1,\dots,e_m)$ and $\cE' = (e'_1,\dots,e'_n)$, we denote by $\cE \cup \cE'$ the list $(e_1,\dots,e_m,e'_1,\dots,e'_n)$, obtained by appending all elements in $\cE'$ to $\cE$, preserving their order and multiplicity (the multiplicity function associated with $\cE \cup \cE'$ is given by $\mu_{\cE \cup \cE'}(e) = \mu_{\cE}(e)+\mu_{\cE'}(e)$ for all $e \in E$). We denote by $\cE \setminus \cE'$ the list obtained by removing the first $k = \min\{\mu_{\cE}(e),\mu_{\cE'}(e)\}$ occurrences of $e$ from the list $\cE$, for each distinct element $e \in \cE'$ (the multiplicity function associated with $\cE \setminus \cE'$ is given by $\mu_{\cE \setminus \cE'}(e) = \max\{\mu_{\cE}(e)- \mu_{\cE'}(e),0\}$ for all $e \in E$).

We use $\ind_{V}(\bv)$ to denote the indicator function over the set $V$, that is equal to $1$ if $\bv \in V$ and $0$ otherwise. 

\section{Problem formulation and key definitions} \label{sec:problem_setting}

\subsection{Mathematical setup}

Throughout, let $z \in \mathcal{Z}$ denote the decision variable belonging to a (possibly infinite-dimensional, or even more general) decision space $\mathcal{Z}$. The symbol $\delta$ instead represents an uncertain parameter collecting all the uncertain elements of the environment in which the decision will be deployed. The realization of $\delta$ is modeled as an outcome from a probability space $(\Delta,\mathfrak{D},\bbP)$. Importantly, while we assume $(\Delta,\mathfrak{D},\bbP)$ to exist, no precise knowledge of it is required, reflecting a fundamental ignorance of the mechanism through which uncertainty is generated, as is typical of complex decision problems. 

We consider a decision framework where there are $m \in \bbN$ different criteria to judge the appropriateness of a decision $z \in \cZ$. Here, appropriateness captures the interaction of the decision with the environment and, therefore, a decision may or may not be appropriate with respect to a given criterion depending on the realization of $\delta$. For $\im$, we denote by $\cZ_i(\delta) \subseteq \cZ$ the set of decisions which are appropriate according to the $i$-th criterion under realization $\delta$. As is clear, $\delta$ may consist of multiple components, which may be independent or dependent under $\bbP$. Each criterion may depend only on some components of $\delta$. The components involved across criteria may be either disjoint or overlapping, without any restriction.

Adopting a data-driven perspective, we assume that for each criterion $i$, $\im$, one can collect $N_i \in \bbN$ observations of $\delta$ called scenarios. These are gathered in the list $\cD_i=( \sceni{1},\dots,\sceni{N_i})$, where each $\sceni{j}$ is an element of $\Delta_i = \Delta \times \{i\}$. Thus, each scenario is labeled to indicate the criterion for which it has been collected.\footnote{We do not indicate the label explicitly since it is already encoded in the subscript $i$ of the notation $\delta_i^{(j)}$.} The $\delta$-components of all scenarios $\sceni{j}$, $j=1,\ldots,N_i$, $\im$ -- i.e., within and across datasets -- are modeled as independent outcomes from the same probability space $(\Delta,\mathfrak{D},\bbP)$ (i.i.d. draws). Then, a data-driven decision scheme is any algorithm that uses the information contained in the datasets list $\bcD = (\ms)$, including labels, to construct a decision. Accordingly, such a scheme can be represented as a mapping
\begin{equation} \label{eq:decision_map}
	\cM :~ \bigcup_{\bN\in\bbN^m} \Delta_1^{N_1} \times \cdots \times \Delta_m^{N_m}\to\cZ,
\end{equation}
with $\bN = [N_1 \,\cdots\, N_m]\T$. When there is no ambiguity regarding the datasets used, the returned decision $\cM(\bcD)$ will be often denoted by $\zsN$. 

A typical application of this framework is multi-agent decision making, where $m$ agents cooperate to agree on a decision that is satisfactory to all of them, each agent being associated with its own appropriateness criterion. In such settings, agents often collect their own (private) datasets independently, and a common decision is reached while each agent relies on its local data to enforce its own requirements.\footnote{This can, for example, be achieved through consensus algorithms that avoid sharing sensitive information. See, e.g.,  \cite{NedicOzdaglar2009subgradient, ZhuMartinez2012, di2016next, nedic2017achieving, FALSONE_etal_2017, Margellos_etal_2018, qu2017harnessing, li2019decentralized, falsone2022aut, falsone2023aut_ALT}.} This explains why scenarios are equipped with labels: they are used differently depending on the criterion to which they refer. Besides multi-agent problems, the proposed framework encompasses a large variety of multi-criteria settings. In particular, it is worth noting that it can also accommodate the common and realistic situation in which only partial observations of uncertainty are collected for each criterion. Specifically, for each $\sceni{j}$, only the components of $\delta$ required to evaluate the $i$-th criterion can be observed. From an abstract viewpoint, this is equivalent to a decision scheme that is provided with the entire realization $\sceni{j}$, as in \eqref{eq:decision_map}, but the definition of $\cM$ incorporates the discarding of those parts of $\sceni{j}$ that are not relevant to the $i$-th criterion, an operation made possible by the presence of the label $i$ in $\sceni{j}$.

In this paper, we do not focus on a specific decision scheme $\mathcal{M}$. Instead, we establish results that apply to a broad class of schemes, allowing considerable flexibility in how the scenarios are used to construct a decision. The admissible schemes are characterized by the properties stated in the following Assumption~\ref{ass:property}, which extend to the multi-criteria setting the analogous conditions introduced in~\cite{garatti_RiskComplexityScenario_2022,garatti_NonconvexScenarioOptimization_2025} for the single-criterion case.

\begin{assum} \label{ass:property}
	For any number of scenarios $N_i\in\bbN$ and any $\cD_i = ( \sceni{1},\dots,\sceni{N_i} )$, $\im$, the map $\cM$ satisfies the following conditions.
	\begin{enumerate}[label=\roman*., ref=\roman*] \setlength\itemsep{1em}
		\item \emph{Permutation invariance}: \label{property-permutation invariance}
		For any list $\cD_{i}^\pi$ obtained from $\cD_i$ by permuting its elements, $\im$, we have $\cM (\cD^\pi_1,\dots,\cD_m^\pi) = \cM (\cD_1,\dots,\cD_m)$.
	\end{enumerate}
	For all $\im$, let $\widetilde\cD_i$ be a list of additional $n_i \in \bbN_0$ scenarios for criterion $i$ and define ${\cD}_i^+ =\cD_i\cup \widetilde\cD_i$. 
	\begin{enumerate}[label=\roman*., ref=\roman*] \setcounter{enumi}{1}
		\item\emph{Stability in case of confirmation}: \label{property-stability}
		If it holds that $\cM(\cD_1,\dots,\cD_m)\in\cZ_i(\delta)$ for all $\delta \in \widetilde{\cD}_i$, $\im$, then $\cM(\cD_1^+,\dots,\cD_m^+) = \cM({\cD}_1,\dots,{\cD}_m)$.
		\item\emph{Responsiveness to contradiction}: \label{property-responsiveness}
		If there exists $i \in \{1,\dots,m\}$ and $\delta \in \widetilde{\cD}_i$ such that $\cM(\cD_1,\dots,\cD_m) \not\in \cZ_i(\delta)$, then $\cM({\cD}_1^+,\dots,{\cD}_m^+) \neq \cM(\cD_1,\dots,\cD_m)$.
	\end{enumerate}
Conditions \ref{property-stability} and \ref{property-responsiveness} are called \emph{consistency} conditions.~\hqed
\end{assum}
Permutation invariance in \emph{\ref{property-permutation invariance}} requires that the decision does not depend on the ordering of scenarios within each dataset. The consistency properties \emph{\ref{property-stability}} and \emph{\ref{property-responsiveness}}  instead govern how the decision reacts to additional data. Stability under confirmation requires that the decision remain unchanged when it is already appropriate for the newly collected observations with respect to the criterion these scenarios were collected for. Responsiveness to contradiction, on the other hand, requires the decision to change whenever it is not appropriate according to at least one criterion for one or more newly added observations for that same criterion.

For the sake of concreteness, we present here an example of a decision scheme satisfying Assumption \ref{ass:property}, the so-called robust scenario optimization framework. In this case, $\cZ = \bbR^d$ and $\zsN$ is defined as the solution of the following optimization problem:
\begin{align}
	\min_{z \in \bar{\cZ} \subseteq \bbR^d} & \quad c(z) \nonumber \\
		\text{s.t.} & \quad z \in \bigcap_{i=1}^{m} \left[ \bigcap_{j=1}^{N_i}  \cZ_i(\sceni{j}) \right]. \label{eq:multiagent-opt}
\end{align}
In this problem, each criteria is associated with a constraint set that depend on the uncertain parameter $\delta$ and the objective is to minimize the cost $c$, while safeguarding against the worst for all criteria as assessed based on the available scenarios. This motivates imposing the constraint $z \in \bigcap_{i=1}^{m} \left[ \bigcap_{j=1}^{N_i}  \cZ_i(\sceni{j}) \right]$. Instead, $\bar{\cZ}$ can be used to enforce additional conditions on $z$, e.g., that some components can only take integer values. The stringent requirement that the solution be appropriate for all scenarios and criteria makes it straightforward to verify that Assumption \ref{ass:property} holds for this scheme. 

Problems of the form \eqref{eq:multiagent-opt} frequently arise in multi-agent optimization, where $\cZ_i(\sceni{j})$, $j=1,\ldots,N_i$, represent data-based constraints imposed by agent $i$. When $z$ is a common decision vector and $c(z) = \sum_{i=1}^m c_i(z)$, \eqref{eq:multiagent-opt} is called robust multi-agent decision-coupled optimization problem (cf., e.g.,~\cite{falsone2022aut}); if $z = [z_1 \, \cdots \, z_m] \in \bbR^{d_1} \times \cdots\times \bbR^{d_m} = \bbR^d$, $c(z) = \sum_{i=1}^m c_i(z_i)$, $\bar{\cZ} = \{z: \sum_{i=1}^m g_i(z_i) \le 0 \}$, and $\cZ_i(\delta)$ is constraining $z_i$ only, the problem is called constrained-coupled (cf., e.g.,~\cite{falsone2023aut_ALT}). Both instances fall within our framework.

We conclude by emphasizing that \eqref{eq:multiagent-opt} is only a particular instance of the broader class of decision schemes satisfying Assumption~\ref{ass:property}. We refer the reader  to~\cite{garatti_RiskComplexityScenario_2022,garatti_NonconvexScenarioOptimization_2025} for further discussion of the generality of this framework.

\subsection{The risk certification problem} 

Once the decision $\cM(\bcD)$ is computed, assessing its robustness is essential for reliable use. On one hand, we are interested in estimating the robustness level of the solution with respect to each individual criterion of appropriateness. To formalize this, we introduce the following definition.
\begin{defn}[individual risks]
Given a decision $z \in \cZ$, the individual risk with respect to the $i$-th criterion of appropriateness is defined as
\begin{equation} \label{eq:local-risks}
	V_i(z) = \bbP\{\delta\in\Delta :~ z \notin \cZ_i(\delta)\},
\end{equation}
i.e., it is the probability that $z$ will not be appropriate according to the \emph{specific} criterion $i$ for a realization $\delta$ extracted at random according to $\bbP$. We also denote by $\bV(z) = [V_1(z) \,\cdots\, V_m(z)]\T$ the vector collecting the individual risks for all criteria. \hqed
\end{defn}
In many problems, the various appropriateness criteria may carry different priorities, which motivates the use of individual risks. However, in situations where being inappropriate for \emph{any} criterion can have detrimental consequences, it is necessary to monitor the robustness of the decision with respect to all criteria simultaneously. This motivates the following notion of \emph{joint risk}.
\begin{defn}[joint risk]
Given a decision $z \in \cZ$, the joint risk is defined as
\begin{equation} \label{eq:global-risk}
	V(z) = \bbP\{\delta\in\Delta :~ z \notin \cZ_1(\delta) \lor \cdots \lor z \notin \cZ_m(\delta) \},
\end{equation}
i.e., it is the probability that $z$ will not be appropriate according to \emph{some} criterion $i$, $\im$, for a realization $\delta$ extracted at random according to $\bbP$. \hqed
\end{defn}

Given these two definitions, the objective of the paper can be summarized as evaluating the individual and joint risks $\bV(\zsN)$ and $V(\zsN)$ associated to the data-driven decision $\zsN = \cM(\bcD)$. In particular, we aim to provide a collective characterization of the individual risks that can capture any possible coupling among them as induced by the fact that they refer to the same decision $\zsN$, which is obtained from the entire dataset. The joint risk $V(\zsN)$ will be then deduced as a by-product of this characterization. 

In a data-driven setting, $\bbP$ is not known to the user, so $\bV(\zsN)$ and $V(\zsN)$ cannot be computed directly. Following the scenario approach, our goal is to show that these quantities can be accurately estimated from the \emph{same} data used to compute $\zsN$, thereby enabling the efficient use of all available information for both the design and the certification of the solution. Specifically, we aim to provide results of the form
$$
\bbPbN \{ \bV(\zstar) \in \text{region computed from } \bcD \} \ge 1-\beta,
$$
for a given (small) $\beta\in (0,1)$ confidence parameter, and similarly for $V(\zsN)$. 
Here, $\bbPbN$ refers to $\bcD$ and denotes the product probability $\bbP \times \cdots \times \bbP$ (repeated $|\bN|$ times), thus reflecting the fact that the dataset is formed by independent draws from $(\Delta,\mathfrak{D},\bbP)$. 
The interpretation of these results is that, with high confidence with respect to the variability of $\bcD$, the solution risk vector lie within a region computed from the observed data. This region can then be safely used for reliability assessments. Evidently, the main challenge is to identify regions that are informative and yet tractable. The following definition is central to this purpose.

\begin{defn}[support scenarios and complexity] \label{def:support-scenario}
	For a given $\bN = [N_1 \,\cdots\, N_m]\T$ and a realization of $\bcD = (\ms)$, an element $\scenij \in \cD_i$, $j \in \{1,\ldots,N_i\}$, $i \in \{1,\ldots,m\}$, is said to be a support scenario if its removal while maintaining all the others changes the decision, i.e.,
	$$
	\cM (\cD_1,\dots,\cD_i\setminus (\scenij),\dots,\cD_m ) \neq \cM(\ms).
	$$
	Let $\cS_i \subseteq \cD_i$ be the sub-list of support scenarios in $\cD_i$ and let $\sstari = |\cS_i| \in \bbN_0$ be the number of support scenarios in $\cD_i$, $\im$. The \emph{complexity} is the vector $\bsstar = [\sstari[1] \,\cdots\, \sstari[m]]\T$ containing the number of support scenarios for all criteria.	\hqed
\end{defn}

Definition~\ref{def:support-scenario} extends to the multi-criteria setting the notions of support scenarios and complexity introduced in~\cite{garatti_RiskComplexityScenario_2022} for a single appropriateness criterion. Plainly, $\bsstar$ is a statistic of the data $\bcD$. In~\cite{garatti_RiskComplexityScenario_2022}, as recalled in the next section, it is shown that, in the single-criterion case, the complexity is closely related to the risk of $\zsN = \cM(\bcD)$, making it a key observable for risk estimation. In this paper, we show that the extended notion of complexity in Definition~\ref{def:support-scenario} retains the same property in a multi-criteria setting, both for estimating individual risks and the joint risk.

Following~\cite{campi_WaitandjudgeScenarioOptimization_2018,garatti_RiskComplexityScenario_2022}, all results in this article are derived under the following assumption, referred to as \emph{non-degeneracy}.

\begin{assum}[non-degeneracy] \label{ass:non-degeneracy}	
	For every $\bN \in \bbN^m$ and for every $\bcD$, let $\bcS = (\cS_1,\dots,\cS_m)$ be the list of lists of support scenarios for all criteria. Then, $\cM(\bcS) = \zsN = \cM(\bcD)$, with probability one with respect to $\bbPbN$.
	\hqed
\end{assum}

Assumption~\ref{ass:non-degeneracy} requires that the support scenarios capture all the information needed to reconstruct the data-driven decision $\zsN$, except for zero-probability cases. For this reason, in the following, we will also refer to $\bsstar$ as the \emph{complexity of $\zsN$}. Non-degeneracy is relatively mild in some problems. For instance, when decisions are obtained via convex optimization, it often reduces to a simple condition ensuring that the probability does not exhibit concentrated mass (see~\cite{garatti_RiskComplexityScenario_2022,garatti_NonconvexScenarioOptimization_2025} for more detailed discussion). Nevertheless, this assumption restricts the applicability of the results. In~\cite{garatti_NonconvexScenarioOptimization_2025}, it was shown -- with significantly more complex derivations -- that some results from~\cite{garatti_RiskComplexityScenario_2022} remain valid even without assuming non-degeneracy. While we are confident that the techniques of~\cite{garatti_NonconvexScenarioOptimization_2025} can be extended to the present multi-criteria setting to relax the non-degeneracy assumption, this extension is left for future work.


\section{Individual risks assessment} \label{sec:individual_risk}

In this section, we devote our attention to providing a collective estimate of the individual risks $\bV(\zsN)$ associated to the scenario decision $\zsN = \cM(\bcD)$. We first show in Section \ref{sec:independent_individual_bounds} how the standard theory of the scenario approach can be used to derive individual estimates, which however do not capture the interrelationship that may exist among different risks. This will also provide some background and a term of comparison. Then, in Section \ref{sec:joint_individual_bounds} we present a new scenario theory to derive tighter estimates that truly hold collectively. This constitutes the main result of the paper.

\subsection{Independent assessment of individual risks} \label{sec:independent_individual_bounds}

Suppose first that $m = 1$, in which case there is only one appropriateness criterion and the theory of \cite{garatti_RiskComplexityScenario_2022} can be directly applied, as recalled in the next Theorem \ref{thm:one_criterion_bound}. Since in this case, every quantity is scalar, we use a non-bold notation.

\begin{thm} \label{thm:one_criterion_bound}
	Fix $m = 1$ and let $N \in \bbN$ be any integer. For any $k \in \{0,1,\ldots,N \}$, let $[\epstlbs(k),\epstubs(k)] \subseteq [0,1]$ be an interval whose extrema are parameterized in $k$. Under Assumptions~\ref{ass:property} and~\ref{ass:non-degeneracy}, $\zstar = \cM(\cD)$ satisfies
	\begin{equation} \label{eq:one_criterion_bound}
		\bbP^{N} \{ V(\zstar) \in [\epstlbs(\sstar),\epstubs(\sstar)] \} \ge \gamma^\star,
	\end{equation}
	where $\sstar \in \bbN_0$ is the complexity of $\zstar$ and $\gamma^\star$ is the optimal cost of the following problem
	\begin{subequations} \label{eq:dual-standard-scenario}
	\begin{align}
		\gamma^\star = \sup_{\{\lambda_{h}\}_{h=0}^{H}} \,
		&\sum_{h=0}^{H} \lambda_{h} \label{eq:dual-standard-scenario-cost} \\
		\text{s.t.} \,
		&\sum_{h=k}^{H} \lambda_{h} \frac{\binom{h}{k}}{\binom{N}{k}}(1-v)^{h-N} \le \ind_{[\epstlbs(k),\epstubs(k)]}(v) \nonumber \\
		& \; \forall v \in [0,1] : \; (1-v)^{N-k} > 0,  \nonumber \\
		& \; k = 0,\dots,N \label{eq:dual-standard-scenario-fcn} \\
		&\begin{array}{ll}
			\lambda_{h} \le 0, \; h \neq N \\
			\lambda_{h} \le 1, \; h = N 
		\end{array} \label{eq:dual-standard-scenario-fcn-2}
	\end{align}
	\end{subequations}
	where $H \ge N$.
	\hqed
\end{thm}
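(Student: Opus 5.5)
The plan is to recast the probability in \eqref{eq:one_criterion_bound} as a linear functional of a family of unknown measures, show that these measures satisfy a set of linear constraints (one per sample size $h$), and then obtain $\gamma^\star$ as a lower bound by weak Lagrangian duality. Problem \eqref{eq:dual-standard-scenario} is exactly the dual of the resulting infinite-dimensional linear program, so any feasible $\{\lambda_h\}$ certifies a lower bound.

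\textbf{Step 1 (representation).} For each $k\in\bbN_0$, consider the i.i.d. list $(\delta^{(1)},\dots,\delta^{(k)})$. Let $E_k$ be the event that all $k$ scenarios are support scenarios of $\cM(\delta^{(1)},\dots,\delta^{(k)})$. Define the finite measure on $[0,1]$
$$
m_k(A)=\bbP^{k}\{E_k \text{ occurs and } V(\cM(\delta^{(1)},\dots,\delta^{(k)}))\in A\}.
$$
For $h\ge k$ and $h$ scenarios, I will show that the event ``the support list is exactly the first $k$ scenarios'' coincides almost surely with ``the first $k$ scenarios form a list whose elements are all support and the remaining $h-k$ scenarios are appropriate for its decision''.
\begin{itemize}
\item Inclusion from left to right: by non-degeneracy (Assumption~\ref{ass:non-degeneracy}) the decision equals $\cM$ of its support list. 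Each remaining scenario is then appropriate, since otherwise responsiveness would change the decision.
\item Inclusion from right to left: this uses stability in case of confirmation, together with the definition of support scenario.
\end{itemize}
By independence, the probability of this event with $V\in A$ is $\int_A(1-v)^{h-k}\,\dd m_k(v)$. Permutation invariance then gives $\binom{h}{k}$ equiprobable choices of position for the support subset. For $h=N$ this yields
$$
\bbP^N\{\sstar=k,\ V(\zstar)\in A\}=\binom{N}{k}\int_A(1-v)^{N-k}\,\dd m_k(v).
$$

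\textbf{Step 2 (constraints).} Summing the Step~1 identity over $k$ for a fixed $h$ gives $c_h:=\sum_{k=0}^{h}\binom{h}{k}\int(1-v)^{h-k}\dd m_k=\bbP^h\{\text{some support list}\}$, so $c_h\le 1$ for every $h$. For $h=N$ we have equality, because $s^\star$ always takes some value in $\{0,\dots,N\}$. Define $c'_h$ by keeping only the terms with $k\le\min(h,N)$. Since the dropped terms are nonnegative, $c'_h\le c_h\le1$, and moreover $c'_N=1$.

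\textbf{Step 3 (weak duality).} Take any $\{\lambda_h\}_{h=0}^H$ feasible for \eqref{eq:dual-standard-scenario}. Because $\lambda_h\le0$ for $h\ne N$ and $c'_h\le 1$, we have $\lambda_h\le\lambda_h c'_h$; for $h=N$ this holds with equality since $c'_N=1$. Hence
$$
\sum_h\lambda_h\le\sum_{h}\lambda_h c'_h=\sum_{k=0}^N\binom{N}{k}\int(1-v)^{N-k}\Big[\sum_{h=k}^H\lambda_h\tfrac{\binom hk}{\binom Nk}(1-v)^{h-N}\Big]\dd m_k(v).
$$
Points with $(1-v)^{N-k}=0$ contribute nothing to the integral, since the bracket is bounded there (only $v=1$ can be affected, and the powers combine to $(1-v)^{h-k}\ge0$). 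On the remaining points, constraint \eqref{eq:dual-standard-scenario-fcn} bounds the bracket by $\ind_{[\epstlbs(k),\epstubs(k)]}(v)$. By Step~1 the right-hand side is therefore at most
$$
\sum_k\bbP^N\{\sstar=k,\ V(\zstar)\in[\epstlbs(k),\epstubs(k)]\},
$$
which is exactly the probability in \eqref{eq:one_criterion_bound}. Taking the supremum over feasible $\lambda$ gives $\gamma^\star$.

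The main obstacle is Step~1: proving rigorously the almost-sure equivalence of the two events, including measurability. This requires combining the consistency conditions with non-degeneracy, so that the support list is unique and is itself a ``fully supported'' list whose decision is confirmed by all the other scenarios.
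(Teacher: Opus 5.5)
Your argument is essentially the paper's proof of Theorem~\ref{thm:region_bound} specialized to $m=1$. Your measures $m_k$ are its generalized distribution functions $F_k$. Step~1 is its equivalence $E=\tilde{E}$ followed by the factor $(1-v)^{h-k}$ from independence. Step~3 is weak duality evaluated directly at the true family $\{m_k\}$, which skips the paper's explicit primal, truncation at $H$, Lagrangian and subsequent tightening of the dual. Relaxing the moment conditions to $c'_h\le 1$ for $h\neq N$ is a tidy choice. The sign constraints $\lambda_h\le 0$ then appear as multipliers of inequality constraints, so \eqref{eq:dual-standard-scenario} is exactly the dual of your relaxed problem, not a restriction of a dual with free multipliers.

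One step, however, is justified incorrectly. For $k<N$ and $v=1$ the bracket is in general unbounded, since the terms with $h<N$ carry $(1-v)^{h-N}$. After multiplying by $(1-v)^{N-k}$, the integrand $\sum_{h=k}^{H}\lambda_h\binom{h}{k}(1-v)^{h-k}$ equals $\lambda_k$ at $v=1$, because the $h=k$ term has exponent zero. So the point $v=1$ contributes $\lambda_k\,m_k(\{1\})$, not zero. This can be nonzero: if the decision returned for the empty list has risk $1$, then $m_0(\{1\})=1$, while the usual choices take $\lambda_0<0$ (e.g.\ $-\beta/(2N)$).

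The conclusion survives, and the repair is exactly where the paper uses the sign constraints. Since $k<N$, you have $\lambda_k\le 0$ by \eqref{eq:dual-standard-scenario-fcn-2}, so this contribution is nonpositive. The matching term on the right carries the factor $(1-v)^{N-k}$ and vanishes at $v=1$, so the inequality still holds.

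Step~1, which you leave as a sketch, also needs a bit more in each direction; write $\cS_k$ for the list of the first $k$ scenarios.
\begin{itemize}
\item Left to right: you must also show that every element of $\cS_k$ is a support scenario for $\cM(\cS_k)$. A non-support one would, by stability, remain non-support in the full list.
\item Right to left: stability and the definition of support do not suffice here. You also need non-degeneracy to exclude that $\supp(\cD)$ is a proper sub-list of $\cS_k$. The paper combines $\cM(\cD)=\cM(\supp(\cD))$ with stability to get $\cM(\cS_k\setminus(\delta))=\cM(\cS_k)$ for such a $\delta$, contradicting that $\delta$ is support for $\cS_k$.
\end{itemize}
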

Theorem~\ref{thm:one_criterion_bound} is essentially derived within the proof of~\cite[Theorem~2]{garatti_RiskComplexityScenario_2022} -- see~\cite[Section 6.1, equations (39)-(40)]{garatti_RiskComplexityScenario_2022}.\footnote{Problem \eqref{eq:dual-standard-scenario} is a, provably equivalent, reformulation of \cite[equation (40)]{garatti_RiskComplexityScenario_2022}.} 

Theorem \ref{thm:one_criterion_bound} can also be used conversely, to compute lower- and upper-bounds to the risk that hold with a user-decided confidence $1-\beta$, where $\beta \in (0,1)$. To this purpose, one starts from a tentative solution to \eqref{eq:dual-standard-scenario} that makes $\gamma^\star \ge 1 -\beta$ and infer $\epstlbs(k)$ and $\epstubs(k)$ from \eqref{eq:dual-standard-scenario-fcn}. Precisely, one selects $H \ge N$ and set the variables  $\{ \lambda_h \}_{h = 0}^H$ such that they satisfy \eqref{eq:dual-standard-scenario-fcn-2} and their sum is equal to $1-\beta$. Then, for each $k$,  $\epstlbs(k)$ and $\epstubs(k)$ are selected by ``encapsulating'' the left-hand side of \eqref{eq:dual-standard-scenario-fcn} below the indicator function $\ind_{[\epstlbs(k),\epstubs(k)]}(v)$ with $[\epstlbs(k),\epstubs(k)]$ having the lowest possible range.\footnote{$\epstlbs(k)$ and $\epstubs(k)$ can always be found because \eqref{eq:dual-standard-scenario-fcn-2} implies that the left-hand side of \eqref{eq:dual-standard-scenario-fcn} is no bigger than $1$ for all $v \in [0,1]$.} This ensures the feasibility of the chosen $\{ \lambda_h \}_{h = 0}^H$ also for \eqref{eq:dual-standard-scenario-fcn}, and therefore the optimal solution to \eqref{eq:dual-standard-scenario} with the obtained $[\epstlbs(k),\epstubs(k)]$ must result in $\gamma^\ast \ge 1-\beta$. An application of Theorem \ref{thm:one_criterion_bound} then gives
$$
\bbP^{N} \{ V(\zstar) \in [\epstlbs(\sstar),\epstubs(\sstar)] \} \ge 1-\beta,
$$ 
i.e., the risk of the scenario decision can be assessed with high confidence by evaluating the found $\epstlbs(k)$ and $\epstubs(k)$ at the complexity. \cite[Theorem~2]{garatti_RiskComplexityScenario_2022} is obtained as a meaningful application of the above reasoning, selecting $H = 4N$ and the following candidate solution for \eqref{eq:dual-standard-scenario}: 
\begin{align*}
	&\lambda_h = -{\beta}/{2N}		&& h = 0,\dots,N-1, \\
	&\lambda_h = 1 					&& h = N, \\
	&\lambda_h = -\beta/{6N}		&& h = N+1,\dots,4N.
\end{align*}
With such a choice, it is straightforward to verify that all required conditions are met; moreover, the left-hand side of \eqref{eq:dual-standard-scenario-fcn} is a well-behaved function (it is negative at $v=0$; it is first increasing reaching a positive value and then decreasing tending to $-\infty$ for $v \to 1$) so that finding $\epstlbs(k)$ and $\epstubs(k)$ amounts to compute its zeros in $[0,1]$.\footnote{This is true $\forall k < N$; for $k=N$, the function is only increasing in $[0,1]$ and hence we must set $\epstubs(N) = 1$.} Other choices of $\{ \lambda_h \}_{h = 0}^H$ are possible. Another one that is worth mentioning consists in taking $H = N$ and $\lambda_h = -\beta/N$, $h = 0,\ldots,N-1$, and $\lambda_N = 1$. In this case, the left-hand side of \eqref{eq:dual-standard-scenario-fcn} is positive for $v=0$ and it is a decreasing function tending to $-\infty$ for $v \to 1$. This means that $\epstlbs(k) = 0$, $\forall k$, while $\epstubs(k)$ is computed as the unique zero of the function. This choice is meaningful when one is interested in computing only an upper bound on the risk.

When $m > 1$, Theorem~\ref{thm:one_criterion_bound} can be still be used to obtain independent evaluations of the individual risks. In fact, similarly to~\cite{falsone_ScenariobasedApproachMultiagent_2020} and owing to independence, one can focus on a single appropriateness criterion, say the $i$-th, and work conditionally to the scenarios in $\cD_j$ with $j \in \{1,\dots,m\} \setminus \{ i \}$. For every ``frozen'' value of these $\cD_j$'s, and considering the variability of $\cD_i$ only, $\cM$ can be seen as a map from $\cD_i$ to $\zsN$ with $m=1$, which satisfies Assumption \ref{ass:property} as per the appropriateness criterion $\cZ_i(\delta)$ and whose complexity is $\sstari$. An application of Theorem \ref{thm:one_criterion_bound} gives $\bbP^{N_i} \{ V_i(\zsN) \in[\epstlb_i(\sstari),\epstub_i(\sstari)] \}\ge 1-\beta_i$, where $\epstlb_i(\cdot),\epstub_i(\cdot)$ are obtained as discussed before with $N_i$ in place of $N$ and $\beta_i$ of $\beta$, and $\bbP^{N_i}$ represents the distribution of $\cD_i$. By virtue of independence, it holds that $\bbP^{|\bN|} \{ V_i(\zsN) \in[\epstlb_i(\sstari),\epstub_i(\sstari)] \; | \; \cD_j, \; j \in \{1,\dots,m\} \setminus \{ i \} \}=\bbP^{N_i} \{ V_i(\zsN) \in[\epstlb_i(\sstari),\epstub_i(\sstari)] \} $, so that 
integrating with respect to the variability of $\cD_j$, $j \in \{1,\dots,m\} \setminus \{ i \}$, we obtain
\begin{equation} \label{eq:i-independent-bound-trivial}
	\bbPbN\{ V_i(\zsN) \in[\epstlb_i(\sstari),\epstub_i(\sstari)] \}\ge 1-\beta_i,
\end{equation}
for all $i \in \{ 1,\ldots,m\}$. 
Using the sub-additivity of $\bbPbN$, this eventually yields
\begin{equation} \label{eq:individual-independent-bound}
\bbPbN \Big\{ \bV(\zsN) \in \prod_{i=1}^m [\epstlb_i(\sstari),\epstub_i(\sstari)] \Big\} \ge 1 - \beta, 
\end{equation}
with $\beta = \sumim \beta_i$, which is a probabilistic bound for the vector of individual risks $\bV(\zsN)$.

\subsection{Collective assessment of individual risks} \label{sec:joint_individual_bounds}

The result in~\eqref{eq:individual-independent-bound} states that the vector of individual risks $\bV(\zsN)$ lies, with confidence $1-\beta$, within a box whose edges are given by the intervals $[\epstlb_i(\sstari), \epstub_i(\sstari)]$. This assessment, however, was derived by treating the individual risks separately, which introduces a degree of conservatism. In particular, the box does not rule out the possibility that all risks simultaneously attain high values, each close to the corresponding $\epstub_i(\sstari)$. However, if the risks were in fact independent, the probability that they would all be large at the same time would be negligible. By contrast, in settings where the risks are intertwined, because the decision is formed by accommodating partially overlapping notions of appropriateness, the information conveyed by the scenarios $\cD_j$, for $j \in \{1,\dots,m\} \setminus \{ i \}$, contributes to keeping the $i$-th risk low. As a result, such jointly high values are also avoided. This informal reasoning suggests that bounds sharper than~\eqref{eq:individual-independent-bound} may be attainable. Remarkably, this is indeed the case at the level of generality of the present setup, without exploiting any specific structural properties of the decision scheme $\cM$ beyond Assumption~\ref{ass:property}.

The result is formalized in the next Theorem~\ref{thm:region_bound} and Corollary~\ref{corol:region_bound}, which provide a genuinely \emph{collective} assessment of the individual risks and constitutes the first main contribution of this paper.
\begin{thm} \label{thm:region_bound}
	Let $\bN$ be any given multi-index in $\bbN^m$ and let $\bH \ge \bN$. For $\bk = \bzero,\ldots, \bN$, let $\cR(\bk) \subseteq [0,1]^m$ be a region parameterized in $\bk \in \bbN_0^m$. Under Assumptions~\ref{ass:property} and~\ref{ass:non-degeneracy}, $\zsN = \cM(\bcD)$ satisfies
	\begin{equation} \label{eq:region_bound}
		\bbPbN\{ \bV(\zsN) \in \cR(\bsstar) \} \ge \gamma^\star,
	\end{equation}
	where $\bsstar \in \bbN_0^m$ is the complexity of $\zsN$ and $\gamma^\star$ is the optimal value of the following problem
	\begin{subequations} \label{eq:dual-lambda}
	\begin{align}
		\gamma^\star = \sup_{\{\lambda_{\bh}\}_{\bh=\bzero}^{\bH}} \,
		&\sum_{\bh=\bzero}^{\bH} \lambda_{\bh} \label{eq:dual-lambda-cost} \\
		\text{s.t.} \,
		&\sum_{\bh=\bk}^{\bH} \lambda_{\bh} \frac{\binom{\bh}{\bk}}{\binom{\bN}{\bk}}(\bone-\bv)^{\bh-\bN} \le \ind_{\cR(\bk)}(\bv), \nonumber \\
		& \; \forall \bv \in[0,1]^m : \; (\bone-\bv)^{\bN-\bk} > 0, \nonumber \\
		& \; \bk = \bzero,\dots,\bN \label{eq:dual-lambda-constraint} \\
		&\begin{array}{ll}
			\lambda_{\bh} \le 0, & \bh \neq \bN \\
		    \lambda_{\bh} \le 1, & \bh = \bN
		 \end{array} \label{eq:dual-lambda-lcond}
	\end{align}
	\end{subequations}
	(note that $\bk$ and $\bh$ are (running) multi-indices; for every $\bh$, $\lambda_{\bh}$ is a scalar variable).
	\hqed
\end{thm}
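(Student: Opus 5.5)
The plan is to extend the single-criterion argument of \cite{garatti_RiskComplexityScenario_2022} (Section~6.1 there) to multi-indices. The proof has two parts. First, we derive exact identities that tie the joint distribution of $(\bsstar, \bV(\zsN))$ for different sample sizes $\bh \ge \bN$. Second, we dualize: any feasible $\{\lambda_{\bh}\}$ in \eqref{eq:dual-lambda} yields a lower bound on $\bbPbN\{\bV(\zsN)\in\cR(\bsstar)\}$ equal to $\sum_{\bh}\lambda_{\bh}$.

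\textbf{Step 1 (decomposition by support sub-lists).} For each $\bk=\bzero,\dots,\bN$, define a measure $\mathfrak{m}_{\bk}$ on $[0,1]^m$ as follows. Draw $k_i$ scenarios for each criterion $i$. Compute $z=\cM$ of these $\bk$ scenarios. Then $\mathfrak{m}_{\bk}(A)$ is the probability that $\bV(z)\in A$ and that all $\bk$ scenarios are support scenarios of this reduced problem.

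By permutation invariance and non-degeneracy (Assumption~\ref{ass:non-degeneracy}), the event $\{\bsstar=\bk\}$ splits, up to null sets, into $\binom{\bN}{\bk}$ disjoint events. Each event is indexed by the choice of which $k_i$ elements of each $\cD_i$ form $\cS_i$. On such an event, $\zsN$ equals the decision computed from the support lists.

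Consistency (stability and responsiveness, Assumption~\ref{ass:property}) shows that, given the support lists, the remaining $N_i-k_i$ scenarios of $\cD_i$ must all be appropriate for $z$ under criterion $i$. Conversely, if they are all appropriate, no further support scenarios appear. Using independence within and across datasets, this gives the key identity
\[
\bbPbN\{\bsstar=\bk,\ \bV(\zsN)\in A\}=\binom{\bN}{\bk}\int_A(\bone-\bv)^{\bN-\bk}\,\mathfrak{m}_{\bk}(\dd\bv).
\]
The factor $(\bone-\bv)^{\bN-\bk}=\prodim(1-v_i)^{N_i-k_i}$ arises because each of the $N_i-k_i$ non-support scenarios of criterion $i$ is appropriate with probability $1-v_i$, independently. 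The same identity holds for every $\bh\ge\bk$ in place of $\bN$, with the same $\mathfrak{m}_{\bk}$.

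Summing over $\bk\le\bh$ and using that the total probability equals $1$ gives, for every $\bh$,
\[
\sum_{\bk=\bzero}^{\bh}\binom{\bh}{\bk}\int(\bone-\bv)^{\bh-\bk}\,\mathfrak{m}_{\bk}(\dd\bv)=1.
\]

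\textbf{Step 2 (duality bound).} Take any feasible $\{\lambda_{\bh}\}$. Write $\sum_{\bh}\lambda_{\bh}=\sum_{\bh}\lambda_{\bh}\cdot 1$ and substitute the normalization identity. Exchanging sums turns this into $\sum_{\bk}\int g_{\bk}(\bv)\,\mathfrak{m}_{\bk}(\dd\bv)$, where
\[
g_{\bk}(\bv)=\sum_{\bh\ge\bk}\lambda_{\bh}\binom{\bh}{\bk}(\bone-\bv)^{\bh-\bk}.
\]
Factor $g_{\bk}$ as $\binom{\bN}{\bk}(\bone-\bv)^{\bN-\bk}$ times the left-hand side of \eqref{eq:dual-lambda-constraint}. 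Wherever $(\bone-\bv)^{\bN-\bk}>0$, the constraint bounds this by $\binom{\bN}{\bk}(\bone-\bv)^{\bN-\bk}\ind_{\cR(\bk)}(\bv)$.

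Where $(\bone-\bv)^{\bN-\bk}=0$, some $v_i=1$ with $N_i>k_i$. Then every term with $\bh\ne\bN$ carries $\lambda_{\bh}\le0$ times a nonnegative factor, and the $\bh=\bN$ term vanishes, so $g_{\bk}\le0$ there. Integrating against $\mathfrak{m}_{\bk}$ and applying the Step~1 identity with $\bh=\bN$ yields
\[
\sum_{\bh}\lambda_{\bh}\le\sum_{\bk}\bbPbN\{\bsstar=\bk,\ \bV(\zsN)\in\cR(\bk)\}=\bbPbN\{\bV(\zsN)\in\cR(\bsstar)\}.
\]
Taking the supremum over feasible $\{\lambda_{\bh}\}$ proves \eqref{eq:region_bound}.

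\textbf{Main obstacle.} The delicate step is Step~1. It requires showing rigorously, under the multi-criteria consistency conditions, that the event decomposition is exact. In particular, the probability that a given sub-list is the support list must factorize as $\mathfrak{m}_{\bk}$ times $(\bone-\bv)^{\bh-\bk}$, with labels handled so that only criterion-$i$ scenarios test $\cZ_i$.

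This needs three ingredients. The first is the i.i.d.\ structure across labeled datasets. The second is a conditional-independence argument, namely conditioning on the support sub-lists. The third is the fact that adding confirming scenarios under stability neither creates new support scenarios nor destroys existing ones; responsiveness gives the converse. Measurability of $\bV(\zsN)$ and the interchange of the possibly infinite sums (when $\bH$ is large) are routine, since the ranges involved are finite.
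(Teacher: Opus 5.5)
Your proposal is correct and is essentially the paper's proof. It uses the same splitting of $\{\bsstar=\bk\}$ into $\binom{\bN}{\bk}$ equiprobable events, the same measure ($\mathfrak{m}_{\bk}$ is the paper's $F_{\bk}$) and the same moment identity, then weak duality applied directly to the true measures instead of through the paper's primal problem, its truncation at $\bH$ and the Lagrangian (note that keeping every element of $\bSk$ a support scenario of the enlarged dataset needs non-degeneracy, not stability alone). The one case you skip is that exchanging the sums also produces terms with $\bk\le\bH$, $\bk\not\le\bN$, where the dual constraint is not imposed and your factorization by $\binom{\bN}{\bk}=0$ breaks down; there every $\bh\ge\bk$ differs from $\bN$, so $\lambda_{\bh}\le 0$ gives $g_{\bk}\le 0$, exactly as in the paper.
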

Theorem~\ref{thm:region_bound} states that, with probability at least $\gamma^\star$, the vector $\bV(\zsN)$ of individual risks belongs to the region $\cR(\bsstar) \subseteq [0,1]^m$, depending on the complexity for all criteria $\bsstar$. This result is a clear extension of Theorem~\ref{thm:one_criterion_bound}: for $m = 1$ and limiting to $\cR(\bk)$ to the form of an interval $[\epstlbs(k),\epstubs(k)]$,  Theorem~\ref{thm:region_bound} becomes identical to Theorem~\ref{thm:one_criterion_bound}.

As we did for Theorem~\ref{thm:one_criterion_bound}, we now discuss how Theorem~\ref{thm:region_bound} can be used conversely to provide region bounds for the risk vector, which hold with user-chosen confidence $1-\beta$. This is achieved by guessing a (suboptimal) $\{ \lambda_{\bh} \}_{\bh = \bzero}^{\bH}$ that gives $\gamma^\ast \ge 1-\beta$ and then determining $\cR(\bk)$ from \eqref{eq:dual-lambda-constraint}, without solving~\eqref{eq:dual-lambda} up to optimality, which may be computationally demanding. Precisely, after selecting a confidence parameter $\beta \in (0,1)$ and a multi-index $\bH \ge \bN$, one sets the variables $\{ \lambda_{\bh} \}_{\bh = \bzero}^{\bH}$ such that they satisfy~\eqref{eq:dual-lambda-lcond} and their sum is equal to $1-\beta$, and compute regions $\cR(\bk)$ as the $0$-superlevel set of the left-hand side of~\eqref{eq:dual-lambda-constraint}, $\bk = \bzero,\dots,\bN$. This ensures that the chosen $\{ \lambda_{\bh} \}_{\bh = \bzero}^{\bH}$ is feasible also for~\eqref{eq:dual-lambda-constraint} and, and hence it holds that $\gamma^\star \ge 1-\beta$ for the so obtained $\cR(\bk)$. An application of Theorem~\ref{thm:region_bound} then gives the following result.
\begin{cor} \label{corol:region_bound}
	Let $\bN$ be any given multi-index in $\bbN^m$, $\bH \ge \bN$, and $\beta \in (0,1)$. Given $\{ \lambda_{\bh} \}_{\bh = \bzero}^{\bH}$ such that $\lambda_{\bh} \le 0$ for $\bh \neq \bN$, $\lambda_{\bN} \le 1$, and $\sum_{\bh =\bzero}^{\bH} \lambda_{\bh} = 1-\beta$, let 
	$$
	\cR(\bk) = \left\{ v \in [0,1]^m : \; \sum_{\bh=\bk}^{\bH} \lambda_{\bh} \frac{\binom{\bh}{\bk}}{\binom{\bN}{\bk}}(\bone-\bv)^{\bh-\bN} \ge 0 \right\}
	$$
	for $\bk = \bzero,\ldots, \bN.$ Under Assumptions~\ref{ass:property} and~\ref{ass:non-degeneracy}, $\zsN = \cM(\bcD)$ satisfies
	$$
		\bbPbN\{ \bV(\zsN) \in \cR(\bsstar) \} \ge 1-\beta,
	$$
	where $\bsstar \in \bbN_0^m$ is the complexity of $\zsN$. \hqed
\end{cor}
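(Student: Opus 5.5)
The statement to prove is Corollary~\ref{corol:region_bound}. It follows from Theorem~\ref{thm:region_bound} in three steps.

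\textbf{Step 1: build a feasible point.} Fix the given $\{\lambda_{\bh}\}_{\bh=\bzero}^{\bH}$, which satisfies $\lambda_{\bh}\le 0$ for $\bh\neq\bN$, $\lambda_{\bN}\le 1$ and $\sum_{\bh}\lambda_{\bh}=1-\beta$. Define $\cR(\bk)$ as in the statement. I will show that this collection $\{\lambda_{\bh}\}$ is feasible for problem \eqref{eq:dual-lambda} with these regions. Constraint \eqref{eq:dual-lambda-lcond} holds by assumption. It remains to check \eqref{eq:dual-lambda-constraint} for every $\bk=\bzero,\dots,\bN$ and every $\bv\in[0,1]^m$ with $(\bone-\bv)^{\bN-\bk}>0$. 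Write $f_{\bk}(\bv)$ for the left-hand side of \eqref{eq:dual-lambda-constraint}. There are two cases.
\begin{itemize}
\item If $\bv\notin\cR(\bk)$, then $f_{\bk}(\bv)<0=\ind_{\cR(\bk)}(\bv)$ by the very definition of $\cR(\bk)$.
\item If $\bv\in\cR(\bk)$, the indicator equals $1$, so I need $f_{\bk}(\bv)\le 1$.
\end{itemize}

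\textbf{Step 2: the bound $f_{\bk}(\bv)\le 1$.} This is the only nontrivial step. All terms with $\bh\neq\bN$ have $\lambda_{\bh}\le 0$ and multiply nonnegative factors, namely binomials and powers of $(\bone-\bv)$. They therefore contribute nonpositively, so
\[
f_{\bk}(\bv)\le \lambda_{\bN}\,\frac{\binom{\bN}{\bk}}{\binom{\bN}{\bk}}\,(\bone-\bv)^{\bzero}=\lambda_{\bN}\le 1 .
\]
The term with $\bh=\bN$ is present because $\bN\ge\bk$.

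One subtlety needs care: powers $(1-v_i)^{h_i-N_i}$ with $h_i<N_i$ and $v_i=1$. For those $\bh\ge\bk$, the exponent satisfies $h_i-N_i\ge k_i-N_i$. If $v_i=1$, the restriction $(\bone-\bv)^{\bN-\bk}>0$ forces $k_i=N_i$, hence $h_i=N_i$, so no negative power of zero appears. Each term is thus well defined and nonnegative up to the sign of $\lambda_{\bh}$. The convention $0^0=1$ handles the remaining exponents.

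\textbf{Step 3: conclude.} Since the collection is feasible, the optimal value of \eqref{eq:dual-lambda} satisfies $\gamma^\star\ge\sum_{\bh}\lambda_{\bh}=1-\beta$. Theorem~\ref{thm:region_bound}, which holds under Assumptions~\ref{ass:property} and~\ref{ass:non-degeneracy}, then gives
\[
\bbPbN\{\bV(\zsN)\in\cR(\bsstar)\}\ge\gamma^\star\ge 1-\beta .
\]

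The proof has no real obstacle. The only point needing attention is the well-definedness and sign check on the boundary $v_i=1$ in Step 2, which the constraint $(\bone-\bv)^{\bN-\bk}>0$ settles.
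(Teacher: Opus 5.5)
Your proof is correct and follows the paper's own argument: the given $\{\lambda_{\bh}\}$ satisfies \eqref{eq:dual-lambda-lcond}, and defining the regions as $0$-superlevel sets of the left-hand side makes it feasible for \eqref{eq:dual-lambda-constraint}. Your bound of that left-hand side by $\lambda_{\bN}\le 1$ is exactly the observation the paper records in its footnote for the single-criterion case, so $\gamma^\star\ge 1-\beta$ and Theorem~\ref{thm:region_bound} applies. One small slip is in your boundary remark: $v_i=1$ forces $k_i=N_i$ and hence $h_i\ge N_i$, not $h_i=N_i$, which still rules out negative powers of zero as you intended.
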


Note that, in practice, the region bound for a decision $\zsN$ can be obtained simply evaluating $\cR(\bk)$ at $\bk = \bsstar$ only.

\begin{figure*}
	\centering
	\begin{subfigure}{0.325\linewidth}
		\includegraphics[width=\linewidth]{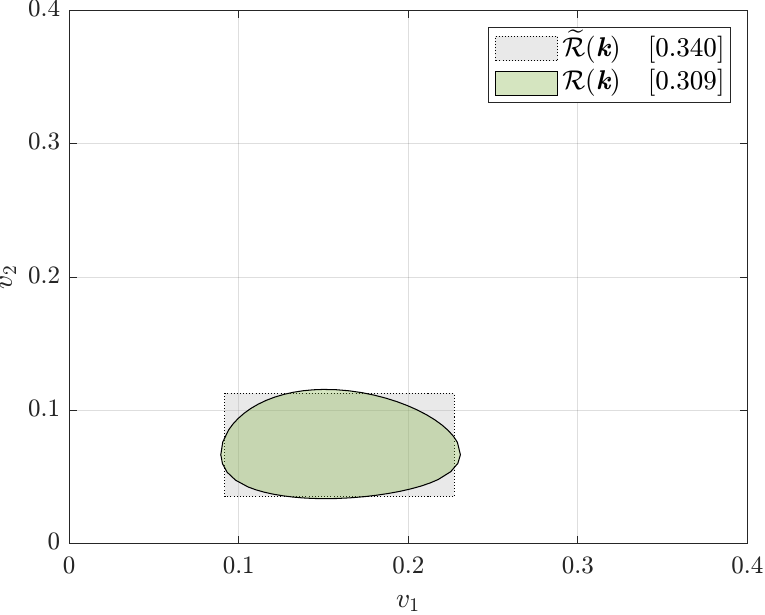}
		\caption{Uniform allocation, $\bH = 2\bN$}
		\label{fig:ex-region-lambda-uniform}
	\end{subfigure}
	\hfill
	\begin{subfigure}{0.325\linewidth}
	\includegraphics[width=\linewidth]{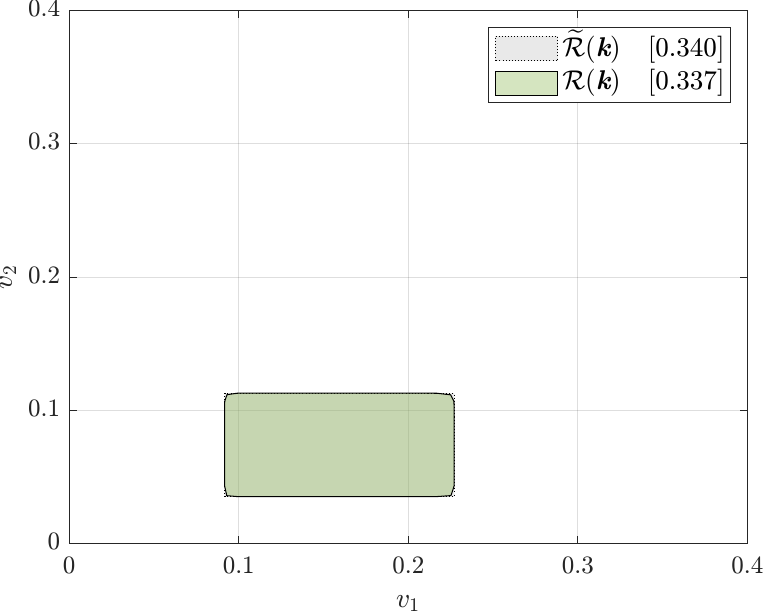}
		\caption{Axial allocation, $\bH = 2\bN$}
		\label{fig:ex-region-lambda-axial}
	\end{subfigure}
	\hfill
	\begin{subfigure}{0.325\linewidth}
		\includegraphics[width=\linewidth]{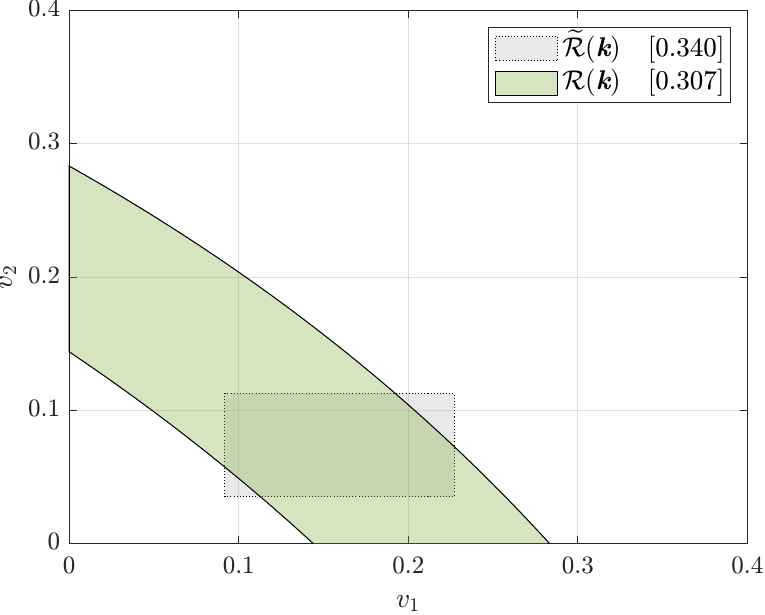}
		\caption{Diagonal allocation, $\bH = 2\bN$}
		\label{fig:ex-region-lambda-diag}
	\end{subfigure}
	\\[1em]
	\begin{subfigure}{0.325\linewidth}
		\includegraphics[width=\linewidth]{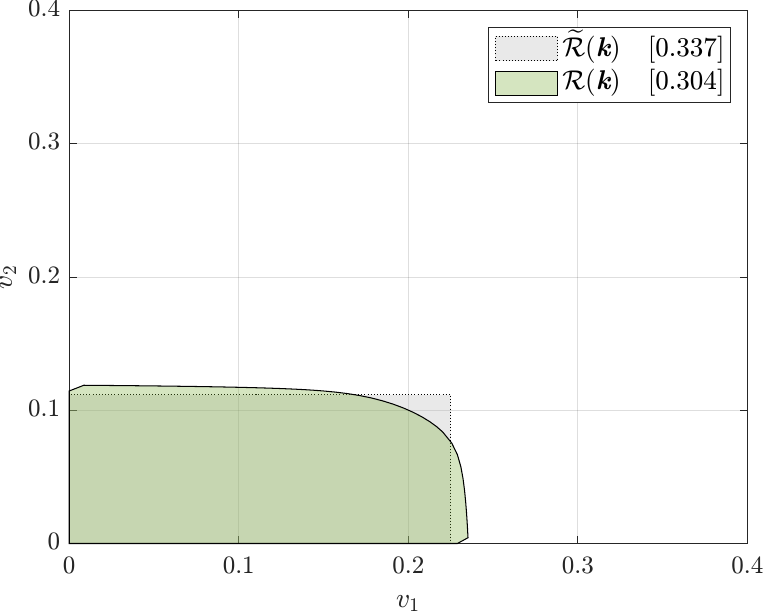}
		\caption{Uniform allocation, $\bH = \bN$}
		\label{fig:ex-region-lambda-uniform-ub}
	\end{subfigure}
	\hfill
	\begin{subfigure}{0.325\linewidth}
		\includegraphics[width=\linewidth]{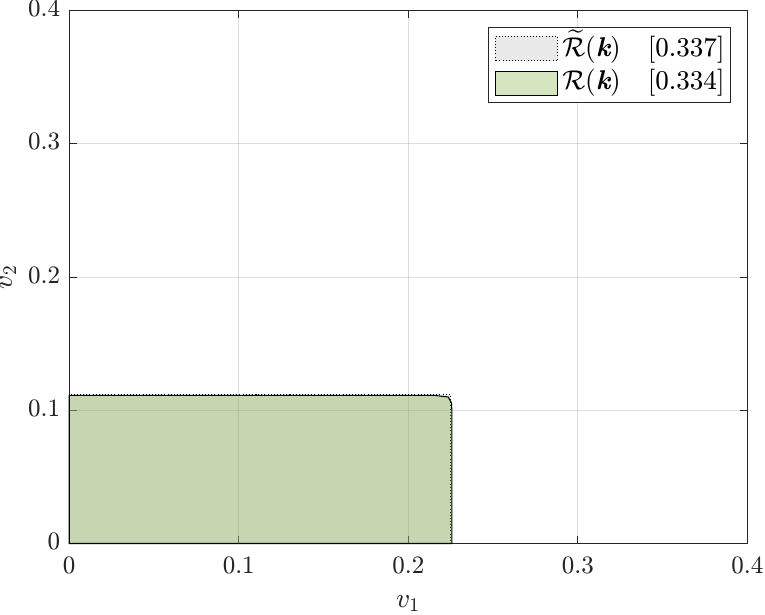}
		\caption{Axial allocation, $\bH = \bN$}
		\label{fig:ex-region-lambda-axial-ub}
	\end{subfigure}
	\hfill
	\begin{subfigure}{0.325\linewidth}
		\includegraphics[width=\linewidth]{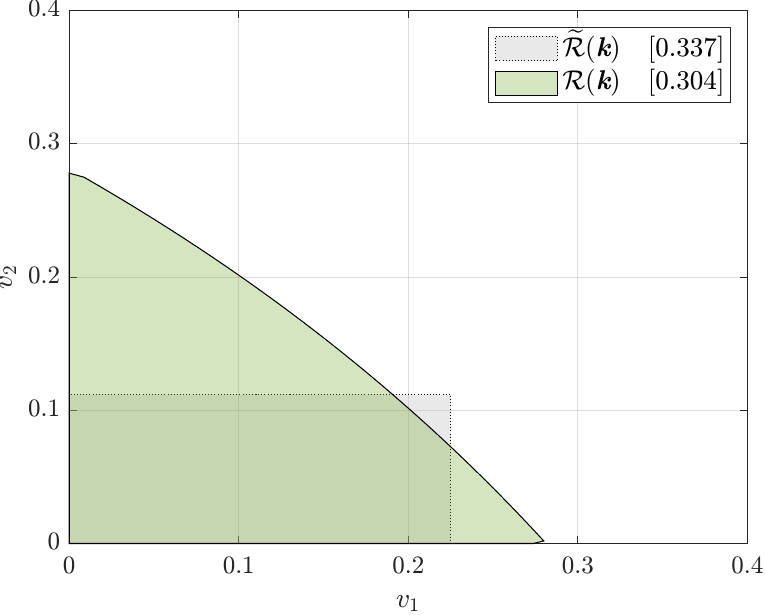}
		\caption{Diagonal allocation, $\bH = \bN$}
		\label{fig:ex-region-lambda-diag-ub}
	\end{subfigure}
	\caption{Region $\cR(\bk)$ (in green) obtained from Corollary~\ref{corol:region_bound} for $m = 2$ appropriateness criteria, $\bN = [800 \; 1200]\T$,  and different choices of $\{\lambda_{\bh} \}_{\bh = \bzero}^{\bH}$ (columns) and $\bH = 2\bN$ (top row) or $\bH = \bN$ (bottom row). Each plot also shows the box region $\tilde{\cR}(\bk)$ (in gray), obtained using the bound in~\eqref{eq:individual-independent-bound}. The maximum value of $|\bv|$ when $\bv$ belongs to each region is also reported in the legende within square brackets beside the corresponding entry.}
	\label{fig:ex-region-lambda}
\end{figure*}

Clearly, for given $\bN$ and confidence parameter $\beta \in (0,1)$, the shape of the region $\cR(\bk)$ depends on $\bk$, $\bH$, and how the quantity $1-\beta$ is allocated among the variables $\{ \lambda_{\bh} \}_{\bh = \bzero}^{\bH}$. For the sake of illustration, we report in Figure~\ref{fig:ex-region-lambda} the different shapes of $\cR(\bk)$ (green regions with solid boundary) associated to a decision problem with $m = 2$ criteria, $\bN = [800 \; 1200]\T$, $\bk = [120 \; 80]\T$, and: $\bH = 2 \bN$ (top row), $\bH = \bN$ (bottom row). Results are presented for $\beta = 10^{-5}$, and for the following choices of $\{ \lambda_{\bh} \}_{\bh = \bzero}^{\bH}$ (each choice corresponds to a column in the figure).

\begin{description}
	\item[Uniform allocation] (left column):
	\begin{subequations} \label{eq:lambda-allocation-uniform}
	\begin{align}
		&\lambda_{\bh} = 1											&&\bh = \bN \\
		&\lambda_{\bh} = \frac{-\beta}{(\bH + \bone)^{\bone}-1 } 	&&\text{otherwise} 
	\end{align}
	\end{subequations}
	which ensures $\sum_{\bh = \bzero}^{\bH} \lambda_{\bh} = 1-\beta$ by distributing the $-\beta$ uniformly over all $\lambda_{\bh}$'s for which $\bh \neq \bN$. As can be seen from Figures~\ref{fig:ex-region-lambda-uniform} and~\ref{fig:ex-region-lambda-uniform-ub}, this choice of $\{ \lambda_{\bh} \}_{\bh = \bzero}^{\bH}$ produces rounded regions in the $\bv$-domain, either limited both upward and downward in all axis directions when $\bH > \bN$ (cf. Figure~\ref{fig:ex-region-lambda-uniform}) or upward only when $\bH = \bN$ (cf. Figure~\ref{fig:ex-region-lambda-uniform-ub}). \\	
	\item[Axial allocation] (center column):
	\begin{subequations} \label{eq:lambda-allocation-axial}
	\begin{align}
		&\lambda_{\bh} = 1						&&\bh = \bN \\
		&\lambda_{\bh} = \frac{-\beta}{|\bH|} 	&&\bh \neq \bN \land \exists i : h_i = N_i \\
		&\lambda_{\bh} = 0 						&&\text{otherwise} 
	\end{align}
	\end{subequations}
	which ensures $\sum_{\bh = \bzero}^{\bH} \lambda_{\bh} = 1-\beta$ by distributing the $-\beta$ only over the $\lambda_{\bh}$'s for which $h_i = N_i$ for some (but not all) $i$'s, and setting to zero the remaining ones. As can be seen from Figures~\ref{fig:ex-region-lambda-axial} and~\ref{fig:ex-region-lambda-axial-ub}, this choice of $\{ \lambda_{\bh} \}_{\bh = \bzero}^{\bH}$ produces rectangular regions with blunt corners in the $\bv$-domain, either limited both upward and downward in all axis directions when $\bH > \bN$ (cf. Figure~\ref{fig:ex-region-lambda-axial}) or upward only for $\bH = \bN$ (cf. Figure~\ref{fig:ex-region-lambda-axial-ub}). \\
	\item[Diagonal allocation] (right column):
	\begin{subequations} \label{eq:lambda-allocation-diagonal}
	\begin{align}
		&\lambda_{\bh} = 1									&&\bh = \bN \\
		&\lambda_{\bh} = \frac{-\beta}{|\cJ_{\bzero}|-1}	&&\bh = \bN+j\bone,~ j \in \cJ_{\bzero} \setminus \{ 0 \} \\
		&\lambda_{\bh} = 0 									&&\text{otherwise} 
	\end{align}
	\end{subequations}
	with $\cJ_{\bzero} = \{ j\in \bbZ :~ \max\{-\bN\} \le j \le \min\{\bH -\bN\} \}$, which ensures $\sum_{\bh = \bzero}^{\bH} \lambda_{\bh} = 1-\beta$ by distributing the $-\beta$ over the $\lambda_{\bh}$'s for which all components of $\bh$ differs from the corresponding components of $\bN$ by the same integer quantity $j \neq 0$, and setting to zero the remaining ones. As can be seen from Figures~\ref{fig:ex-region-lambda-diag} and~\ref{fig:ex-region-lambda-diag-ub}, this choice of $\{ \lambda_{\bh} \}_{\bh = \bzero}^{\bH}$ produces counter-diagonal banded regions in the $\bv$-domain, either limited both upward and downward along the diagonal direction for $\bH > \bN$ (cf. Figure~\ref{fig:ex-region-lambda-diag}) or upward only for $\bH = \bN$ (cf. Figure~\ref{fig:ex-region-lambda-diag-ub}).
\end{description}
Despite introduced for the case $m = 2$ only, the allocations in~\eqref{eq:lambda-allocation-uniform},~\eqref{eq:lambda-allocation-axial}, and~\eqref{eq:lambda-allocation-diagonal} are \emph{mutatis mutandis} valid for any number $m \in \bbN$ of appropriateness criteria, arguably generating similar features for $\cR(\bk)$ to those highlighted for $m=2$. The names of these three ways of allocating the confidence $1-\beta$ among the $\lambda_{\bh}$ variables refers to the sparsity pattern of the $m$-dimensional object $\{ \lambda_{\bh} \}_{\bh = \bzero}^{\bH}$. Indeed, for $m = 2$, the $\lambda_{\bh}$ variables can be arranged in a matrix $\bLambda$ with $H_1+1$ rows and $H_2+1$ columns, and the multi-index $\bh = [h_1 \; h_2]\T$ identifies the row $h_1+1$, column $h_2+1$ entry. In the \emph{uniform} allocation, all entries of $\bLambda$ are non zero, while in the \emph{axial} allocation only the entries in row $h_1 = N_1$ and in column $h_2 = N_2$ are non zero (they align with the horizontal/vertical axes of the matrix); in the \emph{diagonal} allocation only the entries along the diagonal passing through element $(h_1,h_2) = (N_1,N_2)$ are non zero.

\begin{figure*}
	\centering
	\begin{subfigure}{0.325\linewidth}
		\includegraphics[width=\linewidth]{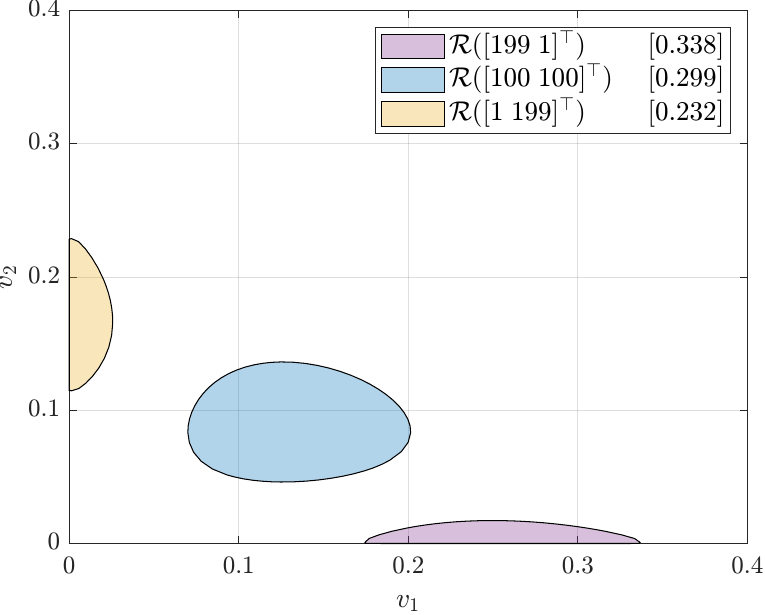}
		\caption{Uniform allocation, $\bH = 2\bN$}
		\label{fig:ex-region-lambda-uniform-ki}
	\end{subfigure}
	\hfill
	\begin{subfigure}{0.325\linewidth}
		\includegraphics[width=\linewidth]{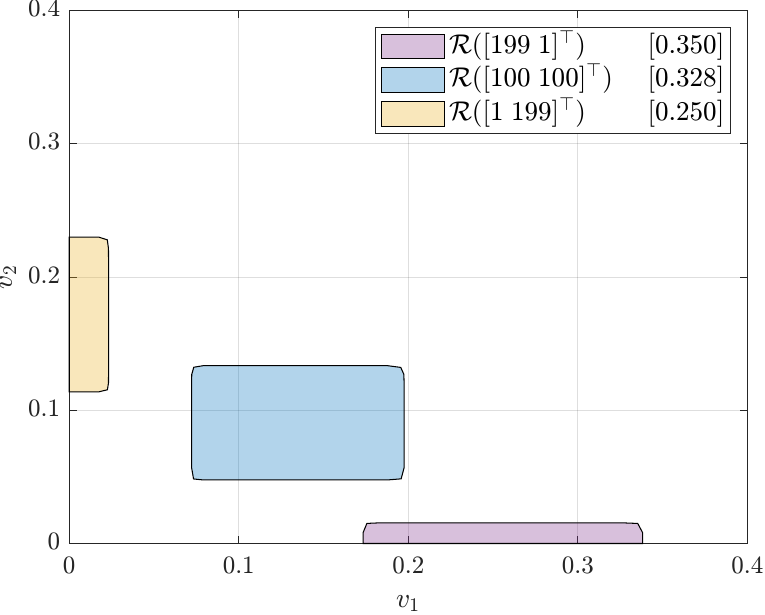}
		\caption{Axial allocation, $\bH = 2\bN$}
		\label{fig:ex-region-lambda-axial-ki}
	\end{subfigure}
	\hfill
	\begin{subfigure}{0.325\linewidth}
		\includegraphics[width=\linewidth]{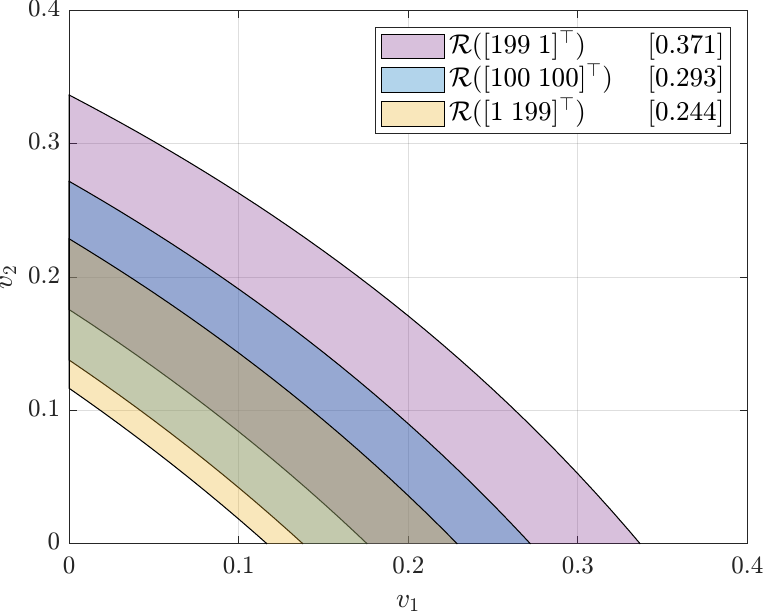}
		\caption{Diagonal allocation, $\bH = 2\bN$}
		\label{fig:ex-region-lambda-diag-ki}
	\end{subfigure}
	\\[1em]
	\begin{subfigure}{0.325\linewidth}
		\includegraphics[width=\linewidth]{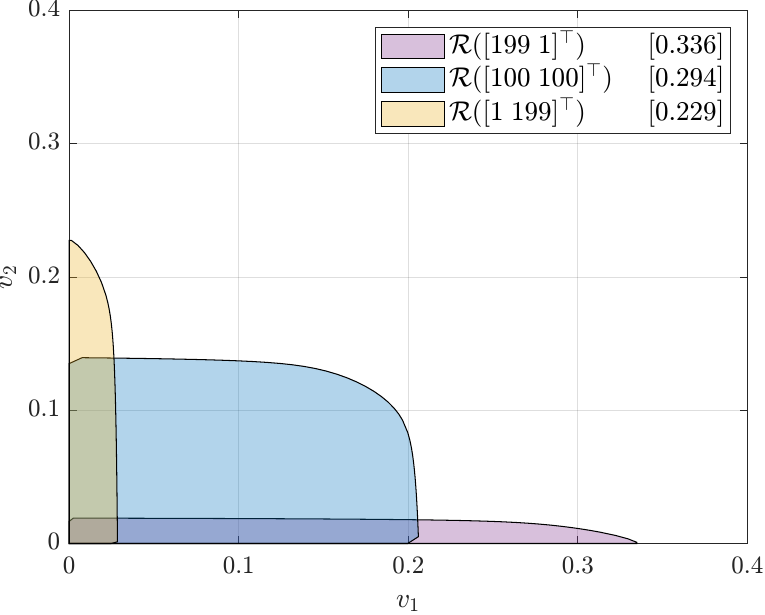}
		\caption{Uniform allocation, $\bH = \bN$}
		\label{fig:ex-region-lambda-uniform-ub-ki}
	\end{subfigure}
	\hfill
	\begin{subfigure}{0.325\linewidth}
		\includegraphics[width=\linewidth]{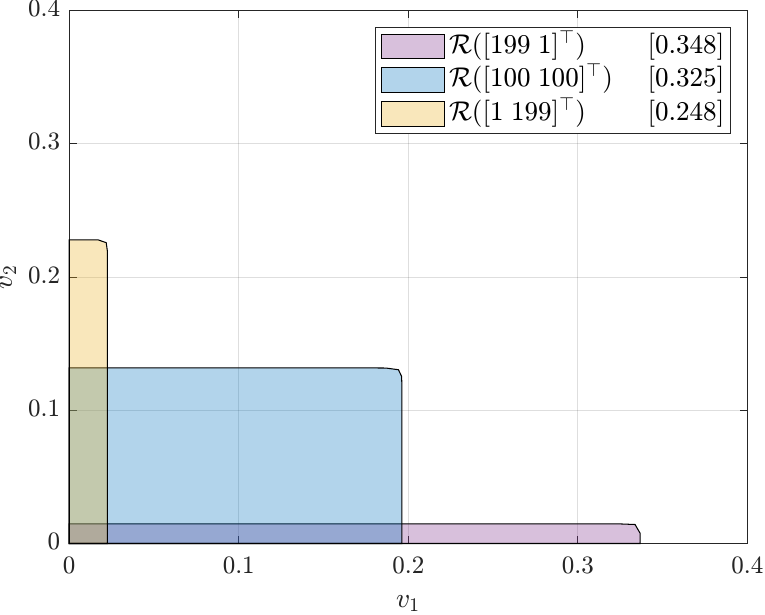}
		\caption{Axial allocation, $\bH = \bN$}
		\label{fig:ex-region-lambda-axial-ub-ki}
	\end{subfigure}
	\hfill
	\begin{subfigure}{0.325\linewidth}
		\includegraphics[width=\linewidth]{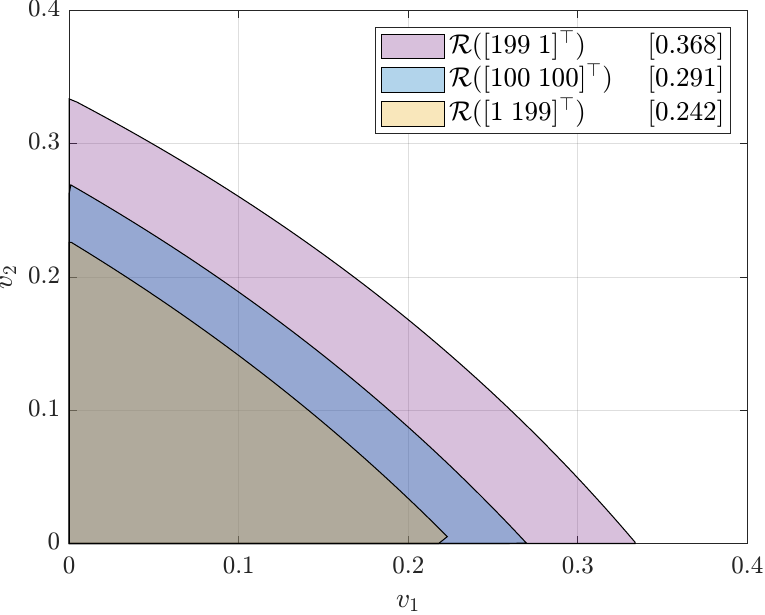}
		\caption{Diagonal allocation, $\bH = \bN$}
		\label{fig:ex-region-lambda-diag-ub-ki}
	\end{subfigure}
	\caption{Region $\cR(\bk)$ obtained from Theorem~\ref{thm:region_bound} for $m = 2$ appropriateness criteria, $\bN = [800 \; 1200]\T$, and different choices of $\{\lambda_{\bh} \}_{\bh = \bzero}^{\bH}$ (columns), $\bH = 2\bN$ (top row) or $\bH = \bN$ (bottom row), and $\bk = [199 \; 1]\T$ (purple) or $\bk = [100 \; 100]\T$ (blue) or $\bk = [1 \; 199]\T$ (yellow). The maximum value of $|\bv|$ when $\bv$ belongs to each region is also reported in square brackets beside the corresponding legend entry.}
	\label{fig:ex-region-lambda-ki}
\end{figure*}

In Figure~\ref{fig:ex-region-lambda} we also report -- in gray, with dotted boundaries -- the box regions
\begin{equation} \label{eq:box-region}
	\tilde{\cR}(\bk) = \prod_{i=1}^m [\epstlb_i(\sstari),\epstub_i(\sstari)]	
\end{equation}
that one obtains from~\eqref{eq:individual-independent-bound} using the independent assessment of the individual risks as discussed in Section~\ref{sec:independent_individual_bounds}. As can be seen from the comparison between the green and gray regions in Figure \ref{fig:ex-region-lambda}, besides offering more flexibility in shaping the bounding regions,  the proposed collective assessment of the individual risks in Corollary \ref{corol:region_bound} is able to rule out some extreme risk vector values nearby the vertexes of $\tilde{\cR}(\bk)$, which the independent assessment considers instead possible within the same confidence level $1-\beta$.\footnote{This is also true for the axial allocation (center column) where the rectangular green region shows blunt corners, whereas the corners of the gray rectangle (almost completely hidden) are sharp; however, in this case the difference is minimal and in fact the axial allocation returns regions $\cR(\bk)$ that are equivalent to the boxes returned by \eqref{eq:individual-independent-bound}.} This is a clear advantage of the collective bound in~\eqref{eq:region_bound} with respect to the independent bound in~\eqref{eq:individual-independent-bound}, and will be key in providing tighter estimates for the joint risk. To assess the advantage more quantitatively, the legend of each plot in Figure \ref{fig:ex-region-lambda} also reports the quantities $\max_{\bv \in \cR(\bk)} |\bv|$ and $\max_{\bv \in \tilde{\cR}(\bk)} |\bv|$ as a representative worst-case. As it appears, the value corresponding to the collective assessment is always smaller than the value corresponding to the individual assessment, and the difference is significant for uniform and axial allocations; moreover, we will show that the gap grows with the number $m$ of appropriateness criteria.

We now discuss how the position and extent of the region are also affected by $\bk$. Recalling that, in the standard scenario theory of~\cite{garatti_RiskComplexityScenario_2022}, the ratio between complexity and number of scenarios is related to the risk magnitude, it is natural to expect that the component-wise ratio $\frac{\bk}{\bN}$ determines the position of the region within $[0,1]^m$; however, in the present setup $\bk$ may vary along various directions. To exemplify this, we consider again the set-up of Figure~\ref{fig:ex-region-lambda} and vary $\bk$ while keeping $|\bk| = 120+80 = 200$ fixed (this is to say that the overall number of support scenarios is fixed, but they are variously distributed between $\cD_1$ and $\cD_2$). Figure~\ref{fig:ex-region-lambda-ki} shows how the regions in Figure~\ref{fig:ex-region-lambda} change when $\bk = [199 \; 1]\T$ (purple); $\bk = [100 \; 100]\T$ (blue); and $\bk = [1 \; 199]\T$ (yellow). Even though we do not report a picture in the interest of space, it is worth mentioning that as $N_1$ and $N_2$ grows, while keeping the ratios $k_1 / N_1$ and $k_2 / N_2$ fixed, the regions shrinks in volume, making the assessment tighter and tighter. Moreover, when $k_1 = k_2$, the ratio between $N_1$ and $N_2$ determines the proportions of the region, which is more extended along the direction of the criteria with the smaller $N_i$ ($N_1$ in the case of Figure \ref{fig:ex-region-lambda-ki}). 

\subsection{Computational aspects}

The reduction in conservatism experienced using a collective assessment of the individual risks as opposed to an independent assessment of the individual risks comes at the price of an increased computational complexity. Indeed, to apply \eqref{eq:individual-independent-bound}, one only has to find the two zeros of $m$ one-dimensional functions (cf. left hand side of~\eqref{eq:dual-standard-scenario-fcn} for each $\im$) to independently bound the individual risks, whereas, Corollary \ref{corol:region_bound} requires to find the 0-level set of one $m$-dimensional function to obtain the region $\cR(\bk)$. The regions displayed in Figures \ref{fig:ex-region-lambda}-\ref{fig:ex-region-lambda-ki} were brute-force computed, which however is not affordable for larger values of $m$, and how to compute efficiently $\cR(\bk)$ in the general case remains an open issue. 

To partially overcome this limitation, the following Proposition \ref{prop:region-bound-diag} shows that the computational complexity is greatly alleviated when $\{ \lambda_{\bh} \}_{\bh = \bzero}^{\bH}$ are set according to the diagonal allocation in~\eqref{eq:lambda-allocation-diagonal}, since in this case the corresponding region $\cR(\bk)$ admits an almost-explicit expression, regardless of $m$. Before presenting the result, we introduce some preliminary elements. For $\bN$ and $\bH \ge \bN$ given, consider the family of scalar functions $\psi_{\bk,\bN,\bH}: \bbR \to \bbR$, $\bk = \bzero,\ldots,\bN$, as follows:\footnote{These functions are related to the left-hand side of \eqref{eq:dual-lambda-constraint} when the $\lambda_{\bh}$'s are as in the diagonal allocation.}
\begin{equation} \label{eq:psi_def}
	\psi_{\bk,\bN,\bH}(t) = 1-\frac{\beta}{|\cJ_{\bzero}|-1}\sum_{j \in \cJ_{\bk} \setminus \{0\}} \frac{\binom{\bN+j\bone}{\bk}}{\binom{\bN}{\bk}} t^j,
\end{equation}
with $\cJ_{\bk} =\{\max\{\bk-\bN\}, \dots,\min\{\bH-\bN\}\} \subset \bbZ$. Then, for $\bk = \bzero,\ldots,\bN$, we can define the two scalars $\tbub_{\bk}$ and $\tblb_{\bk}$ as follows
\begin{subequations} \label{eq:psi_zeros}
	\begin{align}
		&\tbub_{\bk}
		=
		\begin{cases}
			\text{unique zero of $\psi_{\bk,\bN,\bH}$ in $(0,\hat{t})$ } \\
			0
		\end{cases}
		&&\hspace{-2em}
		\begin{aligned}
			&\bk < \bN \\
			&\text{else}
		\end{aligned}
		\label{eq:psi_zero_ub} \\
		&\tblb_{\bk}
		=
		\begin{cases}
			\text{unique zero of $\psi_{\bk,\bN,\bH}$ in $(\hat{t},\infty)$ } \\
			1
		\end{cases}
		&&\hspace{-2em}
		\begin{aligned}
			&\bH \! > \! \bN \\
			&\text{else}
		\end{aligned}
		\label{eq:psi_zero_lb}
	\end{align}
\end{subequations}
where $\hat{t} = (\bone-\frac{\bk}{\bN})^{\bone} \in [0,1]$. The fact that $\tbub_{\bk}$ and $\tblb_{\bk}$ are well-defined is shown in the auxiliary Lemmas \ref{lem:psi-continous-concave-increasing} and \ref{lem:zeros-psi} in Appendix~\ref{sec:proof-aux}, which study relevant features of the functions $\psi_{\bk,\bN,\bH}$.
\begin{prop} \label{prop:region-bound-diag}
	Let $\bN$ be any given multi-index in $\bbN^m$, $\bH \ge \bN$, and $\beta \in (0,1)$. Let
	\begin{equation} \label{eq:region_diag}
		\cR(\bk) = \{ \bv \in [0,1]^m :~ (\bone-\bv)^{\bone} \in [\tbub_{\bk}, \tblb_{\bk}] \}.
	\end{equation}
	with $\tbub_{\bk}$ and $\tblb_{\bk}$ defined in \eqref{eq:psi_zeros}. Under Assumptions~\ref{ass:property} and~\ref{ass:non-degeneracy}, $\zsN = \cM(\bcD)$ satisfies
	\begin{equation} \label{eq:region_bound_diag}
		\bbPbN\{ \bV(\zsN) \in \cR(\bsstar) \} \ge 1 - \beta,
	\end{equation}
	where $\bsstar \in \bbN_0^m$ is the complexity of $\zsN$.	
	\hqed
\end{prop}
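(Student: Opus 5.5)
The plan is to obtain Proposition~\ref{prop:region-bound-diag} as a direct instance of Corollary~\ref{corol:region_bound}, using the diagonal allocation~\eqref{eq:lambda-allocation-diagonal}. The work is to show that, for this allocation, the $0$-superlevel set defining $\cR(\bk)$ in the Corollary coincides with the set in~\eqref{eq:region_diag}.

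First, I check that the diagonal allocation is admissible for Corollary~\ref{corol:region_bound}.
\begin{itemize}
\item We have $\lambda_{\bN}=1$, and all other $\lambda_{\bh}$ are nonpositive.
\item The index set $\cJ_{\bzero}=\{\max\{-\bN\},\dots,\min\{\bH-\bN\}\}$ contains $0$ and also $\max\{-\bN\}\le -1$, since $\bN\in\bbN^m$. Hence $|\cJ_{\bzero}|-1\ge 1$.
\item Each $j\in\cJ_{\bzero}\setminus\{0\}$ gives a multi-index $\bh=\bN+j\bone$ with $\bzero\le\bh\le\bH$, and distinct $j$ give distinct $\bh$.
\item Therefore $\sum_{\bh}\lambda_{\bh}=1-(|\cJ_{\bzero}|-1)\frac{\beta}{|\cJ_{\bzero}|-1}=1-\beta$.
\end{itemize}

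Next, I evaluate the left-hand side of the inequality defining $\cR(\bk)$ for fixed $\bk\le\bN$. Only the diagonal terms $\bh=\bN+j\bone$ survive. Among them, the summation range $\bh\ge\bk$ (equivalently, the vanishing of $\binom{\bh}{\bk}$ when some $h_i<k_i$) keeps exactly those with $j\ge\max\{\bk-\bN\}$, i.e.\ $j\in\cJ_{\bk}$. Since $\bk\le\bN\le\bH$, we also have $0\in\cJ_{\bk}$.

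For these terms, $(\bone-\bv)^{\bh-\bN}=\prodim(1-v_i)^{j}=\big((\bone-\bv)^{\bone}\big)^j$. The $j=0$ term contributes exactly $1$. Setting $t=(\bone-\bv)^{\bone}\in[0,1]$, the left-hand side therefore equals $\psi_{\bk,\bN,\bH}(t)$ as defined in~\eqref{eq:psi_def}. Consequently
$$\cR(\bk)=\{\bv\in[0,1]^m:\ \psi_{\bk,\bN,\bH}((\bone-\bv)^{\bone})\ge 0\},$$
and the problem reduces to describing the one-dimensional set $\{t\in[0,1]:\psi_{\bk,\bN,\bH}(t)\ge0\}$.

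For this I rely on the auxiliary Lemmas~\ref{lem:psi-continous-concave-increasing} and~\ref{lem:zeros-psi}. I expect them to give three facts:
\begin{itemize}
\item $\psi_{\bk,\bN,\bH}$ is continuous on $(0,\infty)$ and has a single-peaked shape, increasing up to around $\hat t$ and decreasing afterwards.
\item $\psi_{\bk,\bN,\bH}(\hat t)>0$.
\item $\psi_{\bk,\bN,\bH}(t)\to-\infty$ as $t\to0^+$ whenever negative powers are present, and as $t\to\infty$ whenever positive powers are present, i.e.\ when $\bH>\bN$.
\end{itemize}
With these, the superlevel set on $(0,\infty)$ is exactly $[\tbub_{\bk},\tblb_{\bk}]$, and intersecting with $t\le1$ changes nothing relevant, because $\bv\in[0,1]^m$ forces $t\le1$.

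The main obstacle I expect is the boundary and degenerate cases, which must be matched carefully against the case split in~\eqref{eq:psi_zeros}.
\begin{itemize}
\item When $\bk=\bN$ there are no negative powers. Then $\psi_{\bk,\bN,\bH}$ stays positive near $t=0$, so $\tbub_{\bk}=0$ is the correct lower endpoint, and the point $t=0$ must be included in the set.
\item When $\bH=\bN$ there are no positive powers, so $\psi_{\bk,\bN,\bH}$ is increasing. The set is then unbounded above and is correctly truncated at $\tblb_{\bk}=1$.
\item At $t=0$ with negative powers present, $\psi_{\bk,\bN,\bH}$ is undefined. Here I use that the constraint in Theorem~\ref{thm:region_bound} only needs to hold where $(\bone-\bv)^{\bN-\bk}>0$, so such $\bv$ may be excluded from $\cR(\bk)$ without affecting feasibility.
\item I also need to check that the unimodality argument still holds when only some components satisfy $k_i=N_i$.
\end{itemize}
Once the set identity is established, Corollary~\ref{corol:region_bound} yields~\eqref{eq:region_bound_diag} immediately.
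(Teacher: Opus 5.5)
Your proposal is correct and follows essentially the same route as the paper. The paper plugs the diagonal allocation into Theorem~\ref{thm:region_bound} and rewrites the left-hand side of \eqref{eq:dual-lambda-constraint} as $\psi_{\bk,\bN,\bH}((\bone-\bv)^{\bone})$. It then uses Lemmas~\ref{lem:psi-continous-concave-increasing}--\ref{lem:zeros-psi} (concavity, $\psi_{\bk,\bN,\bH}(\hat t)>0$, the limits at $0^+$ and $+\infty$), with the same case split on $\bk<\bN$ and $\bH>\bN$, to check $\psi_{\bk,\bN,\bH}(t)\le \ind_{[\tbub_{\bk},\tblb_{\bk}]}(t)$ for $t\in(0,1]$ (or $t\in[0,1]$ when $\bk\not<\bN$). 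This is exactly your superlevel-set identification, including your treatment of $t=0$.

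Going through Corollary~\ref{corol:region_bound} instead of Theorem~\ref{thm:region_bound} directly only repackages the same feasibility check. Your remaining worry about some but not all $k_i=N_i$ is resolved by noting that $\max\{\bk-\bN\}=0$ in that case, so no negative powers appear and $\psi_{\bk,\bN,\bH}(0)=1$.
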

Proposition \ref{prop:region-bound-diag} is essentially obtained by showing that, under a diagonal allocation, $\cR(\bk)$ for each $\bk$ is the region in \eqref{eq:region_diag} lying between two $m$-dimensional rectangular hyperbolas.
Note that we use the upper bar in $\tbub_{\bk}$ because, despite being the smaller zero of~\eqref{eq:psi_def}, it controls the upper hyperbola; similarly, we used $\tblb_{\bk}$ because it controls the lower one. Note also that, if $\tblb_{\bk} > 1$ for some $\bk$ when $\bH > \bN$, this is equivalent to setting $\tblb_{\bk} = 1$, since $(\bone-\bv)^{\bone} \le 1$ for all $\bv \in [0,1]^m$.

The computational benefit of Proposition~\ref{prop:region-bound-diag} with respect to the general case of Theorem~\ref{thm:region_bound} is evident, since all the effort lies in the computation of $\tbub_{\bk}$ and $\tblb_{\bk}$. These are zeros of a continuous (see Lemma~\ref{lem:psi-continous-concave-increasing} in Appendix~\ref{sec:proof-aux}) one-dimensional function, and can be efficiently obtained using any root-finding method for continuous functions; for instance, the bisection method, with $[0,\hat{t}]$ as starting interval for $\tbub_{\bk}$ and $[\hat{t},1]$ for $\tblb_{\bk}$. See Appendix~\ref{appendix:matlab} for a ready-to-use MATLAB code.


\section{Joint risk assessment} \label{sec:joint_risk}

We now turn our attention to estimating the joint risk $V(\zsN)$ associated to the scenario decision $\zsN = \cM(\bcD)$. We here focus on the more practical case in which the user is interested in deriving only an upper bound on the joint risk. Similarly to Section~\ref{sec:individual_risk}, we first show how the individual risk bounds obtained in the independent assessment we discussed in Section~\ref{sec:independent_individual_bounds} can be used to obtain an easy, albeit conservative, evaluation of the joint risk, and then present a significantly improved estimate based on the results discussed in Section~\ref{sec:joint_individual_bounds}.

\subsection{Evaluation of the joint risk based on the independent assessment of individual risks} \label{sec:joint_independent_bound}

An upper bound on the joint risk $V(\zsN)$ can be easily derived leveraging the sub-additivity property of $\bbP$. Indeed, by definition of individual and joint risks (cf.~\eqref{eq:local-risks} and~\eqref{eq:global-risk}), for any decision $z \in \cZ$, we have
\begin{align}
	V(z)
	&= \bbP \{ \delta\in\Delta :~ z \notin \cZ_1(\delta) \lor \cdots \lor z \notin \cZ_m(\delta) \} \nonumber \\
	&\le \sumim \bbP \{ \delta\in\Delta :~ z \notin \cZ_i(\delta) \} = \sumim V_i(z). \label{eq:sub-additivity}
\end{align}
Therefore, combining~\eqref{eq:sub-additivity} with the independent assessment of the individual risks in~\eqref{eq:individual-independent-bound}, we immediately obtain
\begin{subequations} \label{eq:joint-independent}
\begin{equation} \label{eq:joint-independent-bound}
	\bbPbN \{ V(\zsN) \le \epstub(\bsstar) \} \ge 1 - \beta,
\end{equation}
with
\begin{equation} \label{eq:joint-independent-eps}
	\epstub(\bk) = \sumim \epstub_i(k_i).
\end{equation}

\end{subequations}

Note that $\epstub(\bk)$ is also equal to $\max_{\bv \in \tilde{\cR}(\bk)} |\bv|$, where $\tilde{\cR}(\bk)$ is as in \eqref{eq:box-region}. That is, the bound on the joint risk is obtained by considering the worst case individual risks corresponding to the upper-most vertex of $\tilde{\cR}(\bk)$ (see the gray regions in Figure~\ref{fig:ex-region-lambda}). These extreme values are excluded when resorting to regions $\cR(\bk)$ as in Theorem \ref{thm:region_bound} and the discussion in Section \ref{sec:joint_individual_bounds} supports the intuition that $\epstub(\bk)$ can be overly conservative.

\subsection{Evaluation of the joint risk based on the collective assessment of individual risks} \label{sec:joint_bound}

The very same reasoning we presented in the previous section can be applied to derive an upper bound on the joint risk leveraging the collective assessment of the individual risks presented in Section~\ref{sec:joint_individual_bounds}. Indeed, combining~\eqref{eq:sub-additivity} with~\eqref{eq:region_bound} this time, we immediately have
\begin{equation} \label{eq:joint-coll-bound}
	\bbPbN \{ V(\zsN) \le \max_{\bv \in \cR(\bsstar)} |\bv| \} \ge 1 - \beta,
\end{equation}
where $\cR(\bsstar)$ is any region as per Corollary~\ref{corol:region_bound}, associated to a choice of $\{ \lambda_{\bh} \}_{\bh = \bzero}^{\bH}$ that is feasible for~\eqref{eq:dual-lambda} and satisfies $\sum_{\bh = \bzero}^{\bH} \lambda_{\bh} = 1-\beta$.

The experimental results reported in Section~\ref{sec:joint_individual_bounds} show that for certain choices of $\{ \lambda_{\bh} \}_{\bh = \bzero}^{\bH}$, the evaluation in \eqref{eq:joint-coll-bound} can greatly improve upon that in \eqref{eq:joint-independent} since $\max_{\bv \in \cR(\bsstar)} |\bv|$ can be much smaller than $\max_{\bv \in \tilde{\cR}(\bsstar)} |\bv|$. In particular, the counter-diagonal-band regions associated to the diagonal allocation of the $\lambda_{\bh}$'s appear well-suited to this purpose since their upper boundaries align closely with the level sets of $|\bv|$, especially when $\bk \ll \bN$ and the region is located nearby the origin. As a matter of fact, in the examples reported in Section~\ref{sec:joint_individual_bounds}, when the scenarios were more or less evenly distributed among the datasets $\cD_1,\ldots,\cD_m$ (green regions in Figure~\ref{fig:ex-region-lambda}, blue regions in Figure~\ref{fig:ex-region-lambda-ki}) the diagonal allocation gave the lowest value for  $\max_{\bv \in \cR(\bk)} |\bv|$. Moreover, the diagonal allocation choice gives a remarkable computational benefit, thanks to the almost-explicit characterization of $\cR(\bk)$ given by~\eqref{eq:region_diag} in Proposition~\ref{prop:region-bound-diag}. Proposition \ref{prop:joint-bound-diag} below leverages this result to give an evaluation of the joint risk in which $\max_{\bv \in \cR(\bk)} |\bv|$ is analytically calculated for all $\bk = \bzero,\dots,\bN$. Since only the upper boundary of $\cR(\bk)$ matters for computing $\max_{\bv \in \cR(\bk)} |\bv|$, we directly consider $\bH = \bN$, which provides control on the upper boundary while neglecting the lower one.

\begin{prop} \label{prop:joint-bound-diag}
	Let $\bN$ be any given multi-index in $\bbN^m$ and $\beta \in (0,1)$. Under Assumptions~\ref{ass:property} and~\ref{ass:non-degeneracy}, $\zsN = \cM(\bcD)$ satisfies
	\begin{equation} \label{eq:joint-bound_diag}
		\bbPbN\{ V(\zsN) \le \epsub(\bsstar) \} \ge 1 - \beta,
	\end{equation}
	where $\bsstar \in \bbN_0^m$ is the complexity of $\zsN$ and
	\begin{equation} \label{eq:eps-bound-diag}
		\epsub(\bk) = \min\{ m ( 1 - \sqrt[m]{\tbub_{\bk}} ), 1\},
	\end{equation}
	where $\tbub_{\bk}$ is defined as in~\eqref{eq:psi_zero_ub} with $\bH = \bN$.
	\hqed
\end{prop}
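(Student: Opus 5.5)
The plan is to derive Proposition~\ref{prop:joint-bound-diag} from Proposition~\ref{prop:region-bound-diag} specialized to $\bH = \bN$, combined with the sub-additivity bound~\eqref{eq:sub-additivity}. With $\bH=\bN$, definition~\eqref{eq:psi_zero_lb} gives $\tblb_{\bk}=1$. The region in~\eqref{eq:region_diag} therefore reduces to
\[
\cR(\bk)=\{\bv\in[0,1]^m :~ (\bone-\bv)^{\bone}\ge \tbub_{\bk}\},
\]
and Proposition~\ref{prop:region-bound-diag} yields $\bbPbN\{\bV(\zsN)\in\cR(\bsstar)\}\ge 1-\beta$.

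On this event, \eqref{eq:sub-additivity} gives $V(\zsN)\le |\bV(\zsN)| \le \max_{\bv\in\cR(\bsstar)}|\bv|$. Moreover, $V(\zsN)\le 1$ always holds, being a probability. Hence it suffices to show that
\[
\max_{\bv\in\cR(\bk)} |\bv| \le m\bigl(1-\sqrt[m]{\tbub_{\bk}}\bigr) \quad \text{for every } \bk=\bzero,\dots,\bN,
\]
after which~\eqref{eq:joint-bound_diag} follows by inclusion of events.

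The key step is a deterministic inequality, proved via the AM--GM inequality. Set $w_i = 1-v_i\in[0,1]$, so that the constraint reads $\prodim w_i\ge \tbub_{\bk}$. AM--GM gives
\[
\frac{1}{m}\sumim w_i \ge \Bigl(\prodim w_i\Bigr)^{1/m} \ge \sqrt[m]{\tbub_{\bk}},
\]
and therefore $|\bv| = \sumim v_i = m - \sumim w_i \le m\bigl(1-\sqrt[m]{\tbub_{\bk}}\bigr)$. Equality is attained at $v_i = 1-\sqrt[m]{\tbub_{\bk}}$ for all $i$, so the bound is in fact the exact maximum. For this point to be admissible we need $\tbub_{\bk}\in[0,1]$; this follows from its definition as a zero in $(0,\hat t)$ with $\hat t\le 1$, or equals $0$ when $\bk=\bN$. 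The degenerate case $\tbub_{\bk}=0$ gives the value $m$, which the $\min\{\cdot,1\}$ in~\eqref{eq:eps-bound-diag} clips to the trivial bound $1$.

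The main obstacle is not this elementary optimization but making sure that Proposition~\ref{prop:region-bound-diag} legitimately applies with $\bH=\bN$. Specifically, we must check that $\tbub_{\bk}$ as defined in~\eqref{eq:psi_zero_ub} is well defined in that case, i.e.\ that $\psi_{\bk,\bN,\bN}$, whose index set $\cJ_{\bk}$ now contains only nonpositive $j$, has a unique zero in $(0,\hat t)$ when $\bk<\bN$. I would rely on Lemmas~\ref{lem:psi-continous-concave-increasing} and~\ref{lem:zeros-psi}, as stated for general $\bH\ge\bN$. I would also verify that the event $\{\bV(\zsN)\in\cR(\bsstar)\}$ is measurable, or else simply work with the inner probability as the paper implicitly does. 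The rest of the argument is bookkeeping.
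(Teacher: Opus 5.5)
Your proposal is correct and follows the paper's proof essentially step by step. You specialize Proposition~\ref{prop:region-bound-diag} to $\bH=\bN$ so that $\cR(\bk)=\{\bv\in[0,1]^m:(\bone-\bv)^{\bone}\ge\tbub_{\bk}\}$, pass to the joint risk via \eqref{eq:sub-additivity}, and compute $\max_{\bv\in\cR(\bk)}|\bv|$ by AM--GM with equality at $\bv=(1-\sqrt[m]{\tbub_{\bk}})\bone$, capping at $1$. One harmless slip: by \eqref{eq:psi_zero_ub}, $\tbub_{\bk}=0$ whenever $\bk\not<\bN$ (that is, whenever some $k_i=N_i$), not only when $\bk=\bN$.
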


Table \ref{tab:joint-collective-vs-indep} offers an initial comparison between the joint risk certification in Proposition \ref{prop:joint-bound-diag} and that in \eqref{eq:joint-independent}. Further striking evidence of the advantage of  using  $\epsub(\cdot)$ instead of  $\epstub(\cdot)$ will be provided through the analysis in the next Sections \ref{sec:apriori_joint_bound} and \ref{sec:scalability_apriori_bounds}.

\begin{table}[t]
	\centering
	\caption{A comparison between $\epsub(\bk)$ and $\epstub(\bk)$ for some values of $m$, $\bN$, and $\bk$. All values are computed for $\beta = 10^{-7}$.}
	\setlength{\tabcolsep}{6pt}        
	\renewcommand{\arraystretch}{1.4}  
	\begin{tabular}{cccc|cc}
		\hline
		$m$ & $\bN$ & $\bk$ & $|\bk|$ & $\epsub(\bk)$ & $\epstub(\bk)$ \\
		\hline
		$10$ & $1500 \cdot \bone$ & $4 \cdot  \bone$ & $40$ & $0.216$ & $0.0544$ \\ 
		$40$ & $1500 \cdot \bone$ & $1 \cdot  \bone$ & $40$ & $0.697$ & 0.0545 \\ 
		$25$ & $1500 \cdot  \bone$ & $2 \cdot  \bone$ & $50$ & $0.474$ & $0.0637$ \\
		$25$ & $2000 \cdot  \bone$ & $2 \cdot  \bone$ & $50$ & $0.356$ & $0.0478$ \\
		$60$ & $1500 \cdot  \bone$ & $1 \cdot  \bone$ & $60$ & $1.062$ & $0.0727$ \\ 
		$100$ & $3000 \cdot  \bone$ & $1 \cdot  \bone$ & $100$ & $0.907$ & $0.0536$ \\ 
		\hline		
	\end{tabular}
	\label{tab:joint-collective-vs-indep}
\end{table}

\subsection{A-priori joint risk assessment} \label{sec:apriori_joint_bound}

The theory developed in the previous sections allows to assess the  joint risk of the decision $\zsN$ \emph{a-posteriori}, meaning that, the datasets in $\bcD$ are given ($\bN$ is thus fixed), they are used to make the decision $\zsN$, and then $V(\zsN)$ is estimated based on the complexity $\bsstar$, which is computed based on $\bcD$ after obtaining $\zsN$. However, in practice, it is often the case that the user also needs an \emph{a-priori} guarantee that the joint risk $V(\zsN)$ will be small enough, before collecting the data or computing $\zsN$, just based on $\bN$ and $\beta$. This may serve, e.g., to properly size the number of scenarios for all criteria so as to be sure that a desirable risk threshold is met, or to decide whether it is worth proceeding with the design given a certain data collection capacity for each criterion. In general, a-priori certifications are impossible to obtain; they require that some information about $\bsstar$ is a-priori available, before collecting the data and computing $\zsN$.

In this paper, we consider the case in which an upper bound $\ktot$ on the total number of support scenarios is available.
\begin{assum}\label{assum:complexity_bounded}
	For every $\bN$, it holds that $|\bsstar| \le \ktot$, where $\bsstar$ is the complexity of $\zsN$. \hqed
\end{assum}
Assumption \ref{assum:complexity_bounded} holds true, for example, when the decision $\zsN$ is the optimal solution of an optimization problem like \eqref{eq:multiagent-opt} that is convex. In this situation, the total number of support scenarios is provably upper bounded by the dimension of the optimization space $\cZ$ (see, e.g.,~\cite{CalCam:05}). 

When $|\bsstar| \le \ktot$, \emph{a-priori} upper bounds on the joint risk $V(\zsN)$ can be easily derived by considering the worst possible occurrence of  $\bsstar$. Specifically, when we rely on an independent assessment of individual risks (cf.~\eqref{eq:joint-independent-bound} and~\eqref{eq:joint-independent-eps}) we have
\begin{subequations}  \label{eq:apriori-indepenednet-joint-bound}
\begin{equation} \label{eq:apriori-joint-independent-bound}
	\bbPbN \{ V(\zsN) \le \epstubap(\ktot) \} \ge 1 - \beta,
\end{equation}
with
\begin{equation} \label{eq:apriori_independent_eps}
	\epstubap(\ktot) = \max_{ \bk : |\bk| \le \ktot} \epstub(\bk) = \max_{ \bk : |\bk| \le \ktot} \sumim \epstub_i(k_i),
\end{equation}
\end{subequations}
while resorting to the collective assessment of individual risks (cf.~\eqref{eq:joint-coll-bound}) the result becomes
\begin{equation} \label{eq:apriori-joint-bound}
	\bbPbN \{ V(\zsN) \le \max_{ \bk : |\bk| \le \ktot} \max_{\bv \in \cR(\bk)} |\bv| \} \ge 1 - \beta.
\end{equation}
The upper bounds on $V(\zsN)$ in \eqref{eq:apriori-indepenednet-joint-bound} and \eqref{eq:apriori-joint-bound} are \emph{a-priori} because they do not need to collect $\bcD$, obtain $\zsN = \cM(\bcD)$, and consequently compute $\bsstar$. 
Additionally, the usage of \eqref{eq:apriori-indepenednet-joint-bound} and \eqref{eq:apriori-joint-bound} can be reversed: one can solve for $\bN$ under the constraint that the bound does not exceed a threshold, thus enabling the user to appropriately select $\bN$ to ensure that $V(\zsN)$ is sufficiently small before computing $\zsN$. 

Clearly, solving the maximization in~\eqref{eq:apriori-joint-bound} adds yet another layer of computational complexity as the user would need to evaluate $\max_{\bv \in \cR(\bk)} |\bv|$ for all $\bk$ such that $|\bk| \le \ktot$. Fortunately, the diagonal allocation choice for $\{ \lambda_{\bh} \}_{\bh=\bzero}^{\bH}$ provides yet another computational benefit, as the maximum over $\bk$ in~\eqref{eq:apriori-joint-bound} can be explicitly characterized as a function of $\ktot$. Specifically, when the size of the datasets in $\bcD$ are homogeneous (i.e., $\bN = N \bone$), the maximum over $\bk$ is provably achieved when $|\bk| = \ktot$ and all $\ktot$ support scenarios belong to a single dataset, say $\cD_1$. Then, the risk for the case of non-homogeneous datasets (generic $\bN$) can be bounded by the risk in the homogeneous dataset case using $\Nlb \bone$ with $\Nlb = \min\{\bN\}$. The previous discussion constitutes the core of the proof of the following proposition, which formalizes the resulting a-priori probabilistic bound.

\begin{prop} \label{prop:apriori-joint-bound-diag}
	Let $\bN$ be any given multi-index in $\bbN^m$ and $\beta \in (0,1)$. Under Assumptions~\ref{ass:property}, \ref{ass:non-degeneracy} and~\ref{assum:complexity_bounded}, $\zsN = \cM(\bcD)$ satisfies
	\begin{equation} \label{eq:apriori-joint-bound-diag}
		\bbPbN\{ V(\zsN) \le \epsubap(\ktot) \} \ge 1 - \beta,
	\end{equation}
	where
	\begin{subequations} \label{eq:apriori-eps-and-tktot-bound-diag}
	\begin{align}
		& \epsubap(\ktot)
		= \min\{ m (1 - \sqrt[m]{\tktot}), 1 \}, \label{eq:apriori-eps-bound-diag} \\
		& \tktot
		=
		\begin{cases}
			\text{unique zero of $\psi_{\ktot,\Nlb,\Nlb}$} &\ktot < \Nlb \\ 
			0 &\text{otherwise}
		\end{cases},
		\label{eq:psi_single_zero_ub} \\
        & \psi_{\ktot,\Nlb,\Nlb}(t) = 1-\frac{\beta}{\Nlb}\sum_{j = 1}^{\Nlb - \ktot} \frac{\binom{\Nlb - j}{\ktot}}{\binom{\Nlb}{\ktot}} t^{-j}, \label{eq:psi_single_def}
	\end{align}
	\end{subequations}
	and $\Nlb = \min\{\bN\}$.\footnote{Note that $\psi_{\ktot,\Nlb,\Nlb}(t)$ is the function in~\eqref{eq:psi_def} when $\bN$, $\bH$ and $\bk$ are scalars and $\bH = \bN = \Nlb$, while $\bk = \ktot$.} \hqed
\end{prop}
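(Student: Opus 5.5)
The plan is to derive the statement from Proposition~\ref{prop:joint-bound-diag}: first remove the dependence on the observed complexity, then reduce to the homogeneous case, and finally compute the worst case over complexities.

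\textbf{Step 1 (removing $\bsstar$).} By Proposition~\ref{prop:joint-bound-diag}, $\bbPbN\{V(\zsN)\le \epsub(\bsstar)\}\ge 1-\beta$ with $\epsub(\bk)=\min\{m(1-\sqrt[m]{\tbub_{\bk}}),1\}$. Under Assumption~\ref{assum:complexity_bounded} we have $|\bsstar|\le\ktot$ almost surely. Hence it suffices to show
\[
\epsub(\bk)\le\epsubap(\ktot) \qquad \text{for all } \bk\le\bN \text{ with } |\bk|\le\ktot .
\]
Since $t\mapsto m(1-\sqrt[m]{t})$ is decreasing, this is equivalent to the scalar inequality $\tbub_{\bk}\ge \tktot$, where $\tbub_{\bk}$ is computed with $\bH=\bN$. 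The case $\ktot\ge\Nlb$ is trivial, because then $\tktot=0$ and $\epsubap=1$. So we assume $\ktot<\Nlb$, which forces $\bk<\bN$.

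\textbf{Step 2 (comparison of the $\psi$ functions).} With $\bH=\bN$ we have $\cJ_{\bk}\setminus\{0\}=\{-(\min\{\bN-\bk\}),\dots,-1\}$ and $|\cJ_{\bzero}|-1=\Nlb$. Writing $t^{-j}$ for $j\ge1$,
\[
\psi_{\bk,\bN,\bN}(t)=1-\frac{\beta}{\Nlb}\sum_{j=1}^{\min\{\bN-\bk\}}\prod_{i=1}^m\frac{\binom{N_i-j}{k_i}}{\binom{N_i}{k_i}}\,t^{-j}.
\]
By Lemmas~\ref{lem:psi-continous-concave-increasing}--\ref{lem:zeros-psi}, $\psi_{\bk,\bN,\bN}$ is increasing on $(0,\hat t)$, negative near $0$, and has $\tbub_{\bk}$ as its unique zero there. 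It therefore suffices to show $\psi_{\bk,\bN,\bN}(\tktot)\le 0$; this gives $\tktot\le\tbub_{\bk}$ after checking $\tktot<\hat t$, and if $\tktot\ge\hat t$ the comparison is handled through the monotonicity and the sign of $\psi$ beyond its zero. Since $\psi_{\ktot,\Nlb,\Nlb}(\tktot)=0$, it is enough to prove termwise that every coefficient of $\psi_{\bk,\bN,\bN}$ dominates the corresponding one of $\psi_{\ktot,\Nlb,\Nlb}$. Concretely, for $1\le j\le \Nlb-\ktot$ we need
\[
\prod_{i=1}^m\frac{\binom{N_i-j}{k_i}}{\binom{N_i}{k_i}}\ \ge\ \frac{\binom{\Nlb-j}{\ktot}}{\binom{\Nlb}{\ktot}},
\]
together with $\min\{\bN-\bk\}\ge \Nlb-\ktot$. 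The range condition holds because $N_i-k_i\ge \Nlb-|\bk|\ge\Nlb-\ktot$. Any extra terms on the left carry nonnegative coefficients, which only make $\psi_{\bk,\bN,\bN}$ smaller, as required.

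\textbf{Step 3 (the coefficient inequality; main obstacle).} Write each ratio as a product,
\[
\frac{\binom{N-j}{k}}{\binom{N}{k}}=\prod_{l=0}^{k-1}\Big(1-\frac{j}{N-l}\Big).
\]
Two facts then suffice. First, this ratio is nondecreasing in $N$, so each factor can be lowered by replacing $N_i$ with $\Nlb$. Second, for fixed $N=\Nlb$, we need
\[
\prod_i g(k_i)\ \ge\ g\Big(\sum_i k_i\Big)\ \ge\ g(\ktot), \qquad g(k)=\prod_{l=0}^{k-1}\Big(1-\frac{j}{N-l}\Big).
\]
The right inequality holds because $g$ is nonincreasing in $k$. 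The left one is superadditivity, $g(a)g(b)\ge g(a+b)$, and it follows from
\[
g(a+b)=g(a)\prod_{l=a}^{a+b-1}\Big(1-\frac{j}{N-l}\Big)\le g(a)g(b),
\]
since $1-\tfrac{j}{N-l}$ decreases in $l$. This is the formal version of the claim that the worst case concentrates all support scenarios in one homogeneous dataset. I expect the main care to go into the boundary handling: making sure $\tktot$ lies in the branch where $\psi_{\bk,\bN,\bN}$ is increasing (otherwise using the sign of $\psi$ past its zero), and handling the degenerate cases $k_i=N_i$ and $\ktot\ge\Nlb$.
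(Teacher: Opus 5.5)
Your proposal is correct and follows essentially the same route as the paper. You reduce to $\tbub_{\bk}\ge\tktot$ via Proposition~\ref{prop:joint-bound-diag} and establish $\psi_{\bk,\bN,\bN}\le\psi_{\ktot,\Nlb,\Nlb}$ for $t>0$ by lowering each $N_i$ to $\Nlb$, concentrating all support scenarios in one dataset, and replacing $|\bk|$ by $\ktot$, which is the content of Lemmas~\ref{lem:psiN_le_psiNlb} and~\ref{lem:psi_best_worst_supp_distr}; your superadditivity argument $g(a)g(b)\ge g(a+b)$ is just a cleaner version of the paper's $\kappa_\ell$-counting rearrangement. Your worry about $\tktot\ge\hat t$ is moot: for $\bH=\bN$, $\psi_{\bk,\bN,\bN}$ is non-decreasing on all of $(0,\infty)$ and positive on $[\hat t,\infty)$, so $\psi_{\bk,\bN,\bN}(\tktot)\le 0$ directly forces $\tktot\le\tbub_{\bk}$.
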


Proposition~\ref{prop:apriori-joint-bound-diag} introduces a certain level of conservativeness when the datasets are non-homogeneous, the support scenarios are not $\ktot$, or even when they are $\ktot$ but they do not all belong to the same  dataset for a single criterion. The degree of conservativeness introduced depends on how far $\bN$ is from $\Nlb \bone$ (which can be assessed a-priori) and how far $\bsstar$ is from having $\ktot$ support scenarios belonging to a single dataset (which cannot be assessed a-priori).

\begin{rem}[dataset sizing] \label{rem:apriori_usage}
We make explicit the usage of Proposition~\ref{prop:apriori-joint-bound-diag} to size the datasets  for the various criteria to ensure with confidence $1-\beta$ that the joint risk is no bigger than a user chosen level $\eps \in (0,1)$. For given $m$ and $\ktot$, in view of \eqref{eq:apriori-joint-bound-diag}, it is enough to take $\bN = \Nlb \cdot \bone$ and select $\Nlb$ so that $\epsubap(\ktot) \le \eps$. This amounts to find $\Nlb$, with $\Nlb > \ktot$, such that $m (1 - \sqrt[m]{\tktot}) \le \eps$, or equivalently $\tktot \ge (1 - \frac{\eps}{m})^m$, where $\tktot$ is the unique zero in $(0,1)$ of $\psi_{\ktot,\Nlb,\Nlb}$ (see \eqref{eq:psi_single_def}). A suitable value of $\Nlb$ can always be found, because for  $\Nlb$ large enough we can make $\tktot$ as close as $1$ as desired.\footnote{To see this note that $\psi_{\ktot,\Nlb,\Nlb} \le 1 - \beta \Nlb^{-1} {\Nlb \choose \ktot}^{-1} t^{\ktot - \Nlb}$, since $\psi_{\ktot,\Nlb,\Nlb}$ consists of the right-hand side with the addition of negative terms. This gives $1 \ge \tktot \ge \big(\beta \Nlb^{-1} {\Nlb \choose \ktot}^{-1} \big)^{\frac{1}{\Nlb -\ktot}}$, being this latter the solution to $1 - \beta \Nlb^{-1} {\Nlb \choose \ktot}^{-1} t^{\ktot - \Nlb} = 0$. Taking the logarithm, we have that $0 \ge \ln(\tktot) \ge \ln\big(\beta \Nlb^{-1} {\Nlb \choose \ktot}^{-1} \big) / (\Nlb -\ktot)$ and the last term tends to $0$ as $\Nlb \to \infty$. This shows that $\tktot \to 1$ as $\Nlb \to \infty$.} Since $\tktot$ can efficiently computed, then $\Nlb$ can be easily obtained as well, e.g., by trial and error.
\hqed
\end{rem}

\begin{figure*}[ht]
	\begin{subfigure}{0.47\linewidth}
		\includegraphics[width=\linewidth]{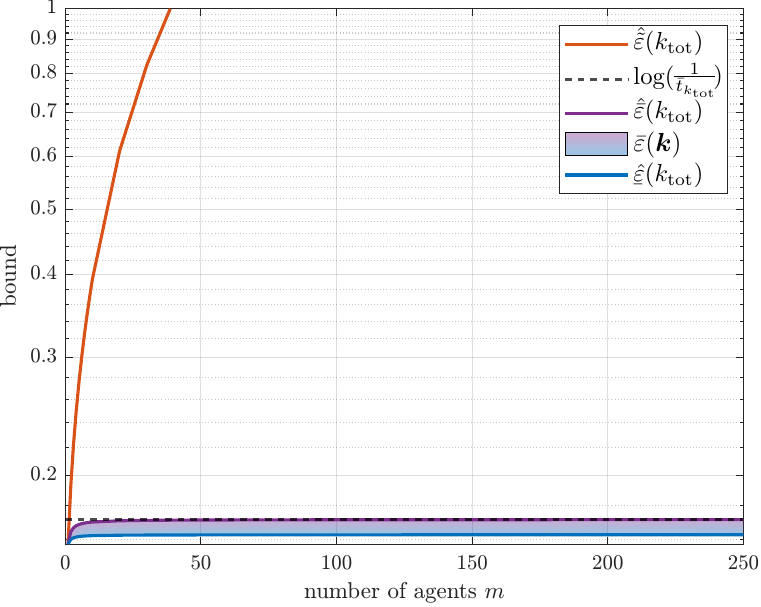}
		\caption{A-priori bounds comparison}
		\label{fig:comparison-all}
	\end{subfigure}
	\hfill
	\begin{subfigure}{0.486\linewidth}
		\includegraphics[width=\linewidth]{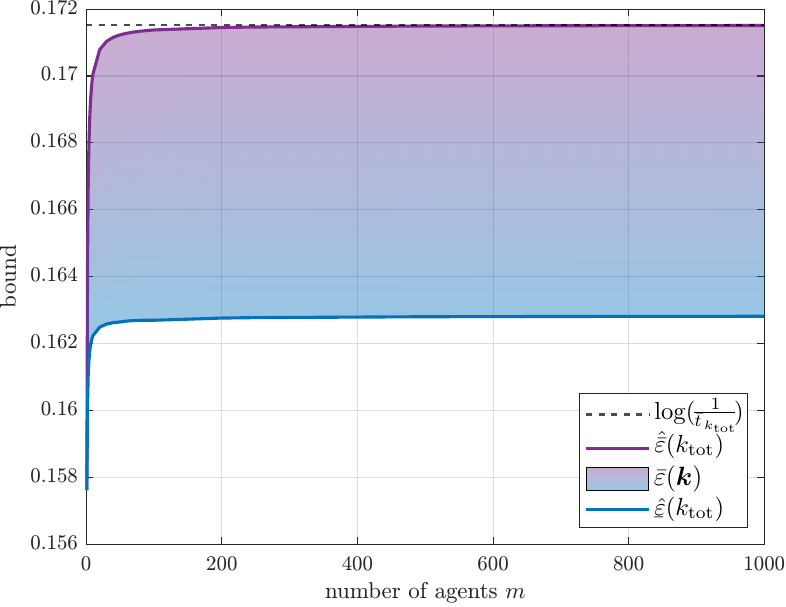}
		\caption{Proposed bounds on a wider range of $m$ }
		\label{fig:comparison-zoom}
	\end{subfigure}
	\caption{Comparison between the different a-priori joint risk bounds discussed in Section~\ref{sec:scalability_apriori_bounds}, as a function of $m$.}
	\label{fig:comparison}
\end{figure*}

\subsection{Scalability of the a-priori bounds as $m$ grows} \label{sec:scalability_apriori_bounds}
In many practical cases, we may have a large number $m$ of appropriateness criteria. It is therefore interesting to study how well the developed risk bounds scale with $m$. Given their practical relevance, we limit the scalability analysis to the a-priori bounds developed in Section~\ref{sec:apriori_joint_bound}. Therefore, we assume that the complexity is bounded by $\ktot$ and, moreover, that $\ktot$ remains the same irrespective of $m$. This is e.g. the case of convex optimization when each criterion is associated with a new type of constraint, but the decision variables and their dimension are fixed irrespective of the criteria considered (in multi-agent optimization, this corresponds to a consensus problem with a common decision vector). For simplicity, we consider a setup in which $N_1 = \Nlb$ for some $\Nlb \ge \ktot$ and $N_i \ge \Nlb$ $\forall i \in \bbN$, so that $\min_{i \in \{1,\dots,m\}} N_i = \Nlb$ remains constant as $m$ varies. Moreover, we also assume that $N_i \le \Nub$ $\forall i \in \bbN$. This corresponds to the common situation in which for each criterion one can secure at least a minimum amount of scenarios (i.e., data), but at the same time there is not an unlimited abundance of them and $N_i$ cannot grow unbounded as $i$ increases. 

From~\eqref{eq:apriori_independent_eps}, it is easy to see  that in the present setup  the a-priori bound $\epstubap(\ktot)$ obtained leveraging the independent assessment of individual risks scales linearly with $m$. Indeed, no matter how $\ktot$ is distributed among the $k_i$, we always have $\epstub_i(k_i) \ge \epstub_i(0) > c$, where $c$ is a strictly positive constant.\footnote{Provably, see e.g. \cite{CamGa2023}, $\epstub_i(0)$ is always positive and converging to $0$ as $N_i \to \infty$; $\epstub_i(0) > c > 0$ follows from condition $N_i \leq \Nub$.} Hence, the summation in~\eqref{eq:apriori_independent_eps} will keep increasing as $m$ increases, faster than $c \cdot m$ and eventually exceeding $1$, which renders the joint risk bound in~\eqref{eq:apriori-joint-independent-bound} useless. This is in line with the results in~\cite[Fig.~2]{falsone_ScenariobasedApproachMultiagent_2020} where a linear increase of the bound as a function of $m$ is reported.

If we now consider the bound $\epsubap(\ktot)$ in Proposition~\ref{prop:apriori-joint-bound-diag}, obtained by leveraging the collective assessment of the individual risks, we can first note that the function $\min\{ m (1 - \sqrt[m]{t}), 1 \}$, $t \in [0,1]$, appearing in the right-hand side of~\eqref{eq:apriori-eps-bound-diag} converges point-wise for $m \to \infty$, with the limit providing an upper bound for the function for every $m$. This underlies the following result.
\begin{prop} \label{prop:apriori-uniform-joint-bound-diag}
	Let $\bN$ be any given multi-index in $\bbN^m$ and $\beta \in (0,1)$. Under Assumptions~\ref{ass:property}, \ref{ass:non-degeneracy} and~\ref{assum:complexity_bounded}, $\zsN = \cM(\bcD)$ satisfies
	\begin{equation} \label{eq:apriori-uniform-joint_bound_diag}
		\bbPbN\{ V(\zsN) \le \min\{ \log(\textstyle \frac{1}{\tktot}), 1 \} \} \ge 1 - \beta,
	\end{equation}
	where $\tktot$ is defined as in~\eqref{eq:psi_single_zero_ub} and with the convention that $\min\{ \log(\textstyle \frac{1}{\tktot}), 1 \}  = 1$ when $\tktot = 0$. 
	\hqed
\end{prop}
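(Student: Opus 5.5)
This result follows from Proposition~\ref{prop:apriori-joint-bound-diag} combined with an elementary inequality. Everything reduces to showing that, for every $t\in[0,1]$ and every $m\in\bbN$,
\begin{equation*}
	\min\{ m (1 - \sqrt[m]{t}), 1 \} \le \min\{ \log(\tfrac{1}{t}), 1 \},
\end{equation*}
with the right-hand side read as $1$ when $t=0$. Once this is established, we apply it with $t=\tktot$. On the event $\{V(\zsN)\le\epsubap(\ktot)\}$, which by \eqref{eq:apriori-joint-bound-diag} has $\bbPbN$-probability at least $1-\beta$, we then also have $V(\zsN)\le\min\{\log(1/\tktot),1\}$. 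Monotonicity of probability gives \eqref{eq:apriori-uniform-joint_bound_diag}. Note that $\tktot$ is defined through $\Nlb=\min\{\bN\}$ and $\ktot$ only, so the right-hand side contains no explicit dependence on $m$. This is the sense in which the bound is uniform in $m$.

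For the inequality, the case $t=0$ is immediate, since the left-hand side is at most $1$. For $t\in(0,1]$, write $t=e^{-x}$ with $x=\log(1/t)\ge0$. The claim then becomes $m(1-e^{-x/m})\le x$. This follows from the standard bound $1-e^{-y}\le y$ for $y\ge0$, applied with $y=x/m$; that bound is itself a consequence of the convexity of $e^{-y}$, or of $e^{-y}\ge 1-y$. Finally, taking the minimum with $1$ on both sides preserves the inequality, because $a\le b$ implies $\min\{a,1\}\le\min\{b,1\}$.

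For completeness, I would also note the remark preceding the statement: $m(1-t^{1/m})$ is nondecreasing in $m$ and converges to $\log(1/t)$. Monotonicity can be seen from the concavity of $y\mapsto 1-e^{-xy}$, which makes $(1-e^{-xy})/y$ nonincreasing in $y=1/m$. Convergence follows from l'Hôpital. This shows that the limit function is the tightest $m$-independent majorant of this form. The proof itself needs only the upper bound, though, so I do not expect any real obstacle. The only point that needs care is the degenerate case $\tktot=0$, that is $\ktot\ge\Nlb$, which the stated convention handles.
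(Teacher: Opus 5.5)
Your proposal is correct and follows the same route as the paper, which invokes Proposition~\ref{prop:apriori-joint-bound-diag} and then applies $t^{x}\ge 1+x\log(t)$ with $x=1/m$ and $t=\tktot$; this is exactly your inequality $e^{-y}\ge 1-y$ after the substitution $t=e^{-x}$. Your explicit handling of the case $\tktot=0$ and of the cap at $1$ fills in details the paper leaves implicit.
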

In the setting where $\ktot$ and $\Nlb$ do not depend on $m$, it is clear from \eqref{eq:psi_single_def} that $\psi_{\ktot,\Nlb,\Nlb}(t)$, and hence $\tktot$ in \eqref{eq:psi_single_zero_ub}, are also not affected by the value of $m$. Thus, Proposition~\ref{prop:apriori-uniform-joint-bound-diag} provides a probabilistic bound on the joint risk $V(\zsN)$ that holds uniformly with respect to the number $m$ of appropriateness criteria. Despite being interesting per-se, this also shows that the bound on the joint risk provided by Proposition \ref{prop:apriori-joint-bound-diag} eventually reaches a limit as $m$ grows (in contrast with the linear trend of $\epstubap(\ktot)$). This result is a, not immediate, consequence of Theorem~\ref{thm:region_bound}, and reveals a clear advantage of the collective assessment of individual risks over the independent one.

\begin{rem}
Likewise Proposition \ref{prop:apriori-joint-bound-diag}, also Proposition \ref{prop:apriori-uniform-joint-bound-diag} lends itself to be used to size the datasets to ensure a desired level of the joint risk. The procedure remains the same as in Remark \ref{rem:apriori_usage} with the only difference that condition $\tktot \ge e^{-\eps}$ has to be enforced in place of $\tktot \ge (1 - \frac{\eps}{m})^m$. \hqed
\end{rem}
	
To visually support the discussion on the scalability of the different bounds, a numerical study is in order. We consider a decision problem with a variable number $m$ of appropriateness criteria. We assume that we have homogeneous datasets with $N_i = N = 1000$ samples for each $\cD_i$, $\im$. We select $\beta = 10^{-5}$ and we assume an upper bound on the total number of support scenarios equal to $\ktot = \frac{1}{10} N = 100$. As for the $\epstubap(\ktot)$ bound based on the independent assessment of the individual risks and computed according to~\eqref{eq:apriori_independent_eps}, we set $\beta_i = \frac{\beta}{m}$ and we compute $\epstub_i(k_i)$ based on the zero of the left hand side of~\eqref{eq:dual-standard-scenario-fcn} (cf. Theorem~\ref{thm:one_criterion_bound}) with $H = N$, $\lambda_{N} = 1$, and $\lambda_{h} = -\frac{\beta_i}{N}$ for all $h = 0,\dots,N-1$. As for the $\epsubap(\ktot)$ bound based on the collective assessment of the individual risks and computed according to~\eqref{eq:apriori-eps-and-tktot-bound-diag} (cf. Proposition~\ref{prop:apriori-joint-bound-diag}) we simply set $\Nlb = N$ since we are in a homogeneous set-up. To show the level of conservativeness introduced by considering the worst $\bk$ satisfying $|\bk| \le \ktot$ in the development of Proposition~\ref{prop:apriori-joint-bound-diag}, we also report the value of
\begin{equation} \label{eq:apriori-eps-bound-diag-minbk}
	\epslbap(\ktot) = \min_{ \bk : |\bk| = \ktot} \epsub(\bk),
\end{equation}
which can be viewed as a best-case distribution of the support scenarios across the datasets in $\bcD$, when the total number of support scenarios is kept at $\ktot$.
The value of $\epsub(\bk)$ will thus always lie in the interval $[\epslbap(\ktot),\epsubap(\ktot)]$ when $|\bk| = \ktot$. 

In Figure~\ref{fig:comparison-all} we show how $\epstubap(\ktot)$ (red line), $\epsubap(\ktot)$ (purple line), and $\epslbap(\ktot)$ (blue line), grow as the number of agents $m$ ranges from $1$ to $250$. If $m = 1$, then all bounds have the same value of approximately $0.158$ and coincide with the standard scenario bound for one criterion of appropriateness. However, as $m$ increases, we clearly see that $\epstubap(\ktot)$ grows much faster than $\epsubap(\ktot)$ and $\epslbap(\ktot)$, exceeding 1 for $m \ge 50$. On the contrary $\epsubap(\ktot)$ and $\epslbap(\ktot)$ increase only slightly and quickly settle to a value of approximately $0.172$ and $0.163$, roughly $9\%$ and $3\%$ bigger than that with $m = 1$, respectively. The shaded region in between $\epsubap(\ktot)$ and $\epslbap(\ktot)$ is where $\epsub(\bk)$ must stay when $|\bk| = \ktot$. In the plot, we also report the (uniform over $m$) upper bound given by Proposition~\ref{prop:apriori-uniform-joint-bound-diag} (black dashed line). Figure~\ref{fig:comparison-zoom} provides a zoom to better appreciate the difference between these bounds for values of $m$ up to 1000. Remarkably, $\epsubap(\ktot)$ and $\epslbap(\ktot)$ are very close to each other, showing that little conservatism is added when considering the worst-case over all $\bk$ such that $|\bk| = \ktot$. Moreover, after approximately $m = 50$, $\epsubap(\ktot)$ is very close to the uniform bound provided by Proposition~\ref{prop:apriori-uniform-joint-bound-diag}, meaning that only a little conservatism is introduced in considering the uniform bound for large values of $m$.


\section{Conclusions} \label{sec:conclusions}
This work extends the scenario approach for data-driven decision making to a multi-criteria setting, where appropriateness of the decision to each criterion is imposed using a different dataset. By adopting a multi-criteria notion of complexity, we were able to provide a collective assessment of the individual risks of obtaining a decision that is not appropriate, with a tighter risk certification with respect to using an independent assessment. Based on such a collective assessment, we derived certificates for the joint risk and also provided an a-priori bound in the case in which complexity is known not to exceed a given threshold. If such a threshold keeps constant as the number of criteria grows, we were able to show that the joint risk is uniformly bounded by  a constant, which helps in datasets sizing. 
Admittedly, our derivations require a non-degeneracy assumption. In the standard single-criterion scenario theory, this assumption can be relaxed while still maintaining the upper bound on the risk.  We are then confident that our theory could also be extended to degenerate decision-making problems. However, this requires further investigations and is left for future work.

\bibliographystyle{ieeetr}		
\bibliography{bibliography}

\appendix


\section{Proof of Theorem~\ref{thm:region_bound}} \label{sec:proof-thm-region}

For any $m \in \bbN$, $\bN \in \bbN^m$ and $\bcD$, we are interested in finding a lower bound for the probability that the vector of individual risks $\bV(\zsN)$ belongs to the given region $\cR(\bsstar)$, i.e.,
\begin{equation} \label{eq:region_prob}
	\bbPbN\{ \bV(\zsN) \in \cR(\bsstar) \},
\end{equation}
where $\bsstar \in \bbN_0^m$ is the complexity of the decision $\zsN = \cM(\bcD)$.

We start by noting that for any pair of multi-indices $\bk \in \bbN^m$ and $\bk' \in \bbN^m$, the events $\{\bsstar = \bk \}$ and $\{\bsstar = \bk' \}$ are disjoint if and only if $\bk \neq \bk'$. Moreover, since $\bsstar \le \bN$ by definition of support scenarios, we can rewrite~\eqref{eq:region_prob} as
\begin{equation} \label{eq:proof-thm1-PA-1}
	\sum_{\bk=\bzero}^{\bN} \bbPbN\{ \bV(\zsN) \in \cR(\bk) \land \bsstar=\bk\}.
\end{equation}

Let us define the map
\begin{equation*}
	\supp:
	\bigcup_{\bN\in\bbN^m} \Delta_1^{N_1}\times \cdots \times \Delta_m^{N_m}
	\to
	\bigcup_{\bk \in \bbN_0^m} \Delta_1^{k_1} \times \cdots \times \Delta_m^{k_m}
\end{equation*}
that for each $\bcD$ returns the list of lists of support scenarios for the decision $\zsN = \cM(\bcD)$ in each dataset $\cD_i$, $\im$. Then, we can rewrite~\eqref{eq:proof-thm1-PA-1} as
\begin{align}
	&\sum_{\bk=\bzero}^{\bN} \bbPbN \Big\{ \bigcup_{I_{\bk} \in \cI_{\bk}} \{\bV(\zsN) \in \cR(\bk) \land \supp(\bcD)=I_{\bk}\}\Big\} \nonumber \\
	&= \sum_{\bk=\bzero}^{\bN} \sum_{I_{\bk} \in \cI_{\bk}} \bbPbN \big\{ \bV(\zsN) \in \cR(\bk) \land \supp(\bcD)=I_{\bk} \big\}
\end{align}
where $\cI_{\bk}$ is the set of all lists $I_{\bk} = ( \scen{1}{j_{1,1}} ,\dots, \scen{1}{j_{1,k_1}} \allowbreak,\dots,\scen{m}{j_{m,1}}, \dots, \scen{m}{j_{m,k_m}} )$ obtained by selecting the $j_{i,1}\text{-th},\dots,j_{i,k_i}\text{-th}$ samples from dataset $\cD_i$ in $\bcD$, for all $\im$; the equality holds since the events inside the union forms a partition of the event $\{ \bV(\zsN) \in \cR(\bk) \land \bsstar = \bk\}$.

Owing to the i.i.d. setup -- which is captured by the product structure of $\bbPbN$ -- and under the permutation invariance property of $\cM$ granted by Assumption~\ref{ass:property}\ref{property-permutation invariance},  for a given $\bk$ all events $\{\bV(\zsN)\in\cR(\bk) \land \supp(\bcD) = I_{\bk}\}$ have the same probability regardless of how $I_{\bk}$ is chosen from $\cI_{\bk}$. This gives 
\begin{equation} \label{eq:proof-VinR-PtE}
	\bbPbN\{\bV(\zsN) \in \cR(\bsstar)\} = \sum_{\bk=\bzero}^{\bN} \binom{\bN}{\bk} \bbPbN\{ \tilde{E} \},
\end{equation}
where the binomial coefficient $\binom{\bN}{\bk} = \Pi_{i=1}^m {N_i \choose k_i}$ counts the $I_{\bk}$'s in $\cI_{\bk}$ (that is, the combinations of $k_i$ elements out of the $N_i$ scenarios in each $\cD_i$) and $\tilde{E}$ is the event in which the $k_i$ support scenarios in each dataset $\cD_i$ are the first $k_i$ scenarios in $\cD_i$, i.e.,
\begin{equation*}
	\tilde{E} = \{ \bV(\cM(\bcD)) \in \cR(\bk) \land \supp(\bcD) = \bSk \}
\end{equation*}
with $\bSk = (\Ski{1},\dots,\Ski{m})$ and $\Ski{i} = ( \sceni{1},\dots,\sceni{k_i} )$ for all $\im$. 

Instead of working with $\bbPbN\{\tilde{E}\}$ directly, we first show that $\tilde{E}$ is equal to
\begin{align*}
	E = \big\{ & \bV(\cM(\bSk))\in\cR(\bk) \land\supp(\bSk) = \bSk \\
	&\land \cM(\bSk)\in\cZ_i(\delta) ,~ \delta \in \cD_i \setminus \Ski{i} ,~ \im \big\},
\end{align*}
up to a probability zero set. Then, $\bbPbN\{E\}$ will be evaluated in place of $\bbPbN\{\tilde{E}\}$.

We start by showing that $\tilde{E} \subseteq E$ up to a zero probability set. If $\bcD \in \tilde{E}$, then $\bV(\cM(\bcD)) \in \cR(\bk)$ and $\supp(\bcD) = \bSk$. Under Assumption~\ref{ass:non-degeneracy}, $\cM(\bcD) = \cM(\supp(\bcD))$ up to a zero probability set. Hence, with the exception of the zero-probability realizations of $\bcD$ for which the equality does not occur, the following implications hold true. First, we immediately have
\begin{subequations}
\begin{align}
	&\supp(\bcD) = \bSk \implies \cM(\bcD) = \cM(\bcS_{\bk}), \label{eq:SsuppD_implies_MD=MS} \\
	&\bV(\cM(\bcD)) \in \cR(\bk) \implies \bV(\cM(\bcS_{\bk})) \in \cR(\bk). \label{eq:MDinRk_implies_MSinRk}
\end{align}
\end{subequations}
For the sake of contradiction let us suppose that there exists $i \in \{1,\dots,m\}$ and $\delta \in \cD_i \setminus \Ski{i}$ such that $\cM(\bSk)\not\in\cZ_i(\delta)$. Under Assumption~\ref{ass:property}\ref{property-responsiveness} we would have $\cM(\bcD) \neq \cM(\bSk)$, which contradicts the right-hand side of ~\eqref{eq:SsuppD_implies_MD=MS}. Therefore,
\begin{equation} \label{eq:MD=MS_implies_MSappropriate}
	\text{\eqref{eq:SsuppD_implies_MD=MS}} \implies \cM(\bSk)\in\cZ_i(\delta) ,~ \delta \in \cD_i \setminus \Ski{i} ,~ \im.
\end{equation}
It remains to show that $\supp(\bSk) = \bSk$. Let $\supp(\bSk) = (\tSki{1},\dots,\tSki{m})$. By Definition~\ref{def:support-scenario}, $\tSki{i} \subseteq \Ski{i}$, for all $\im$. Again for the sake of contradiction let us suppose that there exists $i \in \{1,\dots,m\}$ such that $\tSki{i} \subset \Ski{i}$, and let $\delta \in \Ski{i} \setminus \tSki{i}$. Since $\delta$ is not a support scenario for $\bSk$,
\begin{align}
	\cM(\Ski{1},\dots,\Ski{i} \setminus (\delta),\dots,\Ski{m})
	&= \cM(\bSk) \nonumber \\
	&= \cM(\supp(\bcD)) \nonumber \\
	&= \cM(\bcD), \label{eq:MSkmd=MD}
\end{align}
where the second equality is by definition of $\bSk$ and the third equality holds owing to Assumption~\ref{ass:non-degeneracy}. However, since $\Ski{j} \cup (\cD_j \setminus \Ski{j}) = \cD_j$ for all $j \neq i$ and $(\Ski{i} \setminus (\delta)) \cup (\cD_i \setminus \Ski{i}) = \cD_i \setminus (\delta)$, by~\eqref{eq:MD=MS_implies_MSappropriate} and Assumption~\ref{ass:property}\ref{property-stability}, we would have
\begin{align*}
	\cM(&\cD_1,\dots,\cD_i \setminus (\delta),\dots,\cD_m) \\
	&= \cM(\Ski{1},\dots,\Ski{i} \setminus (\delta),\dots,\Ski{m}) = \cM(\bcD),
\end{align*}
the second equality being due to~\eqref{eq:MSkmd=MD}. This shows that $\delta$ would not be a support scenario for $\bcD$, i.e., $\delta \not\in \Ski{i}$, which is a contradiction. Hence, $\supp(\bSk) = \bSk$, which, together with~\eqref{eq:MDinRk_implies_MSinRk} and~\eqref{eq:MD=MS_implies_MSappropriate} shows that $\bcD \in E$ whenever $\cM(\bcD) = \cM(\supp(\bcD))$. Since this latter condition holds true up to a probability zero set by Assumption \ref{ass:non-degeneracy}, it remains proven that $\tilde{E} \subseteq E$ up to a probability zero set.

Now, we prove that $E \subseteq \tilde{E}$ up to a probability zero set. If $\bcD \in E$, then $\bV(\cM(\bSk))\in\cR(\bk)$, $\supp(\bSk) = \bSk$, and $\cM(\bSk)\in\cZ_i(\delta)$ for all $\delta \in \cD_i \setminus \Ski{i}$ and $\im$. From this last condition together with Assumption~\ref{ass:property}\ref{property-stability}, using $\Ski{i} \cup (\cD_i \setminus \Ski{i}) = \cD_i$ for all $\im$, we immediately have
\begin{align}
	\cM(\bSk) = \cM(\Ski{1},\dots,\Ski{m}) &= \cM(\ms) \nonumber \\
	&= \cM(\bcD), \label{eq:MSappropriate_implies_MD=MS}
\end{align}
and, hence,
\begin{equation} \label{eq:MSinRk_implies_MDinRk}
	\bV(\cM(\bcS_{\bk})) \in \cR(\bk) \implies \bV(\cM(\bcD)) \in \cR(\bk).
\end{equation}
If we repeat the same reasoning leading to~\eqref{eq:MSappropriate_implies_MD=MS} but with $\Ski{i} \cup ((\cD_i \setminus \Ski{i}) \setminus (\delta)) = \cD_i \setminus (\delta)$ for some $i \in \{1,\dots,m\}$ and $\delta \in \cD_i \setminus \Ski{i}$, we obtain
\begin{align}
	\cM(\bcD)
	&= \cM(\bSk) \nonumber \\
	&= \cM(\Ski{1},\dots,\Ski{i},\dots,\Ski{m}) \nonumber \\
	&= \cM(\cD_1,\dots,\cD_i \setminus (\delta),\dots,\cD_m), \label{eq:MSappropriate_implies_suppD_subset_Sk}
\end{align}
where the first equality is due to~\eqref{eq:MSappropriate_implies_MD=MS}. This shows that any $\delta \in \cD_i \setminus \Ski{i}$ cannot be a support scenario; hence, $\supp(\bcD) = (\tcD_1,\dots,\tcD_m)$ is such that $\tcD_i \subseteq \Ski{i}$ for all $\im$. For the sake of contradiction let us suppose that there exists $i \in \{1,\dots,m\}$ such that $\tcD_i \subset \Ski{i}$, and take $\delta \in \Ski{i} \setminus \tcD_i$. Since $\delta$ is not a support scenario for $\cM(\cD)$
\begin{equation} \label{eq:delta_not_support}
	\cM(\cD_1,\dots,\cD_i \setminus (\delta),\dots,\cD_m) = \cM(\bcD),
\end{equation}
and, by contraposition of Assumption~\ref{ass:property}\ref{property-responsiveness},
\begin{equation} \label{eq:delta_not_support_implies_appropriateness}
	\cM(\bcD) = \cM(\cD_1,\dots,\cD_i \setminus (\delta),\dots,\cD_m) \in \cZ_i(\delta).
\end{equation}
Consider now a realization of $\bcD$ such that $\cM(\supp(\bcD)) = \cM(\bcD)$, which always occur except for for zero-probability cases according to Assumption~\ref{ass:non-degeneracy}. From~\eqref{eq:delta_not_support_implies_appropriateness} together with Assumption~\ref{ass:property}\ref{property-stability}, using $\tcD_i \cup ((\Ski{i} \setminus \tcD_i) \setminus (\delta)) = \Ski{i} \setminus (\delta)$ for some $i \in \{1,\dots,m\}$ and $\delta \in \Ski{i} \setminus \tcD_i$ and $\tcD_j \cup (\Ski{j} \setminus \tcD_j) = \Ski{j}$ for all $j \neq i$, we immediately have
\begin{align}
	\cM(\bcD) &= \cM(\supp(\bcD)) = \cM(\tcD_1,\dots,\tcD_1) \nonumber \\
	&= \cM(\Ski{1},\dots,\Ski{i} \setminus (\delta),\dots,\Ski{m}) \nonumber \\
	&\neq \cM(\bSk), \label{eq:delta_not_support_implies_contradiction}
\end{align}
where the last inequality holds because $\delta \in \Ski{i}$ and $\supp(\bSk) = \bSk$, i.e., $\delta$ is a support scenario for $\bSk$. However,~\eqref{eq:delta_not_support_implies_contradiction} contradicts~\eqref{eq:MSappropriate_implies_MD=MS}. Hence, when  $\cM(\supp(\bcD)) = \cM(\bcD)$, $\supp(\bcD) = \bSk$, which, together with~\eqref{eq:MSinRk_implies_MDinRk}, shows $\bcD \in \tilde{E}$. Thanks to Assumption~\ref{ass:non-degeneracy}, this proves that $E \subseteq \tilde{E}$ up to a probability zero set. 

Since we have shown both $\tilde{E} \subseteq E$ and $E \subseteq \tilde{E}$, it holds that $\tilde{E} = E$ up to a probability zero set. Therefore, $\bbPbN\{\tilde{E}\} = \bbPbN\{E\}$, and substituting in~\eqref{eq:proof-VinR-PtE} yields
\begin{equation} \label{eq:proof-VinR-PE}
	\bbPbN\{\bV(\zsN) \in \cR(\bsstar)\} = \sum_{\bk=\bzero}^{\bN} \binom{\bN}{\bk} \bbPbN\{E\}.
\end{equation}

If we now rewrite $E$ as $E = E_r \cap E_s \cap E_a$ with
\begin{subequations}
\begin{align}
	&E_r = \{ \bV(\cM(\bSk)) \in \cR(\bk) \} \label{eq:Erisk} \\
	&E_s = \{ \supp(\bSk) = \bSk \}  \label{eq:Esupp} \\
	&E_a = \bigcap_{i = 1}^m \bigcap_{\delta \in \cD_i \setminus \Ski{i}} \underbrace{\{ \cM(\bSk)\in\cZ_i(\delta) \}}_{E_{a,i,\delta}}, \label{eq:Eappr}
\end{align}
\end{subequations}
and noting that $E_r$ and $E_s$ depend on $\bSk$ only, the product structure of  $\bbPbN\{E\}$ enables elaborating $\bbPbN\{E\}$ as follows:
\begin{align*}
	&\bbPbN\{E\} = \int_{\Delta^{\bN}} \ind_E(\bcD) \; \bbPbN\{ \dd\bcD \} \\
	&~~~~= \int_{\Delta^{\bk}} \Bigg( \int_{\Delta^{\bN-\bk}} \ind_{E_a}(\bcD \setminus \bSk) \; \bbP^{|\bN-\bk|}\{ \dd(\bcD \setminus \bSk) \} \Bigg) \\
		&~~~~~~~~~~~~~~~ \cdot \ind_{E_r \cap E_s}(\bSk) \; \bbP^{|\bk|}\{ \dd\bSk \},
\end{align*}
where we used $\Delta^{\balpha} = \Delta_1^{\alpha_1} \times \cdots \times \Delta_m^{\alpha_m}$ for any $\balpha \in \bbN_0^m$ and $\bcD \setminus \bSk = (\cD_1 \setminus \Ski{1}, \dots, \cD_m \setminus \Ski{m})$ as shorthands. In turn, the inner integral can be rewritten as 
\begin{align*}
	& \int_{\Delta^{\bN-\bk}} \ind_{E_a}(\bcD \setminus \bSk) \; \bbP^{|\bN-\bk|}\{ \dd(\bcD \setminus \bSk) \} \\
	& \quad = \prodim \prod_{\delta \in \cD_i \setminus \Ski{i}} \int_{\Delta} \ind_{E_{a,i,\delta}}(\delta) \; \bbP\{ \dd \delta \} \\
	& \quad = \prodim \prod_{\delta \in \cD_i \setminus \Ski{i}} (1-V_i(\cM(\bSk))) \\
	& \quad = \prodim (1-V_i(\cM(\bSk)))^{N_i - k_i} \\
	& \quad = (\bone-\bV(\cM(\bSk)))^{\bN - \bk}
\end{align*}
where we used~\eqref{eq:Eappr} in the first equality and the definition of individual risk (cf.~\eqref{eq:local-risks}) in the second equality. We thus have
\begin{equation} \label{eq:PE_ext_int_only}
	\bbPbN\{E\} = \int_{\Delta^{\bk}} (\bone - \bV(\cM(\bSk)))^{\bN - \bk} \ind_{E_r \cap E_s}(\bSk) \; \bbPN\{ \dd\bSk \}.
\end{equation}
For all $\bk \in \bbN_0^m$ and $\bv\in [0,1]^m$, define the generalized distribution function (see \cite[Section II.3]{shiryaev_Probability1_2016})
\begin{equation*}
	F_{\bk}(\bv) = \int_{\bV(\cM(\bSk)) \le \bv} \ind_{E_s}(\bSk) \; \bbP^{|\bk|}\{\dd\bSk\} .
\end{equation*}
Clearly, functions $\{F_{\bk}\}_{\bk\in\bbN_0^m}$ depend on the decision map $\cM$ and may differ for different decision-making problems, but we here omit the dependence from $\cM$ to ease the notation. Then, by a change of variables, $\bbPbN\{E\}$ in~\eqref{eq:PE_ext_int_only} can be expressed in terms of $F_{\bk}(\bv)$ as
\begin{subequations} \label{eq:pbdFk-eq}
\begin{align}
	\bbPbN\big\{E\big\}
	&= \int_{[0,1]^m} (\bone-\bv)^{\bN-\bk} \ind_{\cR(\bk)}(\bv) \; \dd F_{\bk} \label{eq:pbdFk-2}\\
	&= \int_{\cR(\bk)} (\bone-\bv)^{\bN-\bk} \; \dd F_{\bk}(\bv), \label{eq:pbdFk}
\end{align}
\end{subequations}
where we used~\eqref{eq:Erisk}. Substituting~\eqref{eq:pbdFk} in~\eqref{eq:proof-VinR-PE} yields
\begin{align}
	& \bbPbN\{\bV(\zsN) \in \cR(\bsstar)\} \nonumber \\
	& \quad = \sum_{\bk=\bzero}^{\bN} \binom{\bN}{\bk} \int_{\cR(\bk)} (\bone-\bv)^{\bN-\bk} \; \dd F_{\bk}(\bv), \label{eq:PVdFk}
\end{align}
which provides a characterization of $\bbPbN\{\bV(\zsN) \in \cR(\bsstar)\}$ in terms of the ($\cM$-dependent) family $\{F_{\bk}\}_{\bk\in\bbN_0^m}$ of generalized distribution functions.

Relation~\eqref{eq:PVdFk} must hold irrespective of the specific choices for $\bN$ and $\cR(\bk)$. In particular, it must hold for any multi-index $\bh \in \bbN_0^m$ in place of $\bN$ and for $\cR(\bk) = [0,1]^m$ for all $\bk = \bzero,\dots,\bh$, yielding
\begin{align}
	& \sum_{\bk=\bzero}^{\bh} \binom{\bh}{\bk} \int_{[0,1]^m} (\bone-\bv)^{\bh-\bk} \dd F_{\bk}(\bv) \nonumber \\
	& \quad = \bbP^{|\bh|}\{\bV(\zsN) \in [0,1]^m \} \nonumber \\
	& \quad = 1, \label{eq:moment_cond}
\end{align}
where the second equality is due to $\bV(\zsN)$ being a vector of probabilities. Equation~\eqref{eq:moment_cond} is a simple condition that $\{F_{\bk}\}_{\bk\in\bbN_0^m}$ must satisfy irrespective of $\cM$; it is actually sufficient to derive an informative lower bound for $\bbPbN\{\bV(\zsN) \in \cR(\bsstar)\}$, by minimizing the right hand side of~\eqref{eq:PVdFk} subject to \eqref{eq:moment_cond}. More precisely, it holds that
\begin{equation} \label{eq:P>=eta}
    \bbPbN\{\bV(\zsN) \ge \eta
\end{equation}
where
\begin{subequations}\label{eq:primal-infinite-dimensional}
	\begin{align}
		\eta = \inf_{\{F_{\bk}\}_{\bk\in\bbN_0^m}}
		&\, \sum_{\bk=\bzero}^{\bN} \binom{\bN}{\bk} \int_{\cR(\bk)} (\bone-\bv)^{\bN-\bk} \; \dd F_{\bk}(\bv) \label{eq:primal-infinite-dimensional-cost} \\
		\text{s.t.} \,
		&\sum_{\bk = \bzero}^{\bh} \binom{\bh}{\bk} \int_{[0,1]^m} (\bone-\bv)^{\bh-\bk} \; \dd F_{\bk}(\bv) = 1 \nonumber \\
		&\hspace{1.83cm} \bh \in \bbN_0^m \label{eq:primal-infinite-dimensional-constraint} \\
		&F_{\bk} \in \cF \qquad \bk \in \bbN_0^m,
	\end{align}
\end{subequations}
and $\cF$ denotes the positive cone of generalized distribution functions. In problem~\eqref{eq:primal-infinite-dimensional}, the cost function depends on $\{F_{\bk}\}_{\bk=\bzero}^{\bN}$, while the constraints depends on $\{F_{\bk}\}_{\bk\in\bbN_0^m}$. For some $\bH \ge \bN$, we can then lower bound $\eta$ in~\eqref{eq:primal-infinite-dimensional} with
\begin{subequations} \label{eq:primal-truncated}
	\begin{align}
		\eta_{\bH} = \inf_{\{F_{\bk}\}_{\bk=\bzero}^{\bH}}
		&\, \sum_{\bk=\bzero}^{\bN} \binom{\bN}{\bk} \int_{\cR(\bk)} (\bone-\bv)^{\bN-\bk} \; \dd F_{\bk}(\bv) \label{eq:primal-truncated-cost} \\
		\text{s.t.} \,
		&\sum_{\bk = \bzero}^{\bh} \binom{\bh}{\bk} \int_{[0,1]^m} (\bone-\bv)^{\bh-\bk} \; \dd F_{\bk}(\bv) = 1 \nonumber \\
			&&&\hspace{-12em}\bh = \bzero,\dots,\bH \label{eq:primal-truncated-moment-constraint} \\
		&F_{\bk} \in \cF
			&&\hspace{-12em}\bk = \bzero,\dots,\bH,
	\end{align}
\end{subequations}
because, by considering the functions $\{F_{\bk}\}_{\bk=\bzero}^{\bH}$ only with $\bH \ge \bN$, we have kept the cost function unchanged, while enlarging the feasible set.

We will now lower bound $\eta_{\bH}$ leveraging duality theory. Let $\bF = \{F_{\bk}\}_{\bk=\bzero}^{\bH} \in \cF^{|\bH|}$ and consider the collection $\blambda = \{ \lambda_{\bh} \}_{\bh = \bzero}^{\bH}$ of Lagrange multipliers, each one corresponding to a constraint in~\eqref{eq:primal-truncated-moment-constraint}. The dual problem of~\eqref{eq:primal-truncated} is
\begin{equation} \label{eq:dual-truncated}
	\gamma = \sup_{\blambda} \inf_{\bF\in\cF^{|\bH|}} \cL(\bF,\blambda),
\end{equation}
with
\begin{align*}
	& \cL(\bF,\blambda) = \sum_{\bk=\bzero}^{\bN} \binom{\bN}{\bk} \int_{\cR(\bk)} (\bone-\bv)^{\bN-\bk} \; \dd F_{\bk}(\bv) \\ 
	& \quad \quad + \sum_{\bh = \bzero}^{\bH} \lambda_{\bh} \Big( 1 - \sum_{\bk = \bzero}^{\bh} \binom{\bh}{\bk} \int_{[0,1]^m} (\bone-\bv)^{\bh-\bk} \; \dd F_{\bk}(\bv) \Big) \nonumber
\end{align*}
being the Lagrangian. By weak duality, it holds that $\eta_{\bH} \ge \gamma$.

Moving the binomial coefficient inside the integral, the first term in the Lagrangian $\cL(\bF,\blambda)$ can be rewritten as
\begin{equation*}
	\sum_{\bk=\bzero}^{\bH} \int_{[0,1]^m} \binom{\bN}{\bk} (\bone-\bv)^{\bN-\bk} \ind_{\cR(\bk)}(\bv) \ind_{\cN}(\bk) \dd F_{\bk},
\end{equation*}
where $\cN = \{ \balpha \in \bbN_0^m : \balpha \le \bN \}$, and the sum is extended up to $\bH$ because the indicator function $\ind_{\cN}(\bk)$ sets to zero all terms in the sum for which $\bk \not\le \bN$. As for the second term in $\cL(\bF,\blambda)$, by expanding the product, exchanging the order of the summations, and bringing the inner sum, the binomial coefficients, and dual variables into the integral, we obtain
\begin{equation*}
	\sum_{\bh = \bzero}^{\bH} \lambda_{\bh} - \sum_{\bk = \bzero}^{\bH} \int_{[0,1]^m} \sum_{\bh = \bk}^{\bH} \lambda_{\bh} \binom{\bh}{\bk} (\bone-\bv)^{\bh-\bk} \; \dd F_{\bk}(\bv).
\end{equation*}
Given the reformulations of the two terms, the Lagrangian can be rewritten as
\begin{align*}
	\cL(\bF,\blambda) = \sum_{\bh = \bzero}^{\bH} \lambda_{\bh}& \\
	+\sum_{\bk=\bzero}^{\bH} \int_{[0,1]^m} \Bigg[ &\binom{\bN}{\bk} (\bone-\bv)^{\bN-\bk} \ind_{\cR(\bk)}(\bv) \ind_{\cN}(\bk) \\
	&- \sum_{\bh = \bk}^{\bH} \lambda_{\bh} \binom{\bh}{\bk} (\bone-\bv)^{\bh-\bk} \Bigg] \; \dd F_{\bk}.
\end{align*}
Whenever for some $\bk$ the term in square brackets is negative for some $\bar{\bv} \in [0,1]^m$, the inner minimization in~\eqref{eq:dual-truncated} brings $\cL(\bF,\blambda)$ to $-\infty$ by considering $F_{\bk}(\bv)$ that has an arbitrarily large mass concentrated in $\bar{\bv}$. This translates into the following constraint for the outer maximization in~\eqref{eq:dual-truncated}
\begin{align}
	& \sum_{\bh = \bk}^{\bH} \lambda_{\bh} \binom{\bh}{\bk} (\bone-\bv)^{\bh-\bk} \nonumber \\
	& \quad \le \binom{\bN}{\bk} (\bone-\bv)^{\bN-\bk} \ind_{\cR(\bk)}(\bv) \ind_{\cN}(\bk), \label{eq:dual_con}
\end{align}
which must be enforced for all $\bk = \bzero,\dots,\bH$ and all $\bv \in [0,1]^m$. Under this additional constraint, the inner minimization of $\cL(\bF,\blambda)$ forces the integral term to zero, i.e.,
\begin{equation*}
	\inf_{\stackrel{\bF\in\cF^{|\bH|}}{\text{s.t.~\eqref{eq:dual_con}}}} \cL(\bF,\blambda) = \sum_{\bh = \bzero}^{\bH} \lambda_{\bh},
\end{equation*}
and the dual in~\eqref{eq:dual-truncated} becomes
\begin{subequations} \label{eq:dual-truncated-2}
\begin{align}
	\gamma = \sup_{\{\lambda_{\bh}\}_{\bh=\bzero}^{\bH}}\,
	&\sum_{\bh=0}^{\bH} \lambda_{\bh} \label{eq:dual-truncated-2-cost} \\
	\mathrm{s.t.} \,
		&\sum_{\bh=\bk}^{\bH}\lambda_{\bh} \binom{\bh}{\bk}(\bone-\bv)^{ \bh -\bk} \nonumber \\
		&\le \binom{\bN}{\bk} (\bone-\bv)^{\bN-\bk} \ind_{\cR(\bk)}(\bv) \ind_{\cN}(\bk) \nonumber \\
		&\hspace{10mm} \bk = \bzero,\dots,\bH, \; \bv \in [0,1]^m. \label{eq:dual-truncated-2-constraint}
\end{align}
\end{subequations}

We now show that~\eqref{eq:dual-lambda} is a tightened version of~\eqref{eq:dual-truncated-2} so that $\gamma$ in~\eqref{eq:dual-truncated-2-cost} is lower bounded by $\gamma^\star$ in~\eqref{eq:dual-lambda-cost}. Comparing~\eqref{eq:dual-lambda-cost} and~\eqref{eq:dual-truncated-2-cost} we clearly see that the two problems have the same decision variables and cost function. Then, it suffices to show that any feasible point of~\eqref{eq:dual-lambda} satisfies~\eqref{eq:dual-truncated-2-constraint} and is thus also feasible for~\eqref{eq:dual-truncated-2}. If that is the case, then \eqref{eq:dual-lambda} has a feasibility domain no larger than that of~\eqref{eq:dual-truncated-2}, which implies the sought relation between the optimal values.

Consider the inequality in~\eqref{eq:dual-lambda-constraint}, which is imposed for $\bk = \bzero,\dots,\bN$ and $\bv \in [0,1]^m$ such that $(\bone-\bv)^{\bN-\bk} > 0$. In this range of values, by multiplying both sides of~\eqref{eq:dual-lambda-constraint} by $\binom{\bN}{\bk} (\bone-\bv)^{\bN-\bk} > 0$, we obtain
\begin{equation*} 
	\sum_{\bh=\bk}^{\bH} \lambda_{\bh} \binom{\bh}{\bk} (\bone-\bv)^{\bh-\bk} \le \binom{\bN}{\bk} (\bone-\bv)^{\bN-\bk} \ind_{\cR(\bk)}(\bv),
\end{equation*}
which is the same inequality in~\eqref{eq:dual-truncated-2-constraint} (note that $\ind_{\cN}(\bk) = 1$ when $\bk \le \bN$). Thus, \eqref{eq:dual-lambda-constraint} implies \eqref{eq:dual-truncated-2-constraint} for $\bk = \bzero,\dots,\bN$  and $\bv \in [0,1]^m$ such that $(\bone-\bv)^{\bN-\bk} > 0$. For the same range of $\bk$, but for $\bv \in [0,1]^m$ such that $(\bone-\bv)^{\bN-\bk} = 0$, \eqref{eq:dual-truncated-2-constraint} is instead implied by~\eqref{eq:dual-lambda-lcond}. As a matter of fact, by~\eqref{eq:dual-lambda-lcond}, we have $\lambda_{\bh} \le 0$ for all $\bh \neq \bN$, yielding
\begin{equation*}
	\sum_{\substack{\bh = \bk \\ \bh \neq \bN}}^{\bH} \lambda_{\bh} \binom{\bh}{\bk} (\bone-\bv)^{\bh-\bk} \le 0.
\end{equation*}
When $(\bone-\bv)^{\bN-\bk} = 0$, this inequality can be in turn rewritten as
\begin{align*}
    & \lambda_{\bN} \binom{\bN}{\bk} (\bone-\bv)^{\bN-\bk} + \sum_{\substack{\bh = \bk \\ \bh \neq \bN}}^{\bH} \lambda_{\bh} \binom{\bh}{\bk} (\bone-\bv)^{\bh-\bk}  \\
    & \quad \le \binom{\bN}{\bk} (\bone-\bv)^{\bN-\bk} \ind_{\cR(\bk)}(\bv),
\end{align*}
which corresponds to \eqref{eq:dual-truncated-2-constraint} for $\bk = \bzero,\dots,\bN$. Thus, \eqref{eq:dual-lambda-constraint} and \eqref{eq:dual-lambda-lcond} together imply \eqref{eq:dual-truncated-2-constraint} for $\bk = \bzero,\dots,\bN$ and for all $\bv \in [0,1]^m$. Finally, for all the remaining $\bk = \bzero,\ldots,\bH$ such that $\bk \not\le \bN$, it holds that $\bh \neq \bN$ for all $\bh=\bk,\ldots,\bN$, and~\eqref{eq:dual-lambda-lcond} implies  
\begin{equation*}
	\sum_{\bh=\bk}^{\bH} \lambda_{\bh} \binom{\bh}{\bk} (\bone-\bv)^{\bh-\bk} \le 0,
\end{equation*}
for all $\bv \in [0,1]^m$, which is the inequality in~\eqref{eq:dual-truncated-2-constraint} when $\bk \not\le \bN$ (note that $\ind_{\cN}(\bk) = 0$ when $\bk \not\le \bN$). In conclusion, we have shown that~\eqref{eq:dual-lambda-constraint} and~\eqref{eq:dual-lambda-lcond} imply~\eqref{eq:dual-truncated-2-constraint}; therefore, any feasible point of~\eqref{eq:dual-lambda} is also feasible for~\eqref{eq:dual-truncated-2}, and $\gamma \ge \gamma^\star$. 

Wrapping up, from~\eqref{eq:primal-infinite-dimensional}, \eqref{eq:primal-truncated}, \eqref{eq:dual-truncated}, and~\eqref{eq:dual-truncated-2}, and the discussions thereabout, we obtained $\eta \ge \eta_{\bH} \ge \gamma \ge \gamma^\star$. Using this relation with~\eqref{eq:P>=eta} gives
\begin{equation*}
	\bbPbN\{\bV(\zsN) \in \cR(\bsstar)\} \ge \gamma^\star,
\end{equation*}
which concludes the proof.
\hqed


\section{Auxiliary results} \label{sec:proof-aux}

In this second appendix we present some auxiliary lemmas regarding properties of the function $\psi_{\bk,\bN,\bH}$ defined in \eqref{eq:psi_def}, which will ease the proofs of the remaining theoretical results. For ease of reference, we recall here the definition of $\psi_{\bk,\bN,\bH}$, rewritten by splitting the summation over $j \in \cJ_{\bk} = \{\max\{\bk-\bN\}, \dots,\min\{\bH-\bN\}\} \subset \bbZ$ into two parts
\begin{align}
	\psi_{\bk,\bN,\bH}(t)
	= 1 &- \frac{\beta}{|\cJ_{\bzero}|-1} \sum_{j \in \cJ_{\bk} \setminus \{0\}} \frac{\binom{\bN+j\bone}{\bk}}{\binom{\bN}{\bk}} t^j \nonumber \\
	= 1 &- \frac{\beta}{|\cJ_{\bzero}|-1} \sum_{j = \max\{ \bk-\bN \} }^{-1} \frac{\binom{\bN+j\bone}{\bk}}{\binom{\bN}{\bk}} t^j \nonumber \\
		&- \frac{\beta}{|\cJ_{\bzero}|-1} \sum_{j = 1}^{ \min\{\bH-\bN\} } \frac{\binom{\bN+j\bone}{\bk}}{\binom{\bN}{\bk}} t^j \nonumber \\
	= 1 &- \frac{\beta}{|\cJ_{\bzero}|-1} \sum_{j = 1}^{ \min\{\bN-\bk\} } \frac{\binom{\bN-j\bone}{\bk}}{\binom{\bN}{\bk}} t^{-j} \nonumber \\
		&- \frac{\beta}{|\cJ_{\bzero}|-1} \sum_{j = 1}^{ \min\{\bH-\bN\} } \frac{\binom{\bN+j\bone}{\bk}}{\binom{\bN}{\bk}} t^j, \label{eq:psi_two_sums}
\end{align}
where the third equality arises from changing the index of the first summation. It is understood that $\sum_{j = 1}^{ \min\{\bN-\bk\} }$ is taken to be $0$ (i.e., absent) when $\min\{\bN-\bk\} = 0$, and similarly for $\sum_{j = 1}^{ \min\{\bH-\bN\} }$ when $\min\{\bH-\bN\}=0$.

We also restate here some useful identities regarding ratios of binomial coefficients for ease of reference.
\begin{align}
	\frac{\binom{\bN-j\bone}{\bk}}{\binom{\bN}{\bk}}
	&= \frac{\prodim \frac{(N_i-j)!}{k_i!(N_i-j-k_i)!}}{\prodim \frac{N_i!}{k_i!(N_i-k_i)!}}
		= \prodim \frac{(N_i-j)! (N_i-k_i)!}{(N_i-j-k_i)!N_i!} \nonumber \\
	&=\prodim \frac{(N_i-j) \cdots (N_i-j-k_i+1)}{N_i \cdots (N_i-k_i+1)} \nonumber \\
	&=\prodim \prod_{\ell = 0}^{k_i-1} \frac{N_i-j-\ell}{N_i-\ell} \label{eq:bin_ratio_-j_j} \\
	&=\prodim \frac{(N_i-k_i) \cdots (N_i-k_i-j+1)}{N_i \cdots (N_i-j+1)} \nonumber \\
	&=\prodim \prod_{\ell = 0}^{j-1} \frac{N_i-k_i-\ell}{N_i-\ell}, \label{eq:bin_ratio_-j_ki}
\end{align}
where the third equality is obtained from the right-hand side of the second by simplifying the common terms first in $\frac{(N_i-j)!}{(N_i-j-k_i)!}$ and then in $\frac{(N_i-k_i)!}{N_i!}$; the fifth equality is obtained analogously, by simplifying first $\frac{(N_i-k_i)!}{(N_i-j-k_i)!}$ and then $\frac{(N_i-j)!}{N_i!}$. Similarly,
\begin{align}
	\frac{\binom{\bN+j\bone}{\bk}}{\binom{\bN}{\bk}}
	&= \frac{\prodim \frac{(N_i+j)!}{k_i!(N_i+j-k_i)!}}{\prodim \frac{N_i!}{k_i!(N_i-k_i)!}}
		= \prodim \frac{(N_i+j)! (N_i-k_i)!}{(N_i+j-k_i)!N_i!} \nonumber \\
	&=\prodim \frac{(N_i+j) \cdots (N_i+1)}{(N_i-k_i+j) \cdots (N_i-k_i+1)} \nonumber \\
	&=\prodim \prod_{\ell = 1}^{j} \frac{N_i+\ell}{N_i-k_i+\ell}, \label{eq:bin_ratio_+j_ki}
\end{align}
where the third equality is obtained from the right-hand side of the second by simplifying the common terms first in $\frac{(N_i+j)!}{(N_i+j-k_i)!}$ and then in $\frac{(N_i-k_i)!}{N_i!}$.

\begin{lem} \label{lem:psi-continous-concave-increasing}
	Let $\beta\in(0,1)$, $\bk \in \bbN_0^m$, and $\bN, \bH \in \bbN^m$ be such that $\bk \le \bN \le \bH$. Then, for $t > 0$ ($t \ge 0$ when $\bk \not< \bN$), it holds that $\psi_{\bk,\bN,\bH}(t)$ is continuous, concave, upper-bounded by 1, non-decreasing when $\bH \not> \bN$, and non-increasing when $\bk \not< \bN$.
	\begin{pf}
	Function $\psi_{\bk,\bN,\bH}(t)$ is clearly continuous and differentiable over $(0,+\infty)$ and the same applies over $[0,+\infty)$ when $\bk \not< \bN$, because $\bk \not< \bN$ implies that $\min\{\bN-\bk\}=0$ and hence there are no negative-power terms in~\eqref{eq:psi_two_sums}. Given that all terms in \eqref{eq:psi_two_sums} are non-positive except for the first, which is equal to $1$, we can readily conclude that $\psi_{\bk,\bN,\bH}(t) \le 1$ for all $t > 0$ ($t \ge 0$ when $\bk \not< \bN$).

	Differentiating the expression in~\eqref{eq:psi_two_sums}, we obtain
	\begin{align*}
		\frac{\dd \psi_{\bk,\bN,\bH}}{\dd t}(t) = & \frac{\beta}{|\cJ_{\bzero}|-1} \sum_{j = 1}^{ \min\{\bN-\bk\} } \frac{\binom{\bN-j\bone}{\bk}}{\binom{\bN}{\bk}} j \; t^{-(j+1)} \\
			&- \frac{\beta}{|\cJ_{\bzero}|-1} \sum_{j = 1}^{ \min\{\bH-\bN\} } \frac{\binom{\bN+j\bone}{\bk}}{\binom{\bN}{\bk}} j \; t^{j-1}.
	\end{align*}
	Since each term resulting from the two sums is a non-increasing function of $t$ when $t > 0$, it holds that $\frac{\dd \psi_{\bk,\bN,\bH}}{\dd t}(t)$ is also non-increasing, and hence that $\psi_{\bk,\bN,\bH}(t)$ is concave over $t > 0$ ($t \ge 0$ when $\bk \not< \bN$).

	If $H_i = N_i$ for some $i \in \{1,\dots,m\}$, then $\min\{\bH-\bN\} = 0$ and the second sum in~\eqref{eq:psi_two_sums} is zero. Each remaining term is a non-decreasing function of $t$, and $\psi_{\bk,\bN,\bH}(t)$ is non-decreasing as well. Similarly, if $k_i = N_i$ for some $i \in \{1,\dots,m\}$, then $\min\{\bN-\bk\} = 0$ and the first sum in~\eqref{eq:psi_two_sums} is zero. This time, each remaining term is non-increasing, which means that also $\psi_{\bk,\bN,\bH}(t)$ is non-increasing.\footnote{When both $\min\{\bH-\bN\} = 0$ and $\min\{\bN-\bk\} = 0$, $\psi_{\bk,\bN,\bH}(t) = 1$ $\forall t$, which is both non-decreasing and non-increasing.}
	\hfill
	\end{pf}
\end{lem}

\begin{lem} \label{lem:zeros-psi}
	Let $\beta\in(0,1)$, $\bk \in \bbN_0^m$, and $\bN, \bH \in \bbN^m$ be such that $\bk \le \bN \le \bH$. Then $\psi_{\bk,\bN,\bH}(t)$ has
	\begin{itemize}
		\item a unique zero $\tbub_{\bk} \in (0,\hat{t})$ if $\bk < \bN$;
		\item no zeros in $[0,\hat{t}]$ if $\bk \not< \bN$;
		\item a unique zero $\tblb_{\bk} \in (\hat{t},+\infty)$ if $\bH > \bN$;
		\item no zeros in $[\hat{t},+\infty)$ if $\bH \not> \bN$;
	\end{itemize}
	where $\hat{t} = (\bone-\frac{\bk}{\bN})^{\bone} \in [0,1]$ and $\psi_{\bk,\bN,\bH}(\hat{t}) > 0$.
	\begin{pf}
	Let us start by evaluating $\psi_{\bk,\bN,\bH}(t)$ in~\eqref{eq:psi_two_sums} at $\hat{t} = (\bone-\frac{\bk}{\bN})^{\bone} = (\frac{\bN - \bk}{\bN})^{\bone} \in [0,1]$.
    For $\bk \not < \bN$, it must be that $k_i = N_i$ for some $i \in \{1,\ldots,m\}$. Thus, it holds that $\min\{\bN-\bk\} = 0$, so that the first sum in \eqref{eq:psi_two_sums} is absent, and $\hat{t} = 0$, making also null the second term. This gives $\psi_{\bk,\bN,\bH}(\hat{t}) = 1$, which is positive. When instead $\bk < \bN$, we have that
	\begin{align}
		\psi_{\bk,\bN,\bH}&(\hat{t}) \nonumber \\
		= 1 &- \frac{\beta}{|\cJ_{\bzero}|-1} \sum_{j = 1}^{ \min\{\bN-\bk\} } \frac{\binom{\bN-j\bone}{\bk}}{\binom{\bN}{\bk}} \Big[ \Big(\frac{\bN - \bk}{\bN}\Big)^{\bone} \Big]^{-j} \nonumber \\
			&- \frac{\beta}{|\cJ_{\bzero}|-1} \sum_{j = 1}^{ \min\{\bH-\bN\} } \frac{\binom{\bN+j\bone}{\bk}}{\binom{\bN}{\bk}} \Big[ \Big(\frac{\bN - \bk}{\bN}\Big)^{\bone} \Big]^j. \label{eq:psi_two_sums_N-k_N}
	\end{align}
	Using~\eqref{eq:bin_ratio_-j_ki} and~\eqref{eq:bin_ratio_+j_ki} we can express each term in the two sums as
	\begin{subequations} \label{eq:bin_ratio_<1}
	\begin{align}
		\frac{\binom{\bN-j\bone}{\bk}}{\binom{\bN}{\bk}} \Big[ \Big(\frac{\bN - \bk}{\bN}\Big)^{\bone} \Big]^{-j}
		&=\prodim \prod_{\ell = 0}^{j-1} \frac{N_i-k_i-\ell}{N_i-\ell} \frac{N_i}{N_i-k_i} \nonumber \\
		&\le 1,
    \end{align}
    and
    \begin{align}
		\frac{\binom{\bN+j\bone}{\bk}}{\binom{\bN}{\bk}} \Big[ \Big(\frac{\bN - \bk}{\bN}\Big)^{\bone} \Big]^j
		&=\prodim \prod_{\ell = 1}^{j} \frac{N_i+\ell}{N_i-k_i+\ell} \frac{N_i-k_i}{N_i} \nonumber \\
		&\le 1,
	\end{align}
	\end{subequations}
	where the inequalities $\le 1$ are, respectively, due to
	\begin{align*}
    (N_i-k_i-\ell) N_i & \le (N_i-\ell)(N_i-k_i) \\
       (N_i+\ell)(N_i-k_i) & \le (N_i-k_i+\ell) N_i
    \end{align*}
	for $\ell,k_i \ge 0$, as it is straightforward to verify. Using the bounds~\eqref{eq:bin_ratio_<1} in~\eqref{eq:psi_two_sums_N-k_N} yields
	\begin{align*}
		\psi_{\bk,\bN,\bH}(\hat{t})
		&\ge 1 - \beta \frac{\min\{\bN-\bk\} + \min\{\bH-\bN\}}{\min\{\bH-\bN\} - \max\{-\bN\}} \\
		&= 1 - \beta \frac{\min\{\bH-\bN\} + \min\{\bN-\bk\}}{\min\{\bH-\bN\} + \min\{\bN\}} \\
		&\ge 1 - \beta > 0.
	\end{align*}
    Hence, $\psi_{\bk,\bN,\bH}(\hat{t}) > 0$ also in this case.

	We now discuss the zeros of $\psi_{\bk,\bN,\bH}(t)$ for the four cases: $\bk < \bN$, $\bk \not< \bN$, $\bH > \bN$, and $\bH \not> \bN$.

	If $\bk < \bN$, then $\min\{\bN-\bk\} \ge 1$, and the first sum in~\eqref{eq:psi_two_sums} is non-empty, yielding
	\begin{equation} \label{eq:psi_lim_0+}
		\lim_{t \to 0^+} \psi_{\bk,\bN,\bH}(t) = -\infty,
	\end{equation}
	irrespective of $\bH \ge \bN$ (the limit of the second sum for $t \to 0^+$ is zero for any $\bH \ge \bN$). By~\eqref{eq:psi_lim_0+} and $\psi_{\bk,\bN,\bH}(\hat{t}) > 0$ together with continuity and concavity on $t > 0$ granted by Lemma~\ref{lem:psi-continous-concave-increasing}, there exists a unique zero $\tbub_{\bk} \in (0,\hat{t})$.

	If $\bk \not< \bN$, then we have already observed that $\hat{t} = 0$ and  $\psi_{\bk,\bN,\bH}(\hat{t}) = 1$. Therefore, there are no zeros in $[0,\hat{t}]$ trivially.
    
	If $\bH > \bN$, then $\min\{\bH-\bN\} \ge 1$ and the second sum in~\eqref{eq:psi_two_sums} is non-empty, yielding
	\begin{equation} \label{eq:psi_lim_inf}
		\lim_{t \to +\infty} \psi_{\bk,\bN,\bH}(t) = -\infty,
	\end{equation}
	irrespective of $\bk \le \bN$ (the limit of the first summation for $t \to +\infty$ is zero for any $\bk \le \bN$). By~\eqref{eq:psi_lim_inf} and $\psi_{\bk,\bN,\bH}(\hat{t}) > 0$ together with continuity and concavity on $t > 0$ granted by Lemma~\ref{lem:psi-continous-concave-increasing}, there exists a unique zero $\tblb_{\bk} \in (\hat{t},+\infty)$.

	If $\bH \not> \bN$, then $\min\{\bH-\bN\} = 0$ and the second sum in \eqref{eq:psi_two_sums} is absent. This gives $\lim_{t \to +\infty} \psi_{\bk,\bN,\bH}(t) = 1$ (irrespective of $\bk \le \bN$). This, together with $\psi_{\bk,\bN,\bH}(\hat{t}) > 0$, and the monotonicity on $t > 0$ granted by Lemma~\ref{lem:psi-continous-concave-increasing}, implies that there are no zeros in $[\hat{t},+\infty)$. 
    
    This concludes the proof.
	\hfill
	\end{pf}
\end{lem}

\begin{lem} \label{lem:psiN_le_psiNlb}
	Let $\beta\in(0,1)$, $\bk \in \bbN_0^m$, and $\bN \in \bbN^m$ be such that $\bk \le \Nlb \bone$ where $\Nlb = \min\{\bN\}$. Then, we have
	\begin{equation} \label{eq:psiN_le_psiNlb}
		\psi_{\bk,\bN,\bN}(t) \le \psi_{\bk,\Nlb\bone,\Nlb\bone}(t),
	\end{equation}
	for all $t > 0$.
	\begin{pf}
	Let us start by considering $\psi_{\bk,\bN,\bH}(t)$ in~\eqref{eq:psi_two_sums} with $\bH = \bN$, for which it holds
	\begin{align*}
		\psi_{\bk,\bN,\bN}(t)
		&= 1 - \frac{\beta}{\Nlb} \sum_{j = 1}^{ \min\{\bN-\bk\} } \frac{\binom{\bN-j\bone}{\bk}}{\binom{\bN}{\bk}} t^{-j} \\
		&= 1 - \frac{\beta}{\Nlb} \sum_{j = 1}^{ \min\{\bN-\bk\} } \prodim \prod_{\ell = 0}^{j-1} \frac{N_i-k_i-\ell}{N_i-\ell} t^{-j} \\
		&\le 1 - \frac{\beta}{\Nlb} \sum_{j = 1}^{ \min\{\bN-\bk\} } \prodim \prod_{\ell = 0}^{j-1} \frac{\Nlb-k_i-\ell}{\Nlb-\ell} t^{-j} \\
		&= 1 - \frac{\beta}{\Nlb} \sum_{j = 1}^{ \min\{\bN-\bk\} } \frac{\binom{\Nlb\bone-j\bone}{\bk}}{\binom{\Nlb\bone}{\bk}} t^{-j},
	\end{align*}
	where the second equality follows from~\eqref{eq:bin_ratio_-j_ki}, the inequality from
	\begin{equation*}
		\frac{N_i-k_i-\ell}{N_i-\ell} \ge \frac{\Nlb-k_i-\ell}{\Nlb-\ell}
	\end{equation*}
	as an effect of $N_i \ge \Nlb$ for all $\im$, and the last equality from~\eqref{eq:bin_ratio_-j_ki} again.

	Since $N_i \ge \Nlb$, we also have $\min\{\bN-\bk\} \ge \min\{\Nlb\bone-\bk\}$ and, hence,
	\begin{align*}
		\psi_{\bk,\bN,\bN}(t) &\le 1 - \frac{\beta}{\Nlb} \sum_{j = 1}^{ \min\{\bN-\bk\} } \frac{\binom{\Nlb\bone-j\bone}{\bk}}{\binom{\Nlb\bone}{\bk}} t^{-j} \\
		&\le 1 - \frac{\beta}{\Nlb} \sum_{j = 1}^{ \min\{\Nlb\bone-\bk\} } \frac{\binom{\Nlb\bone-j\bone}{\bk}}{\binom{\Nlb\bone}{\bk}} t^{-j} \\
		&= \psi_{\bk,\Nlb\bone,\Nlb\bone}(t),
	\end{align*}
	where the second inequality comes from dropping non-positive terms and the last equality is by the very definition of $\psi_{\bk,\bN,\bH}(t)$ in~\eqref{eq:psi_two_sums}.
    
    This concludes the proof.
	\hfill
	\end{pf}
\end{lem}

\begin{lem} \label{lem:psi_best_worst_supp_distr}
	Let $\beta\in(0,1)$, $\bk \in \bbN_0^m$, and $N \in \bbN$ be such that $\knorm := |\bk|\le N$. Then, for any $\bkub,\bklb \in \bbN_0^m$ such that $|\bkub|=|\bklb|=\knorm$ and
	\begin{align*}
		&\bar{k}_i \in \{\knorm,0\} && \im, \\
		&\ubar{k}_i \in \{\lfloor\tfrac{\knorm}{m}\rfloor,\lceil\tfrac{\knorm}{m}\rceil\} &&\im,
	\end{align*}
	it holds that
	\begin{align} \label{eq:psi_best_worst_supp_distr}
		&\psi_{\bklb,N\bone,N\bone}(t) \le \psi_{\bk,N\bone,N\bone}(t) \le \psi_{\bkub,N\bone,N\bone}(t), 
	\end{align}
	for all $t > 0$.
	\begin{pf}
    Note that $\knorm \le N$ implies that $\ubar{\bk} \le N \bone$, $\bk \le N \bone$, and $\bar{\bk} \le N \bone$, so that all functions in \eqref{eq:psi_best_worst_supp_distr} are well-defined. To ease the notation, let $\bk^+ = \max\{\bk\}$, $\bk^- = \min\{\bk\}$, and, accordingly, $\bklb^+ = \lceil\tfrac{\knorm}{m}\rceil$ and $\bklb^- = \lfloor\tfrac{\knorm}{m}\rfloor$.

	We start by showing the upper-bound on $\psi_{\bk,N\bone,N\bone}(t)$. According to~\eqref{eq:psi_two_sums}, for $t>0$ we have that
	\begin{align}
		\psi_{\bk,N\bone,N\bone}(t)
		&= 1 - \frac{\beta}{N} \sum_{j = 1}^{ \min\{N\bone-\bk\} } \frac{\binom{N\bone-j\bone}{\bk}}{\binom{N\bone}{\bk}} t^{-j} \nonumber \\
		&\le 1 - \frac{\beta}{N} \sum_{j = 1}^{ N - \knorm } \frac{\binom{N\bone-j\bone}{\bk}}{\binom{N\bone}{\bk}} t^{-j} \nonumber \\
		&= 1 - \frac{\beta}{N} \sum_{j = 1}^{ N - \knorm } \prodim \prod_{\ell = 0}^{k_i-1} \frac{N-j-\ell}{N-\ell} t^{-j} \label{eq:psi_le_ub_1}
	\end{align}
	where the inequality follows from dropping non-positive terms since $\min\{N\bone-\bk\} \ge N-\knorm$ (note that $k_i \le \knorm$ for all $\im$), while the last equality is due to~\eqref{eq:bin_ratio_-j_j}.

	Focusing on the two-product summand in the last expression, a change of the order of the products and letting  $\kappa_{\ell} = |\{ i \in \{1,\dots,m\}: k_i \ge \ell+1 \}|$ gives
	\begin{align}
		\prodim \prod_{\ell = 0}^{k_i-1} &\frac{N-j-\ell}{N-\ell} \nonumber \\
		&= \prod_{\ell = 0}^{\bk^+-1} \prod_{\substack{i \in \{1,\dots,m\}: \\ k_i \ge \ell+1}} \frac{N-j-\ell}{N-\ell} \nonumber \\
		&= \prod_{\ell = 0}^{\bk^+-1} \bigg( \frac{N-j-\ell}{N-\ell} \bigg)^{\kappa_{\ell}} \nonumber \\        
		&= \prod_{\ell = 0}^{\bk^+-1} \frac{N-j-\ell}{N-\ell} \prod_{\ell = 0}^{\bk^+-1} \bigg( \frac{N-j-\ell}{N-\ell} \bigg)^{\kappa_{\ell}-1} \nonumber \\
        \displaybreak[0]
		&\ge \prod_{\ell = 0}^{\bk^+-1} \frac{N-j-\ell}{N-\ell} \prod_{\ell' = \bk^+}^{|\bk|-1} \frac{N-j-\ell'}{N-\ell'} \nonumber \\        
		&= \prod_{\ell = 0}^{|\bk|-1} \frac{N-j-\ell}{N-\ell}  \nonumber \\
		&= \prod_{\ell = 0}^{\knorm-1} \frac{N-j-\ell}{N-\ell}, \label{eq:prod_i_ell_gt_prod_K}
	\end{align}
	where: the second equality follows from the fact that there are $\kappa_{\ell}$ equal terms in the inner product; the third equality because $\kappa_{\ell} > 1$ for $\ell=0,\ldots,\bk^+-1$; the inequality is due to
	\begin{equation*}
		\frac{N-j-\ell}{N-\ell} \ge \frac{N-j-\ell'}{N-\ell'}
	\end{equation*}
	for any $\ell' \ge \ell$ (and $j \ge 0$) together with the fact that the second products on the two sides of the inequality have the same number of terms since $\sum_{\ell = 0}^{\bk^+-1} (\kappa_\ell-1) = \sumim k_i - \bk^+ = |\bk| - \bk^+$; the fourth equality simply recollects the two products into a single one, and the last equality is due to $|\bk| = \knorm$ by definition.

	Using~\eqref{eq:prod_i_ell_gt_prod_K} in~\eqref{eq:psi_le_ub_1} yields
	\begin{align*}
		\psi_{\bk,N\bone,N\bone}(t)
		&\le 1 - \frac{\beta}{N} \sum_{j = 1}^{ N - \knorm } \prod_{\ell = 0}^{\knorm-1} \frac{N-j-\ell}{N-\ell} t^{-j} \\
		&= 1 - \frac{\beta}{N} \sum_{j = 1}^{ N - \knorm } \frac{\binom{N\bone-j\bone}{\bkub}}{\binom{N\bone}{\bkub}} t^{-j} \\
		&= \psi_{\bkub,N\bone,N\bone}(t),
	\end{align*}
	where the first equality follows from~\eqref{eq:bin_ratio_-j_j} applied for $\bN = N\bone$ and $\bkub$, and the second equality from $N-K = \min\{N\bone - \bkub\}$ and~\eqref{eq:psi_two_sums}. 
    
    This proves the upper bound in~\eqref{eq:psi_best_worst_supp_distr}.

	We now proceed by showing the lower-bound on $\psi_{\bk,N\bone,N\bone}(t)$. According to~\eqref{eq:psi_two_sums}, for $t>0$ we have that
	\begin{align}
		\psi_{\bklb,N\bone,N\bone}(t)
		&= 1 - \frac{\beta}{N} \sum_{j = 1}^{ \min\{N\bone-\bklb\} } \frac{\binom{N\bone-j\bone}{\bklb}}{\binom{N\bone}{\bklb}} t^{-j} \nonumber \\
		&= 1 - \frac{\beta}{N} \sum_{j = 1}^{ N-\bklb^+ } \frac{\binom{N\bone-j\bone}{\bklb}}{\binom{N\bone}{\bklb}} t^{-j} \nonumber \\		
		&\le 1 - \frac{\beta}{N} \sum_{j = 1}^{ N-\bk^+ } \frac{\binom{N\bone-j\bone}{\bklb}}{\binom{N\bone}{\bklb}} t^{-j} \nonumber \\
		&= 1 - \frac{\beta}{N} \sum_{j = 1}^{ N-\bk^+ } \prodim \prod_{\ell = 0}^{\ubar{k}_i-1} \frac{N-j-\ell}{N-\ell} t^{-j} \label{eq:psi_ge_ub_1}
	\end{align}
	where the second equality derives from $\min\{N - \bklb\} = N-\bklb^+$, the inequality from $N-\bklb^+ \ge N - \bk^+$  (indeed, $\bklb^+ = \ceil{\frac{\knorm}{m}} = \ceil{\frac{|\bk|}{m}} \le \ceil{\frac{m \bk^+}{m}} = \bk^+$), and the last equality from~\eqref{eq:bin_ratio_-j_j}.

	Focusing on the two-product summand in the last expression, we have that
	\begingroup
	\allowdisplaybreaks
	\begin{align}
		&\prodim \prod_{\ell = 0}^{\ubar{k}_i-1} \frac{N-j-\ell}{N-\ell} \nonumber \\		
		&= \prod_{\substack{i \in \{1,\dots,m\} \\ k_i > \ubar{k}_i}} \prod_{\ell = 0}^{\ubar{k}_i-1} \frac{N-j-\ell}{N-\ell} \quad \times \nonumber \\
        &~~~~ \prod_{\substack{i \in \{1,\dots,m\} \\ k_i = \ubar{k}_i}} \prod_{\ell = 0}^{k_i-1} \frac{N-j-\ell}{N-\ell} \quad \times \nonumber \\
		&~~~~ \prod_{\substack{i \in \{1,\dots,m\} \\ k_i < \ubar{k}_i}} \prod_{\ell = 0}^{k_i-1} \frac{N-j-\ell}{N-\ell} \prod_{\ell' = k_i}^{\ubar{k}_i-1} \frac{N-j-\ell'}{N-\ell'} \nonumber \\
		&\ge \prod_{\substack{i \in \{1,\dots,m\} \\ k_i > \ubar{k}_i}} \prod_{\ell = 0}^{\ubar{k}_i-1} \frac{N-j-\ell}{N-\ell} \prod_{\ell'' = \ubar{k}_i}^{k_i-1} \frac{N-j-\ell''}{N-\ell''} \quad \times \nonumber \\
        &~~~~ \prod_{\substack{i \in \{1,\dots,m\} \\ k_i = \ubar{k}_i}} \prod_{\ell = 0}^{k_i-1} \frac{N-j-\ell}{N-\ell} \quad \times \nonumber \\
		&~~~~ \prod_{\substack{i \in \{1,\dots,m\} \\ k_i < \ubar{k}_i}} \prod_{\ell = 0}^{k_i-1} \frac{N-j-\ell}{N-\ell} \nonumber \\		
		&= \prodim \prod_{\ell = 0}^{k_i-1} \frac{N-j-\ell}{N-\ell}, \label{eq:prod_i_uki_gt_prod_ki}
	\end{align}
	\endgroup
	where in the first equality the product over $i$ has been split into three cases, and the inequality is due to:
    \begin{itemize}
    \item[a.] $\frac{N-j-\ell'}{N-\ell'} \ge \frac{N-j-\ell''}{N-\ell''}$ for any pair of indices $\ell',\ell''$ appearing in the products, since it holds that $\ell'' \ge \floor{\frac{\knorm}{m}} \ge \ceil{\frac{\knorm}{m}} - 1 \ge \ell'$;
    \item[b.] the fact that the products on the two sides of the inequality have the same total number of terms since $\sumim \ubar{k}_i = |\bklb| = \knorm = |\bk| = \sumim k_i$ by assumption.
    \end{itemize}

	Using~\eqref{eq:prod_i_uki_gt_prod_ki} in~\eqref{eq:psi_ge_ub_1} yields
	\begin{align*}
		\psi_{\bklb,N\bone,N\bone}(t)
		&\le 1 - \frac{\beta}{N} \sum_{j = 1}^{ N-\bk^+ } \prodim \prod_{\ell = 0}^{k_i-1} \frac{N-j-\ell}{N-\ell} t^{-j} \\
		&= 1 - \frac{\beta}{N} \sum_{j = 1}^{ N-\bk^+ } \frac{\binom{N\bone-j\bone}{\bk}}{\binom{N\bone}{\bk}} t^{-j} \\
		&= \psi_{\bk,N\bone,N\bone}(t),
	\end{align*}
	where the second equality derives from~\eqref{eq:bin_ratio_-j_j} and the last equality from $N - \bk^+ = \min\{N\bone-\bk\}$ and~\eqref{eq:psi_two_sums}. 
    
    This proves the lower bound in~\eqref{eq:psi_best_worst_supp_distr} and concludes the proof.
	\hfill	
	\end{pf}
\end{lem}


\section{Proof of Propositions~\ref{prop:region-bound-diag}--\ref{prop:apriori-uniform-joint-bound-diag}} \label{sec:proof-props}

\subsection{Proof of Proposition~\ref{prop:region-bound-diag}} \label{sec:proof-diag-region}

For ease of reference, we recall that, according to~\eqref{eq:region_diag}, throughout this section we consider
\begin{equation} \label{eq:region_diag_proof_prop}
	\cR(\bk) = \{ \bv \in [0,1]^m :~ (\bone-\bv)^{\bone} \in [\tbub_{\bk}, \tblb_{\bk}] \}.
\end{equation}
with $\tbub_{\bk}$ and $\tblb_{\bk}$ defined as in \eqref{eq:psi_zeros}.

Under Assumptions~\ref{ass:property} and~\ref{ass:non-degeneracy}, we can invoke Theorem~\ref{thm:region_bound} to obtain
\begin{equation} \label{eq:region_bound_proof_prop}
	\bbPbN\{ \bV(\zsN) \in \cR(\bsstar) \} \ge \gamma^\star,
\end{equation}
where $\bsstar \in \bbN_0^m$ is the complexity of $\zsN$ and $\gamma^\star$ is the optimal cost of~\eqref{eq:dual-lambda}. Consider choosing $\{\lambda_{\bh}\}_{\bh = \bzero}^{\bH}$ according to the diagonal allocation in~\eqref{eq:lambda-allocation-diagonal}. We show that this choice of $\{\lambda_{\bh}\}_{\bh = \bzero}^{\bH}$ is feasible for~\eqref{eq:dual-lambda}, a fact that later will be used to bound $\gamma^\star$ and conclude the proof.

By~\eqref{eq:lambda-allocation-diagonal} we have that
\begin{align*}
	1 = \lambda_{\bh} & & & \bh = \bN \\
	\left.
	\begin{aligned}
		&& \textstyle -\frac{\beta}{|\cJ_{\bzero}|-1}	\\
		&& 0
	\end{aligned}
	\right\}
	= \lambda_{\bh} &\le 0  &&\bh \neq \bN,
\end{align*}
which shows that constraint~\eqref{eq:dual-lambda-lcond} is trivially satisfied (note that $|\cJ_{\bzero}| \ge 2$ for any $\bN \in \bbN^m$).

Let us now focus on the left hand side of~\eqref{eq:dual-lambda-constraint}, which, evaluated at the $\{\lambda_{\bh}\}_{\bh = \bzero}^{\bH}$ set according to the diagonal allocation in~\eqref{eq:lambda-allocation-diagonal} becomes as follows:
\begin{align}
	\sum_{\bh=\bk}^{\bH} \lambda_{\bh} &\frac{\binom{\bh}{\bk}}{\binom{\bN}{\bk}}(\bone-\bv)^{\bh-\bN} \nonumber \\
	&= \sum_{\substack{\exists j: \; \bh=\bN+j\bone \\ \bk \le \bh \le \bH}} \lambda_{\bN+j\bone} \frac{\binom{\bN+j\bone}{\bk}}{\binom{\bN}{\bk}}(\bone-\bv)^{\bN+j\bone-\bN} \nonumber \\
	&= \sum_{j = \max\{\bk-\bN\}}^{\min\{\bH-\bN\}} \lambda_{\bN+j\bone} \frac{\binom{\bN+j\bone}{\bk}}{\binom{\bN}{\bk}}(\bone-\bv)^{j\bone} \nonumber \\
	&= 1 - \frac{\beta}{|\cJ_{\bzero}|-1} \sum_{j \in \cJ_{\bk}\setminus \{0\}} \frac{\binom{\bN+j\bone}{\bk}}{\binom{\bN}{\bk}} \big((\bone-\bv)^{\bone}\big)^j \nonumber \\
	&= \psi_{\bk,\bN,\bH} \big((\bone-\bv)^{\bone}\big), \label{eq:dual-lambda-constraint-diag}
\end{align}
where $\cJ_{\bk} =\{\max\{\bk-\bN\}, \dots,\min\{\bH-\bN\}\}$; the first equality holds because $\lambda_{\bh} = 0$ for all $\bh \neq \bN + j\bone$, the second equality makes the range of $j$ explicit, in the third equality we simply take the term $j = 0$ outside the sum, and in the last equality is by the definition of $\psi_{\bk,\bN,\bH}(t)$ in~\eqref{eq:psi_def}. Using \eqref{eq:dual-lambda-constraint-diag} and taking into account the structure of $\cR(\bk)$ in \eqref{eq:region_diag_proof_prop}, we have that
\begin{align}
	\eqref{eq:dual-lambda-constraint}
	&\iff	
	\begin{cases}
	    \psi_{\bk,\bN,\bH} \big((\bone-\bv)^{\bone}\big) \le \ind_{[\tbub_{\bk},\tblb_{\bk}]}\big((\bone-\bv)^{\bone}\big) \\
		  \bv \in 
        \begin{cases}
		[0,1)^m, \quad \text{if } \bk < \bN \\
        [0,1]^m, \quad \text{if } \bk \not< \bN    
		\end{cases}
	\end{cases}
	\nonumber \\
	&\iff
	\begin{cases}
		\psi_{\bk,\bN,\bH}(t) \le \ind_{[\tbub_{\bk}, \tblb_{\bk}]}(t) \\
		t \in 
        \begin{cases}
		(0,1], \quad \text{if } \bk < \bN \\
        [0,1], \quad \text{if } \bk \not< \bN    
		\end{cases}
	\end{cases}
	\label{eq:dual-lambda-constraint-psi}
\end{align}
the second double implication being due to the substitution $t = (\bone-\bv)^{\bone}$.

By Lemma~\ref{lem:psi-continous-concave-increasing}, $\psi_{\bk,\bN,\bH}(t)$ is continuous and satisfies $\psi_{\bk,\bN,\bH}(t) \le 1$ for all $t > 0$ (for all $t \ge 0$ when $\bk \not< \bN$). Therefore, checking if~\eqref{eq:dual-lambda-constraint-psi} is satisfied amounts to study the sign of $\psi_{\bk,\bN,\bH}(t)$ on $(0,1]$ (on $[0,1]$ when $\bk \not< \bN)$. 

Start by noticing that, according to Lemma~\ref{lem:zeros-psi}, $\psi_{\bk,\bN,\bH}(\hat{t}) > 0$ with $\hat{t} = (\bone-\frac{\bk}{\bN})^{\bone} \in [0,1]$. We now discuss four cases: $\bk < \bN < \bH$, $\bk \not< \bN < \bH$, $\bk < \bN \not< \bH$, and $\bk \not< \bN \not< \bH$.

If $\bk < \bN < \bH$, then, by Lemma~\ref{lem:zeros-psi}, there is a unique zero in $(0,\hat{t})$ and a unique zero in $(\hat{t},+\infty)$ and, by~\eqref{eq:psi_zeros}, $\tbub_{\bk}$ and $\tblb_{\bk}$ are these two zeros respectively. Since $\psi_{\bk,\bN,\bH}(\hat{t}) > 0$ and $\hat{t} \in [\tbub_{\bk},\tblb_{\bk}]$, then $\psi_{\bk,\bN,\bH}(t) \ge 0$ for all $t \in [\tbub_{\bk},\tblb_{\bk}]$ and $\psi_{\bk,\bN,\bH}(t) < 0$ for all $t \not\in [\tbub_{\bk},\tblb_{\bk}]$, which satisfies~\eqref{eq:dual-lambda-constraint-psi} and, hence,~\eqref{eq:dual-lambda-constraint}.

If $\bk \not< \bN < \bH$, then, by Lemma~\ref{lem:zeros-psi}, there are no zeros in $[0,\hat{t}]$ and a unique zero in $(\hat{t},+\infty)$ and, by~\eqref{eq:psi_zeros}, $\tbub_{\bk} = 0$ and $\tblb_{\bk}$ is the zero. Since $\psi_{\bk,\bN,\bH}(\hat{t}) > 0$ and $\hat{t} \in [0,\tblb_{\bk}]$, then $\psi_{\bk,\bN,\bH}(t) \ge 0$ for all $t \in [0,\tblb_{\bk}] = [\tbub_{\bk},\tblb_{\bk}]$ and $\psi_{\bk,\bN,\bH}(t) < 0$ for all $t \not\in [0,\tblb_{\bk}] = [\tbub_{\bk},\tblb_{\bk}]$, which satisfies~\eqref{eq:dual-lambda-constraint-psi} and, hence,~\eqref{eq:dual-lambda-constraint}.

If $\bk < \bN \not< \bH$, then, by Lemma~\ref{lem:zeros-psi}, there is a unique zero in $(0,\hat{t})$ and no zeros in $[\hat{t},+\infty)$ and, by~\eqref{eq:psi_zeros}, $\tbub_{\bk}$ is the zero and $\tblb_{\bk} = 1$. Since $\psi_{\bk,\bN,\bH}(\hat{t}) > 0$ and $\hat{t} \in [\tbub_{\bk},1]$, then $\psi_{\bk,\bN,\bH}(t) \ge 0$ for all $t \in [\tbub_{\bk},1] = [\tbub_{\bk},\tblb_{\bk}]$ and $\psi_{\bk,\bN,\bH}(t) < 0$ for all $t \not\in [\tbub_{\bk},1] = [\tbub_{\bk},\tblb_{\bk}]$, which satisfies~\eqref{eq:dual-lambda-constraint-psi} and, hence,~\eqref{eq:dual-lambda-constraint}.

Finally, if $\bk \not< \bN \not< \bH$, then, by Lemma~\ref{lem:psi-continous-concave-increasing}, $\psi_{\bk,\bN,\bH}(t)$ must be both non-increasing and non-decreasing; thus, it is constant  and equal to $\psi_{\bk,\bN,\bH}(\hat{t}) > 0$, $\forall t \ge 0$ (indeed, we have $\psi_{\bk,\bN,\bH}(t) = 1$, $\forall t$). By~\eqref{eq:psi_zeros}, $\tbub_{\bk} = 0$ and $\tblb_{\bk} = 1$, which readily satisfies~\eqref{eq:dual-lambda-constraint-psi} and, hence,~\eqref{eq:dual-lambda-constraint}.

We have proven that selecting $\{\lambda_{\bh}\}_{\bh = \bzero}^{\bH}$ according to the diagonal allocation in~\eqref{eq:lambda-allocation-diagonal} is feasible for problem~\eqref{eq:dual-lambda} given the selected $\cR(\bk)$, $\bk = \bzero,\dots,\bN$. From~\eqref{eq:dual-lambda-cost}, we obtain
\begin{align}
	\gamma^\star
	\ge \sum_{\bh=\bzero}^{\bH} \lambda_{\bh}
	&= \sum_{\substack{\exists j: \; \bh=\bN+j\bone \\ \bzero \le \bh \le \bH}} \lambda_{\bN+j\bone}
	= \sum_{j = \max\{-\bN\}}^{\min\{\bH-\bN\}} \lambda_{\bN+j\bone} \nonumber \\
	&= 1 - \sum_{j \in \cJ_{\bzero}\setminus \{0\}} \frac{\beta}{|\cJ_{\bzero}|-1}
	= 1 - \beta \label{eq:gamma_star_lb_proof_prop}
\end{align}
where the rationale behind these equalities closely follows that used to obtain ~\eqref{eq:dual-lambda-constraint-diag}. Using~\eqref{eq:gamma_star_lb_proof_prop} in~\eqref{eq:region_bound_proof_prop} yields~\eqref{eq:region_bound_diag} and concludes the proof. \hqed

\subsection{Proof of Proposition~\ref{prop:joint-bound-diag}} \label{sec:proof-diag-joint}

Set $\bH = \bN$. Under Assumptions~\ref{ass:property} and~\ref{ass:non-degeneracy}, we can readily invoke Proposition~\ref{prop:region-bound-diag} to claim
\begin{equation} \label{eq:region_bound_diag_ub}
	\bbPbN\{ \bV(\zsN) \in \cR(\bsstar) \} \ge 1 - \beta,
\end{equation}
where $\bsstar \in \bbN_0^m$ is the complexity of $\zsN$ and
\begin{align}
	\cR(\bk)
	&= \{ \bv \in [0,1]^m :~ (\bone-\bv)^{\bone} \in [\tbub_{\bk}, 1] \} \nonumber \\
	&= \{ \bv \in [0,1]^m :~ (\bone-\bv)^{\bone} \ge \tbub_{\bk} \}, \label{eq:region_diag_joint_ub}
\end{align}
$\tbub_{\bk}$ being defined according to~\eqref{eq:psi_zero_ub} in Proposition~\ref{prop:region-bound-diag} and $\bk=\bzero,\ldots,N$ (the requirement $(\bone-\bv)^{\bone} \le 1$ can be ignored since it is trivially satisfied by any $\bv \in [0,1]^m$). By~\eqref{eq:sub-additivity} and~\eqref{eq:region_bound_diag_ub}, we immediately have (see also \eqref{eq:joint-coll-bound})
\begin{equation} \label{eq:region_bound_diag_joint_ub}
	\bbPbN \{ V(\zsN) \le \max_{\bv \in \cR(\bsstar)} |\bv| \} \ge 1 - \beta,
\end{equation}
where $\max_{\bv \in \cR(\bsstar)} |\bv|$ has to be explicitly computed for the specific regions $\cR(\bk)$ in \eqref{eq:region_diag_joint_ub}. This requires to solve
\begin{align*}
	\max_{\bv \in [0,1]^m} \; 
	& |\bv| \\
	\text{s.t.} \;
	& (\bone-\bv)^{\bone} \ge \tbub_{\bk},
\end{align*}
or, more explicitly,\footnote{Note that $|\bv| = v_1+\cdots+v_m$ in $[0,1]^m$.}
\begin{subequations} \label{eq:maxR}
\begin{align}
	\max_{\bv \in [0,1]^m} \; \label{eq:maxR_cost}
	& \sum_{i=1}^m v_i \\
	\text{s.t.} \;
	& \prod_{i=1}^m (1-v_i) \ge \tbub_{\bk}, \label{eq:maxR_constraint}
\end{align}
\end{subequations}
for all $\bk = \bzero,\dots,\bN$.

For any $\bv$ feasible for \eqref{eq:maxR}, we have that
\begin{align}
   \sum_{i=1}^m v_i & = m - \sum_{i=1}^m (1-v_i)  \nonumber \\
   & \le m - m  \sqrt[m]{\prod_{i=1}^m (1-v_i)} \nonumber \\
   & \le m - m \sqrt[m]{\tbub_{\bk}} = m \left(1- \sqrt[m]{\tbub_{\bk}}\right) \label{eq:AM-GM-bound}
\end{align}
where the first inequality derives from the celebrated AM-GM inequality $\sqrt[m]{\prod_{i=1}^m (1-v_i)} \le \frac{1}{m} \sum_{i=1}^m (1-v_i)$ (note that $(1-v_i) \ge 0$, $i=1,\ldots,m$, since $\bv \in [0,1]^m$), and the second inequality follows from \eqref{eq:maxR_constraint}. 

The point $\bv^\ast = (1 - \sqrt[m]{\tbub_{\bk}}) \bone$ is feasible because $\prod_{i=1}^m (1-v^\ast_i) = \prod_{i=1}^m \sqrt[m]{\tbub_{\bk}} = \tbub_{\bk}$, and moreover it attains $\sum_{i=1}^m v^\ast_i = m(1 - \sqrt[m]{\tbub_{\bk}})$, which is maximal in view of \eqref{eq:AM-GM-bound}. Hence, for all $\bk = \bzero,\dots,\bN$ we have that $\max_{\bv \in \cR(\bk)} |\bv| = m (1 - \sqrt[m]{\tbub_{\bk}})$. Using this result with~\eqref{eq:region_bound_diag_joint_ub} yields exactly~\eqref{eq:joint-bound_diag} and \eqref{eq:eps-bound-diag}. The bound is capped at $1$ whenever $m (1 - \sqrt[m]{\tbub_{\bk}})$ exceeds $1$ because $V(\zsN) \in [0,1]$ by definition.

This concludes the proof.
\hqed

\subsection{Proof of Proposition~\ref{prop:apriori-joint-bound-diag}} \label{sec:proof-apriori-joint-diag}

Under Assumptions~\ref{ass:property} and~\ref{ass:non-degeneracy}, by Proposition~\ref{prop:joint-bound-diag} we have that
\begin{equation} \label{eq:joint-bound_diag_proof_apriori}
	\bbPbN\{ V(\zsN) \le \epsub(\bsstar) \} \ge 1 - \beta,
\end{equation}
where $\bsstar \in \bbN_0^m$ is the complexity of $\zsN$ and
\begin{equation} \label{eq:eps-bound-diag_proof_apriori}
	\epsub(\bk) = \min\{ m ( 1 - \sqrt[m]{\tbub_{\bk}} ), 1\},
\end{equation}
with $\tbub_{\bk}$ being either the unique zero of $\psi_{\bk,\bN,\bN}$ (when $\bk < \bN$) or 1 (otherwise).

Consider $\bk \in \bbN_o^m$ such that $\knorm := |\bk| \le \ktot$, and start by noticing that, when $\ktot < \Nlb = \min\{\bN\}$, an application of Lemmas~\ref{lem:psiN_le_psiNlb} and~\ref{lem:psi_best_worst_supp_distr}, respectively, gives
\begin{equation*}
	\psi_{\bk,\bN,\bN}(t) \le \psi_{\bk,\Nlb\bone,\Nlb\bone}(t) \le \psi_{\bkub,\Nlb\bone,\Nlb\bone}(t),
\end{equation*}
where $\bkub \in \bbN_0^m$ is as in Lemma~\ref{lem:psi_best_worst_supp_distr}, i.e., it must be $|\bkub| =  \knorm$ and $\bar{k}_i \in \{K,0\}$. Then, for $t > 0$, we have
\begin{align}
	\psi_{\bk,\bN,\bN}(t)
	&\le \psi_{\bkub,\Nlb\bone,\Nlb\bone}(t) \nonumber \\
	&= 1 - \frac{\beta}{\Nlb} \sum_{j = 1}^{ \Nlb - \knorm } \frac{\binom{\Nlb-j}{\knorm}}{\binom{\Nlb}{\knorm}} t^{-j} \nonumber \\
    \displaybreak[0]	
	&\le 1 - \frac{\beta}{\Nlb} \sum_{j = 1}^{ \Nlb - \ktot } \frac{\binom{\Nlb-j}{\knorm}}{\binom{\Nlb}{\knorm}} t^{-j} \nonumber \\
	&= 1 - \frac{\beta}{\Nlb} \sum_{j = 1}^{ \Nlb - \ktot } \prod_{\ell = 0}^{j-1} \frac{\Nlb-\knorm-\ell}{\Nlb-\ell} t^{-j} \nonumber \\
	&\le 1 - \frac{\beta}{\Nlb} \sum_{j = 1}^{ \Nlb - \ktot } \prod_{\ell = 0}^{j-1} \frac{\Nlb-\ktot-\ell}{\Nlb-\ell} t^{-j} \nonumber \\
	&= \psi_{\ktot,\Nlb,\Nlb}(t), \label{eq:psi_ordering_apriori}
\end{align}
where the second inequality follows from dropping non-positive terms, the third equality is due to~\eqref{eq:bin_ratio_-j_ki} with $m = 1$, the third inequality is due to $\Nlb-\knorm-\ell \ge \Nlb-\ktot-\ell$, and the last term is by definition~\eqref{eq:psi_two_sums} and coincides with~\eqref{eq:psi_single_def}.

Still considering $\bk$ such that $|\bk| = \knorm \le \ktot$, we now discuss three cases: $\ktot < \Nlb$ (which implies $\bk < \bN$), $\ktot = \Nlb$ and $\bk < \bN$, and $\bk \not< \bN$ (which means that $\ktot \ge \Nlb$). If $\ktot < \Nlb$ and $\bk < \bN$, then, by Lemmas~\ref{lem:psi-continous-concave-increasing} and~\ref{lem:zeros-psi}, both $\psi_{\bk,\bN,\bN}(t)$ and $\psi_{\ktot,\Nlb,\Nlb}(t)$ are non-decreasing and each admits a unique zero in $[0,1]$. In this case, $\tbub_{\bk}$ and $\tktot$ coincide with the two zeros (cf.~\eqref{eq:psi_zero_ub} with $\bH = \bN$ and~\eqref{eq:psi_single_zero_ub}, respectively). Therefore, by~\eqref{eq:psi_ordering_apriori}, it follows that $\tbub_{\bk} \ge \tktot$.
If $\ktot = \Nlb$ and $\bk < \bN$, then, by Lemmas~\ref{lem:psi-continous-concave-increasing} and~\ref{lem:zeros-psi}, $\psi_{\bk,\bN,\bN}(t)$ is non-decreasing and has a unique zero in $[0,1]$ corresponding to $\tbub_{\bk}$ (cf.~\eqref{eq:psi_zero_ub} with $\bH = \bN$), while $\tktot = 0$ by definition (cf.~\eqref{eq:psi_single_zero_ub}). Trivially, it holds that $\tbub_{\bk} \ge \tktot$.
Finally, if $\bk \not< \bN$ and $\ktot \ge \Nlb$, then $\tbub_{\bk} = \tktot = 0$ by definition (cf.~\eqref{eq:psi_zero_ub} with $\bH = \bN$ and~\eqref{eq:psi_single_zero_ub}), which again gives straightforwardly that $\tbub_{\bk} \ge \tktot$.

Therefore, in all cases, we have $\tbub_{\bk} \ge \tktot$, which implies $m(1-\sqrt[m]{\tbub_{\bk}}) \le m(1-\sqrt[m]{\tktot})$, and shows that $\epsub(\bk) \le \epsubap(\ktot)$ ($\epsubap(\ktot)$ being defined as in~\eqref{eq:apriori-eps-bound-diag}) for every $\bk$ such that $|\bk| \le \ktot$. Since by Assumption \ref{assum:complexity_bounded} it always holds that $|\bsstar| \le \ktot$, we also have that $\epsub(\bsstar) \le \epsubap(\ktot)$, and using this in~\eqref{eq:joint-bound_diag_proof_apriori} immediately gives
\begin{equation*}
	\bbPbN\{ V(\zsN) \le \epsubap(\ktot) \} \ge 1 - \beta.
\end{equation*}
This concludes the proof. \hqed

\subsection{Proof of Proposition~\ref{prop:apriori-uniform-joint-bound-diag}} \label{sec:proof-apriori-joint-uniform}

Under Assumptions~\ref{ass:property}, \ref{ass:non-degeneracy}, and \ref{assum:complexity_bounded}, we can readily invoke Proposition~\ref{prop:apriori-joint-bound-diag} to claim
\begin{equation} \label{eq:apriori-joint-bound-diag_proof}
	\bbPbN\{ V(\zsN) \le \epsubap(\ktot) \} \ge 1 - \beta,
\end{equation}
where $\epsubap(\ktot) = \min\{ m (1 - \sqrt[m]{\tktot}), 1 \}$ and $\tktot$ are defined according to~\eqref{eq:apriori-eps-and-tktot-bound-diag}.

Consider the inequality $t^x \ge 1 + x \log(t)$. Substituting $x = \frac{1}{m} > 1$ and $t = \tktot$, and rearranging some terms, yields
\begin{equation*}
	m (1 - \sqrt[m]{\tktot}) = \frac{1 - t^x}{x} \le -\log(t) = \log(\tfrac{1}{\tktot}),
\end{equation*}
which, together with~\eqref{eq:apriori-joint-bound-diag_proof} proves~\eqref{eq:apriori-uniform-joint_bound_diag}.
\hqed

\section{MATLAB code} \label{appendix:matlab}

The following code implements the bisection method to compute $\tbub_{\bk}$ and $\tblb_{\bk}$ as in \eqref{eq:psi_zeros}.

\begin{tabular}{ll}
   \textsf{\underline{Input} :} & $\bk$, $\bN$, $\bH$, $\beta$ with $\bk \le \bN \le \bH$ \\
   \textsf{\underline{Output} :} & $\tbub_{\bk}$, $\tblb_{\bk}$ 
\end{tabular}

{
\small
\begin{verbatim}
function [tUk,tLk] = find_tk(k,N,H,beta)

k = k(:); N = N(:); H = H(:);   % Force column vectors
m = length(N);                       % Number of criteria
j_neg = 1:min(N-k);               % Indices for terms t^-j
j_pos = 1:min(H-N);               % Indices for terms t^j
Jtot = length(j_neg) ...        % Jtot = |J_0|-1
          + length(j_pos);

% Logarithm of binomial coefficient ratios (bcr)
% for terms t^-j (cf. (B.3))
bcr_neg = sum(cumsum( ...
                   log(N-k-(j_neg-1)) - log(N-(j_neg-1)) ...
              ,2),1);

% Logarithm of binomial coefficient ratios (bcr)
% for terms t^j (cf. (B.4))
bcr_pos = sum(cumsum( ...
                 log(N+j_pos) - log(N-k+j_pos) ...
              ,2),1);

th = prod(1-k./N);             % \hat{t} in (16)
psi = @(t) ...			                 % psi_{k,N,H}(t) in (B.1)
        1 - (beta/Jtot)*( ...
            sum(exp(bcr_neg - j_neg*log(t))) ...
            + sum(exp(bcr_pos + j_pos*log(t))) ...
        );

% Find tUk if k < N according to (16a) via bisection
if all(k < N)
    t1 = 0;
    t2 = th;
    while t2-t1 > 1e-10
        tb = (t1+t2)/2;
        if psi(tb) >= 0
            t2 = tb;
        else
            t1 = tb;
        end
    end
    tUk = t1;
else
    tUk = 0;
end

% Find tLk if H > N according to (16b) via bisection
if (nargout == 2) && all(H > N)
    t1 = th;
    t2 = 1;
    while t2-t1 > 1e-10
        tb = (t1+t2)/2;
        if psi(tb) >= 0
            t1 = tb;
        else
            t2 = tb;
        end
    end
    tLk = t2;
else
    tLk = 1;
end 
\end{verbatim}
}

\end{document}